\newcolumntype{Y}{>{\centering\arraybackslash}X}
\definecolor{lightblue}{HTML}{AFEEEE}
\title{A Geometric View of Data Complexity: Efficient Local Intrinsic Dimension Estimation with Diffusion Models}
\author{%
  Hamidreza Kamkari \quad Brendan Leigh Ross \quad Rasa Hosseinzadeh \\ \textbf{Jesse C. Cresswell} \quad \textbf{Gabriel Loaiza-Ganem} \\
  \texttt{ \{hamid, brendan, rasa, jesse, gabriel\}@layer6.ai} \\
  Layer 6 AI, Toronto, Canada
}
\begin{document}

\maketitle

\begin{abstract}
High-dimensional data commonly lies on low-dimensional submanifolds, and estimating the \emph{local intrinsic dimension} (LID) of a datum -- i.e.\ the dimension of the submanifold it belongs to -- is a longstanding problem. LID can be understood as the number of local factors of variation: the more factors of variation a datum has, the more complex it tends to be. Estimating this quantity has proven useful in contexts ranging from generalization in neural networks to detection of out-of-distribution data, adversarial examples, and AI-generated text. The recent successes of deep generative models present an opportunity to leverage them for LID estimation, but current methods based on generative models produce inaccurate estimates, require more than a single pre-trained model, are computationally intensive, or do not exploit the best available deep generative models: diffusion models (DMs). In this work, we show that the Fokker-Planck equation associated with a DM can provide an LID estimator which addresses the aforementioned deficiencies. Our estimator, called FLIPD, is easy to implement and compatible with all popular DMs. Applying FLIPD to synthetic LID estimation benchmarks, we find that DMs implemented as fully-connected networks are highly effective LID estimators that outperform existing baselines. 
We also apply FLIPD to natural images where the true LID is unknown. Despite being sensitive to the choice of network architecture, FLIPD estimates remain a useful measure of relative complexity; compared to competing estimators, FLIPD exhibits a consistently higher correlation with image PNG compression rate and better aligns with qualitative assessments of complexity. 
Notably, FLIPD is orders of magnitude faster than other LID estimators, and the first to be tractable at the scale of Stable Diffusion.
\end{abstract}

\section{Introduction} \label{sec:introduction}

The manifold hypothesis \citep{bengio2013representation}, which has been empirically verified in contexts ranging from natural images \citep{pope2021intrinsic, brown2022verifying} to calorimeter showers in physics \citep{cresswell2022caloman}, states that high-dimensional data of interest in $\R^D$ often lies on low-dimensional submanifolds of $\R^D$. For a given datum $x \in \R^D$, this hypothesis motivates using its \emph{local intrinsic dimension} (LID), denoted $\LID(x)$, as a natural measure of its complexity. $\LID(x)$ corresponds to the dimension of the data manifold that $x$ belongs to, and can be intuitively understood as the minimal number of variables needed to describe $x$. More complex data needs more variables to be adequately described, as illustrated in \autoref{fig:lid_illustration}. Data manifolds are typically not known explicitly, meaning that LID must be estimated. Here we tackle the following problem: given a dataset along with a query datum $x$, how can we tractably estimate $\LID(x)$?

This is a longstanding problem, with LID estimates being highly useful due to their innate interpretation as a measure of complexity. For example, these estimates can be used to detect outliers \citep{houle2018correlation, anderberg2024dimensionality, kamkari2024geometric}, AI-generated text \citep{tulchinskii2023intrinsic}, and adversarial examples \citep{ma2018characterizing}. Connections between the generalization achieved by a neural network and the LID estimates of its internal representations have also been shown \citep{ansuini2019intrinsic, birdal2021intrinsic, magai2022topology, brown2022relating}. These insights can be leveraged to identify which representations contain maximal semantic content \citep{valeriani2023geometry}, and help explain why LID estimates can be helpful as regularizers \citep{zhu2018ldmnet} and for pruning large models \citep{xue2022searching}. LID estimation is thus not only of mathematical and statistical interest, but can also benefit the empirical performance of deep learning models at numerous tasks.

\begin{figure}[t]\captionsetup{font=footnotesize}
    \centering
    \includegraphics[width=0.99\textwidth, trim = 10 10 0 10, clip]{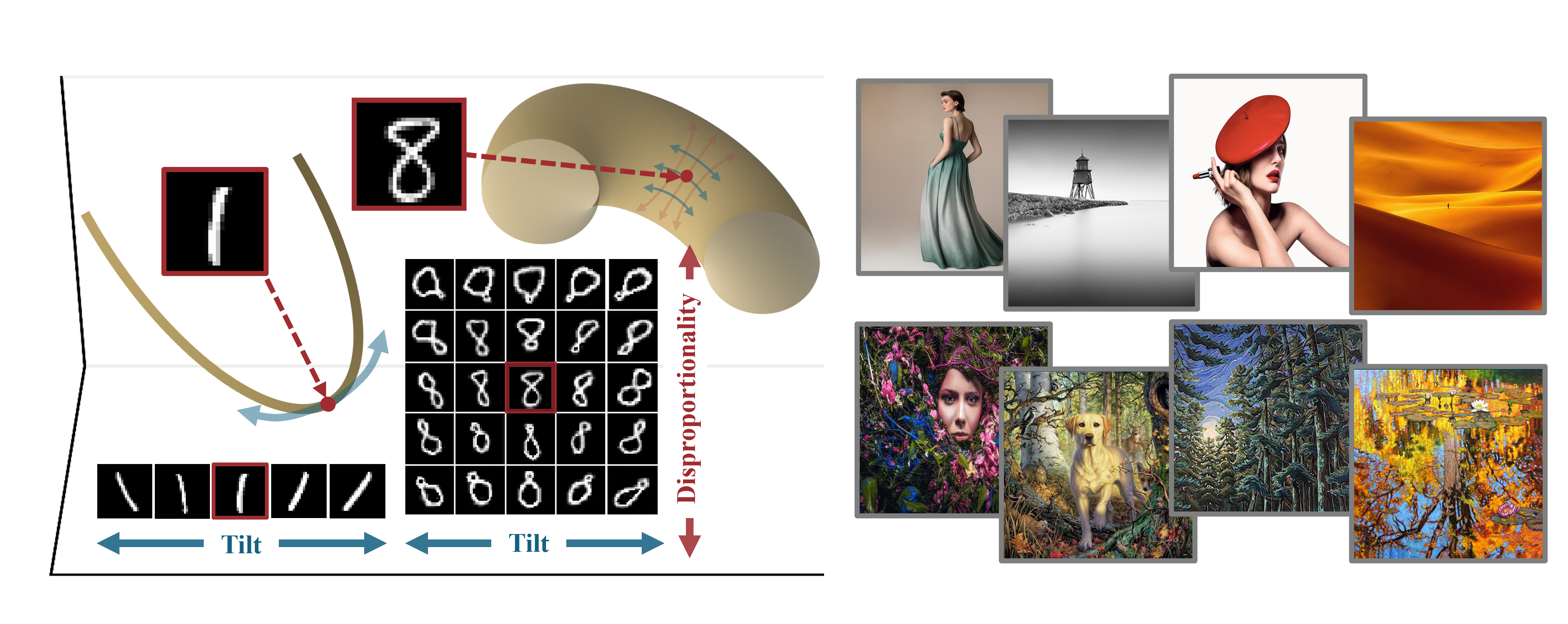}  
    \caption{\textbf{(Left)} A cartoon illustration showing that LID is a natural measure of relative complexity. We depict two manifolds of MNIST digits, corresponding to 1s and 8s, as 1-dimensional and 2-dimensional submanifolds of $\mathbb{R}^3$, respectively. The relatively simpler manifold of 1s exhibits a single factor of variation (``tilt''), whereas 8s have an additional factor of variation (``disproportionality''). \textbf{(Right)} The 4 lowest- and highest-LID datapoints from a subsample of LAION-Aesthetics, as measured by our method, FLIPD, applied to Stable Diffusion v1.5. FLIPD scales efficiently to large models on high-dimensional data, and aligns closely with subjective complexity.}
    \label{fig:lid_illustration}
    % \vspace{-0.5cm}
\end{figure}

Traditional estimators of intrinsic dimension \citep{fukunaga1971algorithm, levina2004maximum, mackay2005comments, cangelosi2007component, johnsson2014low, facco2017estimating, albergante2019estimating, bac2021} typically rely on pairwise distances and nearest neighbours, so computing them is prohibitively expensive for large datasets. Recent work has thus sought to move away from these \emph{model-free} estimators and instead take advantage of deep generative models which learn the distribution of observed data. When this distribution is supported on low-dimensional submanifolds of $\R^D$, successful generative models must implicitly learn the dimensions of the data submanifolds, suggesting they can be used to construct LID estimators. However, existing \emph{model-based} estimators suffer from various drawbacks, including being inaccurate and computationally expensive \citep{stanczuk2022your}, not leveraging the best existing generative models \citep{tempczyk2022lidl, zheng2022learning} (i.e.\ diffusion models \citep{sohl2015deep, ho2020denoising, song2020score}), and requiring training several models \citep{tempczyk2022lidl} or altering the training procedure rather than relying on a pre-trained model \citep{horvat2024gauge}. Importantly, \emph{none} of these methods scale to high-resolution images such as those generated by Stable Diffusion \citep{rombach2022high}.

We address these issues by showing how LID can be efficiently estimated using only a single pre-trained diffusion model (DM) by building on LIDL \citep{tempczyk2022lidl}, a model-based estimator. LIDL operates by convolving data with different levels of Gaussian noise, training a normalizing flow \citep{rezende2015, dinh2016density, durkan2019neural} for each level, and fitting a linear regression using the log standard deviation of the noise as a covariate and the corresponding log density (of the convolution) evaluated at $x$ as the response; the resulting slope is an estimate of $\LID(x)-D$, thanks to a surprising result linking Gaussian convolutions and LID. We first show how to adapt LIDL to DMs in such a way that only a single DM is required (rather than many normalizing flows). Directly applying this insight leads to LIDL estimates that require one DM but several calls to an ordinary differential equation (ODE) solver; we further show how to circumvent this with an alternative ODE that computes all the required log densities in a single solve. We then argue that the slope of the regression in LIDL aims to capture the rate of change of the marginal log probabilities in the diffusion process, which can be evaluated directly thanks to the Fokker-Planck equation. The resulting estimator, which we call FLIPD,\footnote{Pronounced as ``flipped'', the acronym is a rearrangement of ``FP'' from Fokker-Planck and ``LID''.} is highly efficient, and circumvents the need for an ODE solver. 
Notably, FLIPD is \emph{differentiable}; this property opens up exciting avenues for future research as it enables backpropagating through LID estimates.

Our contributions are: $(i)$ showing how DMs can be efficiently combined with LIDL in a way which requires a single call to an ODE solver; $(ii)$ leveraging the Fokker-Planck equation to propose FLIPD, thus improving upon the estimator and circumventing the need for an ODE solver altogether; $(iii)$ motivating FLIPD theoretically;
$(iv)$ introducing an expanded suite of LID estimation benchmark tasks that reveals gaps in prior evaluations, and specifically that other estimators do not remain accurate as the complexity of the manifold increases; 
$(v)$ demonstrating that when using fully-connected architectures for diffusion models, FLIPD outperforms existing baselines -- especially as dimension increases -- while being much more computationally efficient; $(vi)$ showing that when applied to natural images, despite varying across network architecture (i.e., fully-connected network or UNet \citep{ronneberger2015u, von-platen-etal-2022-diffusers}), FLIPD estimates consistently align with other measures of complexity such as PNG compression length, and with qualitative assessments of complexity, highlighting that the LID estimates provided by FLIPD remain valid measures of relative image complexity; 
and $(vii)$ demonstrating that when applied to the latent space of Stable Diffusion, FLIPD can estimate LID for extremely high-resolution images ($\sim 10^6$ pixel dimensions) for the first time.

\section{Background and Related Work} \label{sec:background}
\subsection{Diffusion Models}\label{sec:dms}

\paragraph{Forward and backward processes} Diffusion models admit various formulations \citep{sohl2015deep, ho2020denoising}; here we follow the score-based one \citep{song2020score}. We denote the true data-generating distribution, which DMs aim to learn, as $p(\cdot, 0)$. DMs define the forward (It\^o) stochastic differential equation (SDE),
\begin{equation}\label{eq:forward_sde}
    \rd X_t = f(X_t, t) \rd t + g(t)\rd W_t, \quad X_0 \sim p(\cdot, 0),
\end{equation}
where $f:\mathbb{R}^D \times [0, 1] \rightarrow \mathbb{R}^D$ and $g:[0,1] \rightarrow \mathbb{R}$ are hyperparameters, and $W_t$ denotes a $D$-dimensional Brownian motion. We write the distribution of $X_t$ as $p(\cdot, t)$. The SDE in \autoref{eq:forward_sde} prescribes how to gradually add noise to data, the idea being that $p(\cdot, 1)$ is essentially pure noise. Defining the backward process as $Y_t \coloneqq X_{1-t}$, this process obeys the backward SDE \citep{anderson1982reverse, haussmann1986time},
\begin{equation}\label{eq:backward_sde}
	\rd Y_t = \left[ g^2(1-t) s(Y_t, 1-t) - f(Y_t, 1-t) \right] \rd t + g(1-t) \rd \hat{W}_t,\quad Y_0 \sim p(\cdot, 1),
\end{equation}
where $s(x, t) \coloneqq \nabla \log p(x, t)$ is the unknown (Stein) score function,\footnote{Throughout this paper, we use the symbol $\nabla$ to denote the differentiation operator with respect to the vector-valued input, not the scalar time $t$, i.e.\ $\nabla = \partial / \partial x$.} and $\hat{W}_t$ is another $D$-dimensional Brownian motion. DMs leverage this backward SDE for generative modelling by using a neural network $\hat{s}:\mathbb{R}^D \times (0, 1] \rightarrow \mathbb{R}^D$ to learn the score function with denoising score matching \citep{vincent2011connection}. Once trained, $\hat{s}(x, t) \approx s(x, t)$. To generate samples $\hat{Y}_1$ from the model, we solve an approximation of \autoref{eq:backward_sde}:
\begin{equation}\label{eq:backward_sde_approx}
	\rd \hat{Y}_t = \left[ g^2(1-t) \hat{s}(\hat{Y}_t, 1-t) - f(\hat{Y}_t, 1-t) \right] \rd t + g(1-t) \rd \hat{W}_t,\quad \hat{Y}_0 \sim \hat{p}(\cdot, 1),
\end{equation}
 with $\hat{s}$ replacing the true score and with $\hat{p}(\cdot, 1)$, a Gaussian distribution chosen to approximate $p(\cdot, 1)$ (depending on $f$ and $g$), replacing $p(\cdot, 1)$.

\paragraph{Density Evaluation} DMs can be interpreted as continuous normalizing flows \citep{chen2018neural}, and thus admit density evaluation, meaning that if we denote the distribution of $\hat{Y}_{1-t}$ as $\hat{p}(\cdot, t)$, then $\hat{p}(x, t_0)$ can be mathematically evaluated for any given $x \in \mathbb{R}^D$ and $t_0 \in (0,1]$. More specifically, this is achieved thanks to the (forward) ordinary differential equation (ODE) associated with the DM:
\begin{equation}\label{eq:forward_ode}
    \rd \hat{x}_t = \left(f(\hat{x}_t, t) - \dfrac{1}{2}g^2(t)\hat{s}(\hat{x}_t, t) \right) \rd t,\quad \hat{x}_{t_0}=x.
\end{equation}
Solving this ODE from time $t_0$ to time $1$ produces the trajectory $(\hat{x}_t)_{t \in [t_0,1]}$, which can then be used for density evaluation through the continuous change-of-variables formula:
\begin{equation}\label{eq:change_of_variable}
    \log \hat{p}(x, t_0) = \log \hat{p}(\hat{x}_{1}, 1) + \int_{t_0}^{1} \tr(\nabla v(\hat{x}_t, t)) \rd t,
\end{equation}
where $\hat{p}(\cdot, 1)$ can be evaluated since it is a Gaussian, and where $v(x, t) \coloneqq f(x,t) -g^2(t)\hat{s}(x,t)/2$.

\paragraph{Trace estimation} Note that the cost of computing $\nabla v(\hat{x}_t, t)$ for a particular $\hat{x}_t$ amounts to $\Theta(D)$ function evaluations of $\hat{s}$ (since $D$ calls to a Jacobian-vector-product routine are needed \citep{baydin2018automatic}). Although this is not prohibitively expensive for a single $\hat{x}_t$, in order to compute the integral in \autoref{eq:change_of_variable} in practice, $(\hat{x}_t)_{t \in [t_0,1]}$ must be discretized into a trajectory of length $N$. If we denote by $F$ the cost of evaluating $v(\hat{x}_t, t)$ -- or equivalently, $\hat{s}(\hat{x}_t, t)$ -- deterministic density evaluation is $\Theta(NDF)$, which is computationally prohibitive. The Hutchinson trace estimator \citep{hutchinson1989stochastic} -- which states that for $M \in \mathbb{R}^{D \times D}$, $\tr(M) = \mathbb{E}_\varepsilon[\varepsilon^\top M \varepsilon]$, where $\varepsilon \in \mathbb{R}^D$ has mean $0$ and covariance $I_D$ 
-- is thus commonly used for stochastic density estimation; approximating the expectation with $k$ samples from $\varepsilon$ results in a cost of $\Theta(NkF)$, which is much faster than deterministic density evaluation when $k \ll D$.

\subsection{Local Intrinsic Dimension and How to Estimate It}\label{sec:lid}

\paragraph{LID} Various definitions of intrinsic dimension exist \citep{hurewicz1941dimension, falconer2007fractal, lee2012smooth, burago2022course}. Here we follow the standard one from geometry: a $d$-dimensional manifold is a set which is locally homeomorphic to $\mathbb{R}^d$. For a given disjoint union of manifolds and a point $x$ in this union, the \emph{local intrinsic dimension} of $x$ is the dimension of the submanifold it belongs to. Note that LID is not an intrinsic property of the point $x$, but rather a property of $x$ with respect to the manifold that contains it. Intuitively, $\LID(x)$ corresponds to the number of factors of variation present in the manifold containing $x$, and it is thus a natural measure of the relative complexity of $x$, as illustrated in \autoref{fig:lid_illustration}.

\paragraph{Estimating LID} The natural interpretation of LID as a measure of complexity makes estimating it from observed data a relevant problem. Here, the formal setup is that $p(\cdot, 0)$ is supported on a disjoint union of manifolds \citep{brown2022verifying}, and we assume access to a dataset sampled from it. Then, for a given $x$ in the support of $p(\cdot, 0)$, we want to use the dataset to provide an estimate of $\LID(x)$. Traditional estimators \citep{fukunaga1971algorithm, levina2004maximum, mackay2005comments, cangelosi2007component, johnsson2014low, facco2017estimating, albergante2019estimating, bac2021} rely on the nearest neighbours of $x$ in the dataset, or related quantities, and typically have poor scaling in dataset size. 
Generative models are an intuitive alternative to these methods; because they are trained to learn $p(\cdot, 0)$, when they succeed they must encode information about the support of $p(\cdot, 0)$, including the corresponding manifold dimensions. However, extracting this information from a trained generative model is not trivial. For example, \citet{zheng2022learning} showed that the number of active dimensions in the approximate posterior of variational autoencoders \citep{kingma2014auto, rezende2015} estimates LID, but their approach does not generalize to better generative models.

\paragraph{LIDL} \citet{tempczyk2022lidl} proposed LIDL, a method for LID estimation relying on normalizing flows as tractable density estimators \citep{rezende2015, dinh2016density, durkan2019neural}. LIDL works thanks to a surprising result linking Gaussian convolutions and LID \citep{loaiza2022diagnosing, tempczyk2022lidl, zheng2022learning}. We will denote the convolution of $p(\cdot, 0)$ and Gaussian noise with log standard deviation $\delta$ as $\varrho(\cdot, \delta)$, i.e.\
\begin{equation}\label{eq:lidl_convolution}
    \varrho(x, \delta) \coloneqq \int p(x_0, 0) \mathcal{N}(x - x_0;0, e^{2\delta} I_D) \rd x_0.
\end{equation}
The aforementioned result states that, under mild regularity conditions on $p(\cdot, 0)$, and for a given $x$ in its support, the following holds as $\delta \rightarrow -\infty$:
\begin{equation}\label{eq:lidl_conv}
\log \varrho(x, \delta) = \delta(\LID(x) - D) + \mathcal{O}(1).
\end{equation}
This result then suggests that, for negative enough values of $\delta$ (i.e.\ small enough standard deviations):
\begin{equation}\label{eq:lidl_hope}
\log \varrho(x, \delta) \approx \delta(\LID(x) - D) + c,
\end{equation}
for some constant $c$. If we could evaluate $\log \varrho(x, \delta)$ for various values of $\delta$, this would provide an avenue for estimating $\LID(x)$: set some values $\delta_1, \dots, \delta_m$, fit a linear regression using $\{(\delta_i, \log \varrho(x, \delta_i))\}_{i=1}^m$ with $\delta$ as the covariate and $\log \varrho(x, \delta)$ as the response, and let $\hat{\beta}_x$ be the corresponding slope. It follows that $\hat{\beta}_x$ estimates $\LID(x)-D$, so that $\LID(x) \approx D + \hat{\beta}_x$ is a sensible estimator of local intrinsic dimension.

Since $\varrho(x, \delta)$ is unknown and cannot be evaluated, LIDL requires training $m$ normalizing flows. More specifically, for each $\delta_i$, a normalizing flow is trained on data to which $\mathcal{N}(0, e^{2\delta_i}I_D)$ noise is added. In LIDL, the log densities of the trained models are then used instead of the unknown true log densities $\log \varrho(x, \delta_i)$ when fitting the regression as described above.

Despite using generative models, LIDL has obvious drawbacks. LIDL requires training several models. It also relies on normalizing flows, which are not only empirically outperformed by DMs by a wide margin, but are also known to struggle to learn low-dimensional manifolds \citep{cornish2020relaxing, loaiza2022diagnosing, loaiza2024deep}. On the other hand, DMs do not struggle to learn $p(\cdot, 0)$ even when it is supported on low-dimensional manifolds \citep{pidstrigach2022score, debortoli2022convergence, loaiza2024deep}, further suggesting that LIDL can be improved by leveraging DMs.

\paragraph{Estimating LID with DMs} The only works we are aware of that leverage DMs for LID estimation are those of \citet{stanczuk2022your}, and \citet{horvat2024gauge}. The latter modifies the training procedure of DMs, so we focus on the former since we see compatibility with existing pre-trained models as an important requirement for DM-based LID estimators. \citet{stanczuk2022your} consider \emph{variance-exploding} DMs, where $f=0$. They show that, as $t \searrow 0$, the score function $s(x, t)$ points orthogonally towards the manifold containing $x$, or more formally, it lies in the normal space of this manifold at $x$. They thus propose the following LID estimator, which we refer to as the normal bundle (NB) estimator: first run \autoref{eq:forward_sde} until time $t_0$ starting from $x$, and evaluate $\hat{s}(\cdot, t_0)$ at the resulting value; then repeat this process $K$ times and stack the $K$ resulting $D$-dimensional vectors into a matrix $S(x) \in \mathbb{R}^{D \times K}$. The idea here is that if $t_0$ is small enough and $K$ is large enough, the columns of $S(x)$ span the normal space of the manifold at $x$, suggesting that the rank of this matrix estimates the dimension of this normal space, namely $D - \LID(x)$. Finally, they estimate $\LID(x)$ as:
\begin{equation}
\LID(x) \approx D - \rank S(x).
\end{equation}
Numerically, the rank is computed by performing a singular value decomposition (SVD) of $S(x)$, setting a threshold, and counting the number of singular values exceeding the threshold. Computing $S(x)$ requires $K$ function evaluations, and intuitively $K$ should be large enough to ensure the columns of $S(x)$ span the normal space at $x$; the authors thus propose using $K = 4D$, and recommend always at least ensuring that $K>D$. Computing the NB estimator costs $\Theta( K F+ D^2 K )$, where $F$ again denotes the cost for evaluating $\hat{s}$. 
Thus, although the NB estimator addresses some of the limitations of LIDL, it remains computationally expensive in high dimensions.

\section{Method} \label{sec:method}

Although \citet{tempczyk2022lidl} only used normalizing flows in LIDL, they did point out that these models could be swapped for any other generative model admitting density evaluation. Indeed, one could trivially train $m$ DMs and replace the flows with them. Throughout this section we provide a sequence of progressive improvements to this na\"ive application of LIDL with DMs, culminating with FLIPD. 
We assume access to a pre-trained DM such that $f(x, t) = b(t)x$ for a function $b:[0, 1] \rightarrow \mathbb{R}$. This choice implies that the transition kernel $p_{t|0}$ associated with \autoref{eq:forward_sde} is Gaussian \citep{sarkka2019applied}:
\begin{equation}\label{eq:transition_kernel}
    p_{t\mid 0}(x_t\mid x_0) = \mathcal{N}(x_t; \meanfn(t)x_0, \sigma^2(t)I_D),
\end{equation}
where $\meanfn, \sigma: [0, 1] \rightarrow \mathbb{R}$. We also assume that $b$ and $g$ are such that $\meanfn$ and $\sigma$ are differentiable and such that $\lambda(t) \coloneqq \sigma(t) / \meanfn(t)$ is injective. This setting encompasses all DMs commonly used in practice, including variance-exploding, variance-preserving (of which the widely used DDPMs \citep{ho2020denoising} are a discretized instance), and sub-variance-preserving \citep{song2020score}. In \autoref{app:explicit} we include explicit formulas for $\meanfn(t)$, $\sigma^2(t)$, and $\lambda(t)$ for these particular DMs.

\subsection{LIDL with a Single Diffusion Model}\label{sec:dm_lidl}

As opposed to the several normalizing flows used in LIDL which are individually trained on datasets with different levels of noise added, a single DM already works by convolving data with various noise levels and allows density evaluation of the resulting noisy distributions (\autoref{eq:change_of_variable}). Hence, we make the observation that LIDL can be used with a \emph{single} DM. All we need is to relate $\varrho(\cdot, \delta)$ to the density of the DM, $p(\cdot, t)$. In the case of variance-exploding DMs, $\meanfn(t)=1$, so we can easily use the defining property of the transition kernel in \autoref{eq:transition_kernel} to get
\begin{equation}\label{eq:use_transition_kernel}
    p(x, t) = \int p(x_0, 0) p_{t\mid 0}(x_t\mid x_0) \rd x_0 = \int p(x_0, 0) \mathcal{N}(x_t;x_0, \sigma^2(t)I_D) \rd x_0,
\end{equation}
which equals $\varrho(x, \delta)$ from \autoref{eq:lidl_convolution} when we choose $t=\sigma^{-1}(e^{\delta})$.\footnote{Note that we treat positive and negative superindices differently: e.g.\ $\sigma^{-1}$ denotes the inverse function of $\sigma$, not $1/\sigma$; on the other hand $\sigma^2$ denotes the square of $\sigma$, not $\sigma \circ \sigma$.}
In turn, we can use LIDL with a single variance-exploding DM by evaluating each $\hat{p}\big(x, \sigma^{-1}(e^{\delta_i})\big)$ through \autoref{eq:change_of_variable}. This idea extends beyond variance-exploding DMs; in \aref{app:dm_convolution} we show that for \emph{any} arbitrary DM with transition kernel as in \autoref{eq:transition_kernel}, it holds that
\begin{equation}\label{eq:dm_convolution}
    \log \varrho(x, \delta) = D \log \meanfnstd{\delta} +  \log p\Big(\meanfnstd{\delta}x, t(\delta)\Big),
\end{equation}
where $t(\delta) \coloneqq \lambda^{-1}(e^{\delta})$. 
This equation is relevant because LIDL requires $\log \varrho(\cdot, \delta)$, yet DMs provide $\log p(\cdot, t)$: linking these two quantites as above shows that LIDL can be used with a single DM.

\subsection{A Better Implementation of LIDL with a Single Diffusion Model}\label{sec:ode_lidl}

Using \autoref{eq:dm_convolution} with LIDL still involves computing $\log \hat{p}\big(\meanfnstdtext{\delta_i}x, t(\delta_i)\big)$ through \autoref{eq:change_of_variable} for each $i=1,\dots, m$ before running the regression. Since each of the corresponding ODEs in \autoref{eq:forward_ode} starts at a different time $t_0=t(\delta_i)$ and is evaluated at a different point $\meanfnstdtext{\delta_i}x$, this means that a different ODE solver call would have to be used for each $i$, resulting in a prohibitively expensive procedure. 
To address this, we aim to find an explicit formula for $\partial / \partial \delta \, \log \varrho(x, \delta)$. We do so by leveraging the Fokker-Planck equation associated with \autoref{eq:forward_sde}, which provides an explicit formula for $\partial / \partial t \, p(x, t)$. Using this equation along with the chain rule and \autoref{eq:dm_convolution},
% By leveraging the Fokker-Planck equation associated with \autoref{eq:forward_sde}, 
we show in \aref{app:rate_of_change} that, for DMs with transition kernel as in \autoref{eq:transition_kernel},
\begin{align}\label{eq:rate_of_change}
    \scalebox{0.92}{$\dfrac{\partial}{\partial \delta} \log \varrho(x, \delta) =   \sigma^2\big(t(\delta)\big)\Bigg(\tr\Big(\nabla s\Big(\meanfnstd{\delta}x, t(\delta)\Big)\Big) + \Big\Vert s\Big(\meanfnstd{\delta}x, t(\delta)\Big)\Big\Vert_2^2\Bigg) \eqqcolon  \nu\big(t(\delta); s, x\big)$.}
\end{align}
Then, assuming without loss of generality that $\delta_1 < \dots < \delta_m$, we can use the above equation to define $\log \hat{\varrho}(x, \delta)$ through the ODE
\begin{equation}\label{eq:lidl_ode}
    \rd \log \hat{\varrho}(x, \delta) = \nu\big(t(\delta); \hat{s}, x\big) \rd \delta, \quad \log \hat{\varrho}(x, \delta_1) = 0.
\end{equation}
Solving this ODE from $\delta_1$ to $\delta_m$ produces the trajectory $(\log \hat{\varrho}(x, \delta))_{\delta \in [\delta_1, \delta_m]}$. 
% where the initial condition of the ODE is artificially set to $0$. 
Since $\nu(t(\delta); \hat{s}, x)$ does not depend on $\hat{\varrho}(x, \delta)$, when $\hat{s}=s$ the solution to the ODE above will be off by a constant, i.e., $\log \hat{\varrho}(x, \delta) = \log \varrho(x, \delta) + c_{\text{init}}$ for some $c_{\text{init}}$ that depends on the initial condition of the ODE ($0$ in this case) but not on $\delta$. 
Furthermore, while setting the initial condition to $0$ might at first appear odd, recall that LIDL fits a regression using $\{(\delta_i, \log \hat{\varrho}(x, \delta_i))\}_{i=1}^m$, and thus $c_{\text{init}}$ will be absorbed in the intercept without affecting the slope. In other words, the initial condition is irrelevant, and we can use LIDL with DMs by using a single call to an ODE solver on \autoref{eq:lidl_ode}.

\subsection{FLIPD: An Efficient Fokker-Planck-Based LID Estimator}\label{sec:fp_lidl}

The LIDL estimator with DMs presented in \autoref{sec:ode_lidl} provides a massive speedup over the na\"ive approach of training $m$ DMs, and over the method from \autoref{sec:dm_lidl} requiring $m$ ODE solves. Yet, solving \autoref{eq:lidl_ode} involves computing the trace of the Jacobian of $\hat{s}$ multiple times within an ODE solver, which, as mentioned in \autoref{sec:dms}, remains expensive. In this section we present our LID estimator, FLIPD, which circumvents the need for an ODE solver altogether. Recall that LIDL is based on \autoref{eq:lidl_hope}, which justifies the regression. Differentiating this equation yields that $\partial / \partial \delta \, \log \varrho(x, \delta_0) \approx \LID(x) - D$ for negative enough $\delta_0$, meaning that \autoref{eq:rate_of_change} directly provides the rate of change that the regression in LIDL aims to estimate, from which we get
\begin{equation} \label{eq:method_formula}
    \LID(x) \approx D + \dfrac{\partial}{\partial \delta} \log \varrho(x, \delta_0) = D + \nu\big(t(\delta_0); s, x\big) \approx D + \nu\big(t(\delta_0); \hat{s}, x\big) \eqqcolon \method(x, t_0),
\end{equation}
where $t_0 \coloneqq t(\delta_0)$. 
Computing FLIPD is very cheap since the trace of the Jacobian of $\hat{s}$ has to be evaluated only once when calculating $\nu(t(\delta_0); \hat{s}, x)$. As mentioned in \autoref{sec:dms}, exact evaluation of this trace has a cost of $\Theta(DF)$, but can be reduced to $\Theta(kF)$ when using the stochastic Hutchinson trace estimator with $k \ll D$ samples. 
Notably, FLIPD provides a massive speedup over the NB estimator -- $\Theta(kF)$ vs.\ $\Theta(KF+D^2K)$ -- especially in high dimensions where $K > D \gg k$.
In addition to not requiring an ODE solver, computing FLIPD requires setting only a single hyperparameter, $\delta_0$.
Furthermore, since $\nu(t(\delta);\hat{s}, x)$ depends on $\delta$ only through $t(\delta)$, we can directly set $t_0$ as the hyperparameter rather than $\delta_0$, which avoids the potentially cumbersome computation of $t(\delta_0)=\lambda^{-1}(e^{\delta_0})$: instead of setting a suitably negative $\delta_0$, we set $t_0>0$ sufficiently close to $0$. In \autoref{app:explicit} we include explicit formulas for $\method(x, t_0)$ for common DMs.

Finally, we present a theoretical result further justifying FLIPD. Note that the $\mathcal{O}(1)$ term in \autoref{eq:lidl_conv} need not be constant in $\delta$ as in \autoref{eq:lidl_hope}, even if it is bounded. The more this term deviates from a constant, the more bias we should expect in both LIDL and FLIPD. The following result shows that in an idealized linear setting, FLIPD is unaffected by this problem:

\begin{restatable}[FLIPD Soundness: Linear Case]{theorem}{main}\label{thm:main}
Let $\mathcal{L}$ be an embedded submanifold of $\mathbb{R}^D$ given by a $d$-dimensional affine subspace. If $p(\cdot, 0)$ is supported on $\mathcal{L}$, continuous, and with finite second moments, then for any $x \in \mathcal{L}$ with $p(x, 0)>0$, we have:
\begin{equation}
    \lim_{\delta \rightarrow -\infty} \dfrac{\partial}{\partial \delta} \log \varrho(x, \delta) = d - D.
\end{equation}
\end{restatable}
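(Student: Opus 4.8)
The plan is to compute $\varrho(x,\delta)$ and its derivative explicitly in the affine case, exploiting the fact that the Gaussian convolution factorizes along directions tangent and normal to $\mathcal{L}$. First I would choose coordinates adapted to $\mathcal{L}$: write $\mathbb{R}^D = V \oplus V^\perp$ where $V$ is the $d$-dimensional linear subspace parallel to $\mathcal{L}$, and assume without loss of generality (after translation) that $x = (x_\parallel, 0)$ with $\mathcal{L} = V \times \{0\}$. Since $p(\cdot,0)$ is supported on $\mathcal{L}$, it is a measure of the form $p(x_0,0) = q(x_{0,\parallel})\,\delta_0(x_{0,\perp})$ in the informal sense, where $q$ is a continuous density on $V \cong \mathbb{R}^d$. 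Plugging into \autoref{eq:lidl_convolution} and using that the Gaussian kernel $\mathcal{N}(\cdot;0,e^{2\delta}I_D)$ factors as a product of a kernel on $V$ and one on $V^\perp$, I get
\begin{equation*}
    \varrho(x,\delta) = \mathcal{N}(0;0,e^{2\delta}I_{D-d}) \cdot \left( q * \mathcal{N}(\cdot;0,e^{2\delta}I_d) \right)(x_\parallel) = \frac{1}{(2\pi e^{2\delta})^{(D-d)/2}} \cdot h(x_\parallel,\delta),
\end{equation*}
where $h(x_\parallel,\delta) \coloneqq \left( q * \mathcal{N}(\cdot;0,e^{2\delta}I_d) \right)(x_\parallel)$ is the $d$-dimensional Gaussian-smoothed version of $q$ evaluated at $x_\parallel$.

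Taking logarithms and differentiating in $\delta$ then splits the answer into two pieces:
\begin{equation*}
    \frac{\partial}{\partial\delta}\log\varrho(x,\delta) = -(D-d) + \frac{\partial}{\partial\delta}\log h(x_\parallel,\delta).
\end{equation*}
The first term already accounts for the normal directions and is exactly $d-D$, so the whole theorem reduces to showing that the tangential contribution vanishes in the limit: $\partial_\delta \log h(x_\parallel,\delta) \to 0$ as $\delta\to-\infty$. Intuitively this holds because $q$ is a genuine density on $\mathbb{R}^d$ (no further dimension collapse), so $h(x_\parallel,\delta) \to q(x_\parallel) \in (0,\infty)$ by continuity of $q$ and an approximate-identity argument, using $p(x,0)>0$ to ensure the limit is strictly positive. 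One can either invoke \autoref{eq:lidl_conv} applied to the $d$-dimensional distribution $q$ (whose LID is $d$ everywhere, so its $\mathcal{O}(1)$ term gives $\partial_\delta\log h \to 0$), or argue directly.

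The main obstacle is making the step $\partial_\delta\log h(x_\parallel,\delta)\to 0$ rigorous, since it is not enough that $h\to q(x_\parallel)$; one needs control on the $\delta$-derivative uniformly near the limit. The clean route is to differentiate under the integral sign: writing $h(x_\parallel,\delta) = \mathbb{E}[q(x_\parallel - e^\delta Z)]$ for $Z\sim\mathcal{N}(0,I_d)$ is not directly differentiable without smoothness of $q$, so instead I would use the heat-equation identity $\partial_\delta h = e^{2\delta}\,\Delta_{x_\parallel} h$ valid for $\delta$ finite (the smoothed function is $C^\infty$ in $x_\parallel$ for any $\delta > -\infty$), and then bound $e^{2\delta}\,\Delta_{x_\parallel} h(x_\parallel,\delta)$. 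Using the Gaussian representation of $\Delta h$ and the finite-second-moment hypothesis on $q$ (inherited from $p(\cdot,0)$), one shows $e^{2\delta}|\Delta_{x_\parallel} h(x_\parallel,\delta)|$ is bounded by a quantity involving second moments of $Z$ that stays finite, and in fact $\to 0$, as $\delta\to-\infty$; dividing by $h(x_\parallel,\delta)\to q(x_\parallel)>0$ finishes the argument. I would relegate the measure-theoretic justification of the factorization and the differentiation-under-the-integral to a lemma, and present the coordinate reduction and the two-term split as the conceptual core.
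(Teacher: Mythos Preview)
Your plan is essentially the paper's proof: the same coordinate reduction and Gaussian factorization, the same reduction to $\partial_\delta \log h(x_\parallel,\delta)\to 0$, and your heat-equation identity $\partial_\delta h = e^{2\delta}\Delta_{x_\parallel} h$ expands to precisely the integrand $(-d + e^{-2\delta}\lVert y\rVert^2)\,\mathcal{N}(y;0,e^{2\delta}I_d)$ that the paper obtains by differentiating the kernel in $\delta$ directly, after which the paper finishes with an $\epsilon$-splitting of the integral (near versus far from $x_\parallel$) using continuity of $q$ on the near part and the finite-second-moment hypothesis on the far part. One caution: your first suggested shortcut---invoking \autoref{eq:lidl_conv} on the $d$-dimensional density $q$---does not actually work, since $\log h=\mathcal{O}(1)$ does not imply $\partial_\delta\log h\to 0$; it is your direct route that the paper takes.
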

\begin{proof} See \aref{app:thm}. \end{proof}
We conjecture that our theorem can be extended to non-linear submanifolds since it is a local result and every manifold can be locally linearly approximated by its tangent plane. More specifically, $\varrho(x, \delta)$ becomes ``increasingly local as $\delta \rightarrow -\infty$'' in the sense that its dependence on the values $p(x_0, 0)$ becomes negligible as $\delta \rightarrow -\infty$ when $x_0$ is not in a close enough neighbourhood of $x$; this is because $\mathcal{N}(0, e^{2\delta}I_D)$ concentrates most of its mass around $0$ as $\delta \rightarrow -\infty$ (see \autoref{eq:lidl_convolution}). However, we leave generalizing our result to future work.

\section{Experiments} \label{sec:experiments}
Throughout this section, we use variance-preserving DMs, the most popular variant of DMs. 
We provide a ``dictionary'' to translate between the score-based formulation of FLIPD and DDPMs in \aref{appx:ddpm_formulations}. We hope this will enable practitioners who are less focused on the theoretical aspects to effortlessly apply FLIPD to their pre-trained DMs. 
Our code is available at \url{https://github.com/layer6ai-labs/flipd}; see \aref{app:details} for more experimental details. 

\subsection{Experiments on Synthetic Data}

\begin{wrapfigure}{r}{0.55\textwidth}\captionsetup{font=footnotesize, labelformat=simple, labelsep=colon}
\vspace{-0.4cm}
    \begin{subfigure}{0.27\textwidth}
        \includegraphics[width=\linewidth]{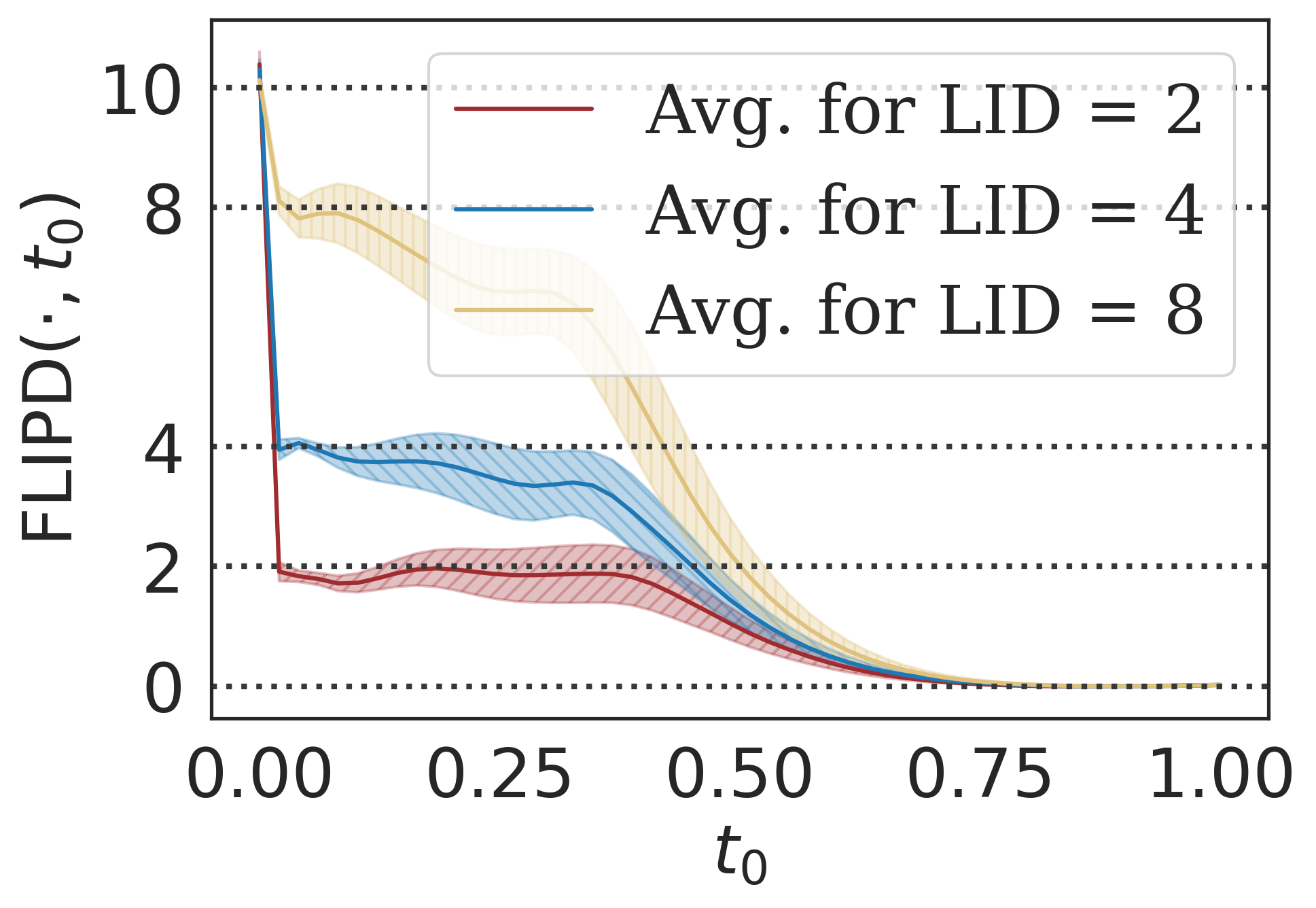}
        \caption{Mixture of Gaussians.}
    \end{subfigure}
    \hfill
    \begin{subfigure}{0.27\textwidth}
        \includegraphics[width=\linewidth]{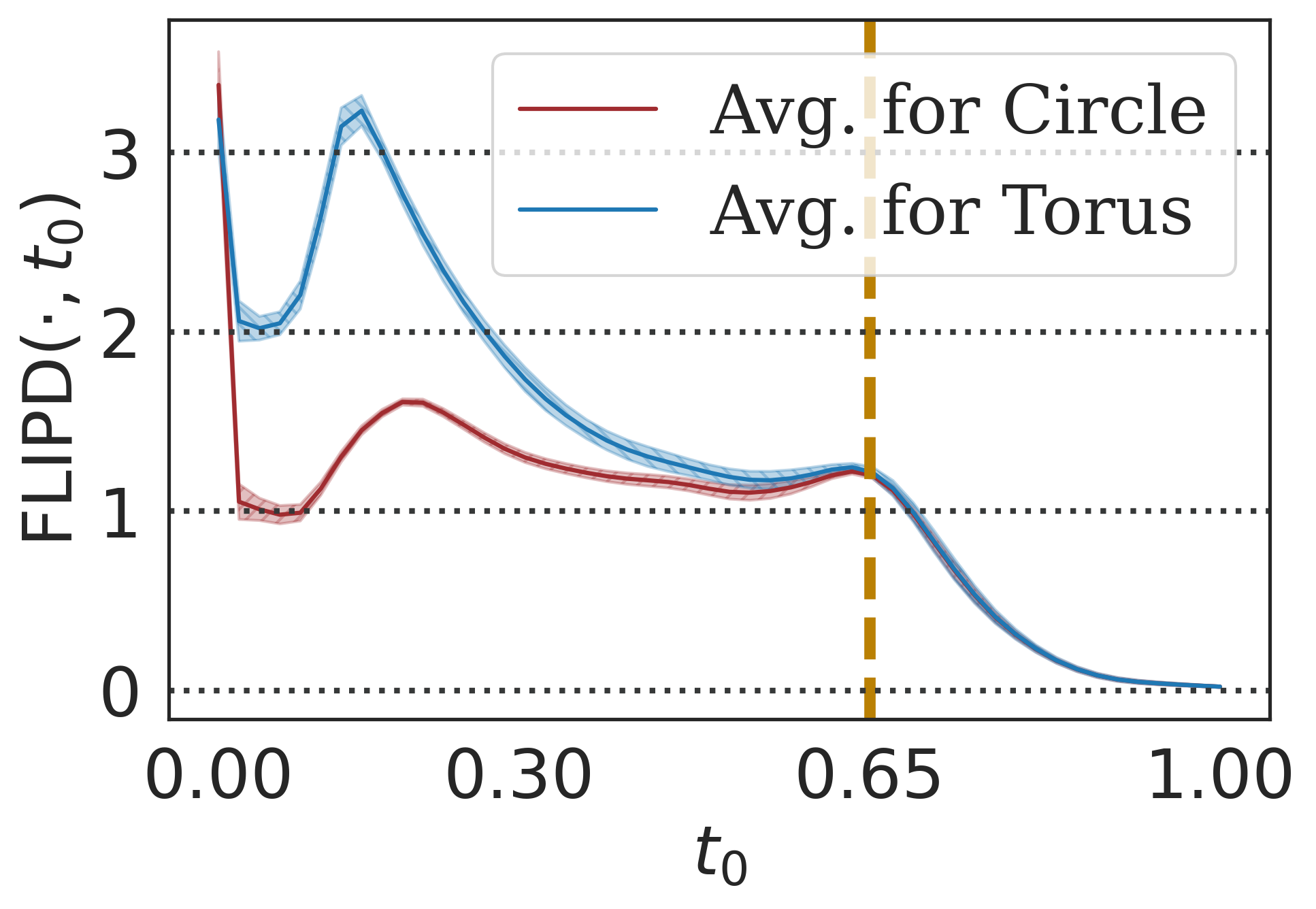}
        \caption{String within a doughnut.}
        \label{fig:lid_curves_doughnut_mixture}
    \end{subfigure}
    \caption{\method~curves with knees at the true LID.}
    \label{fig:lid_curves}
    \vspace{-0.1cm}
\end{wrapfigure}

\paragraph{The effect of $t_0$} 
FLIPD requires setting $t_0$ close to $0$ since all the theory holds in the $\delta \rightarrow -\infty$ regime. 
It is important to note that DMs fitted to low-dimensional manifolds are known to exhibit numerically unstable scores $s(\cdot, t_0)$ as $t_0 \searrow 0$ \cite{vahdat2021score, lu2023mathematical, loaiza2024deep}. 
This fact does not invalidate FLIPD, but it suggests that it might be sensitive to the choice of $t_0$. 
Our first set of experiments examines the effect of $t_0$ on $\method(\x, t_0)$ by varying $t_0$ within the range $(0, 1)$. 

In \autoref{fig:lid_curves}, we train DMs on two distributions: 
$(i)$ a mixture of three isotropic Gaussians with dimensions $2$, $4$, and $8$, embedded in $\R^{10}$ (each embedding is carried out by multiplication against a random matrix with orthonormal columns plus a random translation); and $(ii)$ a ``string within a doughnut'', which is a mixture of uniform distributions on a 2d torus (with a major radius of $10$ and a minor radius of $1$) and a 1d circle (aligning with the major circle of the torus) embedded in $\R^3$ (this union of manifolds is shown in the upper half of \autoref{fig:doughnut}). 
While $\method(\x, t_0)$ is inaccurate at $t_0=0$ due to the aforementioned instabilities, it quickly stabilizes around the true LID for all datapoints. 
We refer to this pattern as a \emph{knee} in the \method~curve. 
In \aref{appx:lid_curve_synthetic}, we show similar curves for more complex data manifolds.

\paragraph{\method~is a multiscale estimator}
Interestingly, in \autoref{fig:lid_curves_doughnut_mixture} we see that the blue FLIPD curve (corresponding to ``doughnut'' points with $\LID$ of $2$) exhibits a second knee at $1$, located at the $t_0$ shown with a vertical line.
This confirms the multiscale nature of convolution-based estimators, first postulated by \citet{tempczyk2022lidl} in the context of normalizing flows; they claim that when selecting a log standard deviation $\delta$, all directions along which a datum can vary  having log standard deviation less than $\delta$ are ignored. The second knee in \autoref{fig:lid_curves_doughnut_mixture} can be explained by a similar argument: the torus looks like a 1d circle when viewed from far away, and larger values of $t_0$ correspond to viewing the manifolds from farther away. This is visualized in \autoref{fig:doughnut} with two views of the ``string within a doughnut'' and corresponding LID estimates: one zoomed-in view where 
$t_0$ is small, providing fine-grained LID estimates, and a zoomed-out view where 
$t_0$ is large, making both the string and doughnut appear as a 1d circle from this distance.
In \aref{appx:multiscale_explicit} we have an experiment that makes the multiscale argument explicit.

\begin{wrapfigure}{r}{0.4\textwidth}\captionsetup{font=footnotesize, labelformat=simple, labelsep=colon}
  \vspace{-0.5cm}
  \centering
  \includegraphics[width=\linewidth]{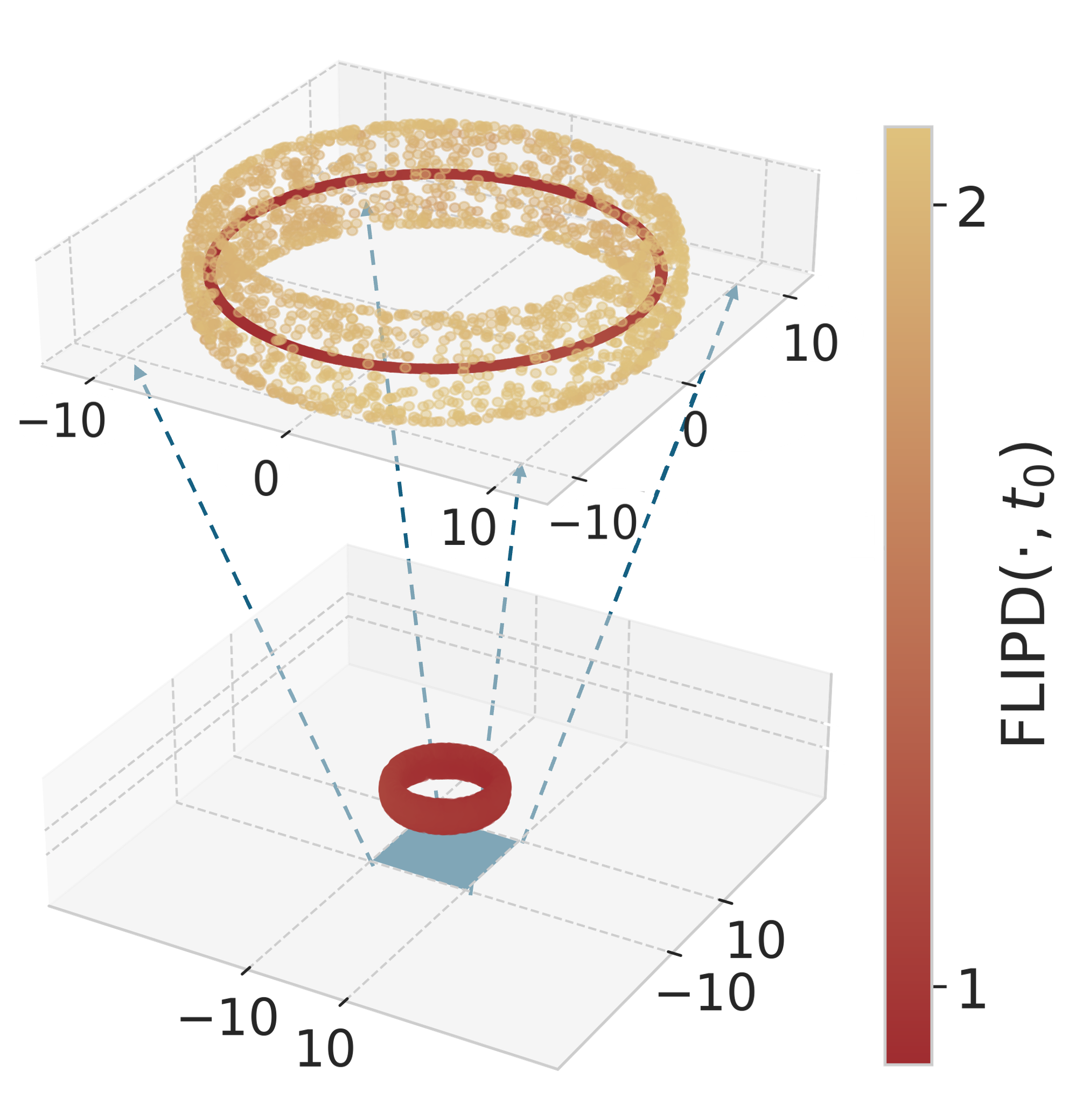}
  \caption
    {``String within a doughnut'' manifolds, and corresponding FLIPD estimates for different values of $t_0$ ($t_0=0.05$ on top and $t_0=0.65$ on bottom). These results highlight the multiscale nature of FLIPD.
    } \label{fig:doughnut}
    \vspace{-0.1cm}
\end{wrapfigure}
\paragraph{Finding knees} As mentioned, we persistently see knees in $\method$~curves. This in line with the observations of \citet{tempczyk2022lidl} (see Figure 5 of \cite{tempczyk2022lidl}), and it gives us a fully automated approach to setting $t_0$. We leverage \kneedle~\citep{satopaa2011finding}, a knee detection algorithm which aims to find points of maximum curvature. 
When computationally sensible, rather than fixing $t_0$, we evaluate \autoref{eq:method_formula} for $50$ values of $t_0$ and pass the results to \kneedle~to automatically detect the $t_0$ where a knee occurs. 

\begin{table*}[t]\captionsetup{font=footnotesize}
% \vspace{15pt}
\centering
\scriptsize
\caption{MAE (lower is better) $\mid$ \pirate{concordance indices} (higher is better with $1.0$ being the gold standard). Rows show synthetic manifolds and columns represent LID estimation methods. Columns are grouped based on whether they use a generative model, with the best results for each metric within each group being bolded.}
\vspace{-6pt}
\begin{tabularx}{\textwidth}{l|*{6}{Y}|*{4}{Y}}
\multicolumn{1}{c}{} & \multicolumn{6}{c}{Model-based} & \multicolumn{4}{c}{Model-free} \\ 
\cmidrule(r){2-7} \cmidrule(l){8-11} 
\multicolumn{1}{c}{ Synthetic Manifold} &  \multicolumn{2}{c}{\scriptsize \method}  & \multicolumn{2}{c}{\scriptsize NB} & \multicolumn{2}{c}{\scriptsize LIDL} & \multicolumn{2}{c}{\scriptsize ESS} & \multicolumn{2}{c}{\scriptsize LPCA} \\
\toprule
String within doughnut $\subseteq \R^3$ & %torus_circle                      
$\mbf{0.06}$ & $\mbf{\pirate{1.00}}$ &   $1.48$ & $\pirate{0.48}$ &      $1.10$ & $\pirate{0.99}$ &   $0.02$
& $\mbf{\pirate{1.00}}$ &   $\mbf{0.00}$ & $\mbf{\pirate{1.00}}$\\ 
$\Lcal_{5} \subseteq \R^{10}$ & %affine_10D_5d_laplace                             
$0.17$ & - &     $1.00$ & - &           $\mbf{0.10}$ & - &   $0.07$ & - &   $\mbf{0.00}$ & -\\ 
$\Ncal_{90} \subseteq \R^{100}$ &  %affine_100D_90d_gaussian                          
$0.49$ & - &    $\mbf{0.18}$ & - &           $0.33$ & - &   $\mbf{1.67}$ & - &   $21.9$ & -\\ 
$\Ucal_{10} + \Ucal_{30} + \Ucal_{90} \subseteq \R^{100}$ &  %affine_100D_10d_30d_90d_uniform
$\mbf{1.30}$ & $\mbf{\pirate{1.00}}$  &    $61.6$ & $\pirate{0.34}$ & $8.46$ & $\pirate{0.74}$ &   $21.9$ & $\pirate{0.74}$ &   $\mbf{20.1}$ & $\mbf{\pirate{0.86}}$\\ 
$\Ncal_{10} + \Ncal_{25} + \Ncal_{50} \subseteq \R^{100}$ &
$\mbf{1.81}$ & $\mbf{\pirate{1.00}}$ &    $74.2$ & $\pirate{0.34}$ &     $8.87$ & $\pirate{0.74}$ &   $7.71$ & $\pirate{0.88}$ &   $\mbf{5.72}$ & $\mbf{\pirate{0.91}}$\\ 
$\mathcal{F}_{10} + \mathcal{F}_{25} + \mathcal{F}_{50} \subseteq \R^{100}$ & % RQNSF_100D_10d_25d_50d_uniform
$\mbf{3.93}$ & $\mbf{\pirate{1.00}}$  &    $74.2$ & $\pirate{0.34}$ &   $18.6$ & $\pirate{0.70}$ &   $9.20$ & $\pirate{0.90}$ &    $\mbf{6.77}$ & $\mbf{\pirate{1.00}}$\\
$\Ucal_{10} + \Ucal_{80} + \Ucal_{200} \subseteq \R^{800}$ & % affine_800D_10d_80d_200d_uniform
$\mbf{14.3}$ & $\mbf{\pirate{1.00}}$    &    $715$ & $\pirate{0.34}$ & $120$ & $\pirate{0.70}$     & $1.39$ & $\mbf{\pirate{1.00}}$       & $\mbf{0.01}$ & $\mbf{\pirate{1.00}}$\\
$\Ucal_{900}  \subseteq \R^{1000}$ & % affine_1000D_900d_uniform
$\mbf{12.8}$ & -  &    $100$ & - & $24.9$ & -     & $\mbf{14.5}$ & -       & $219$ & -\\
\bottomrule
\end{tabularx}
\label{tab:main_table}
\end{table*}

\paragraph{Experimental setup} 
We create a benchmark for LID evaluation on complex unions of manifolds where the true LID is known. We sample from simple distributions on low-dimensional spaces, and then embed the samples into $\R^D$.  
We denote uniform, Gaussian, and Laplace distributions as
 $\Ucal, \Ncal$, and $\Lcal$, respectively, with sub-indices indicating LID, and a plus sign denoting mixtures. To embed samples into higher dimensions, we apply a random matrix with orthonormal columns and then apply a random translation. For example, $\Ncal_{10} + \Lcal_{20} \subseteq \R^{100}$ indicates a $10$-dimensional Gaussian and a $20$-dimensional Laplace, each of which undergoes a random affine transformation mapping to $\R^{100}$ (one transformation per component). We also generate non-linear manifolds, denoted with $\Fcal$, by applying a randomly initialized $D$-dimensional neural spline flow \cite{durkan2019neural} after the affine transformation (when using flows, the input noise is always uniform); since the flow is a diffeomorphism, it preserves LID. To our knowledge, this synthetic LID benchmark is the most extensive to date, revealing surprising deficiencies in some well-known traditional estimators. For an in-depth analysis, see \aref{appx:synthetic_benchmark}.

\paragraph{Results} Here, we summarize our synthetic experiments in \autoref{tab:main_table} using two metrics of performance: the mean absolute error (MAE) between the predicted and true LID for individual datapoints; and the concordance index, which measures similarity in the rankings between the true LIDs and the estimated ones (note that this metric only makes sense when the dataset has variability in its ground truth LIDs, so we only report it for the appropriate entries in \autoref{tab:main_table}). 
We compare against the NB and LIDL estimators described in \autoref{sec:lid}, as well as two of the most performant model-free baselines: LPCA \cite{fukunaga1971algorithm, cangelosi2007component} and ESS \cite{johnsson2014low}. For the NB baseline, we use the exact same DM backbone as for FLIPD (since NB was designed for variance-exploding DMs, we use the adaptation to variance-preserving DMs used in \citep{kamkari2024geometric}, which produces extremely similar results), and for LIDL we use $8$ neural spline flows. 
In terms of MAE, we find that \method~tends to be the best model-based estimator, particularly as dimension increases. Although model-free baselines perform well in simplistic scenarios, they produce unreliable results as LID increases or more non-linearity is introduced in the data manifold.
In terms of concordance index, \method~achieves \emph{perfect} scores in all scenarios, meaning that even when its estimates are off, it always provides correct LID rankings. We include additional results in the appendices: in \aref{appx:ablations} we ablate FLIPD, finding that using \kneedle~indeed helps, and that FLIPD also outperforms the efficient implementation of LIDL with DMs described in \autoref{sec:ode_lidl} that uses an ODE solver. 
We notice that NB with the setting proposed in \cite{stanczuk2022your} consistently produces estimates that are almost equal to the ambient dimension; thus, in \aref{appx:nb_kneedle} we also show how NB can be significantly improved upon by using \kneedle, although it is still outperformed by \method~in many scenarios. In addition, in \autoref{tab:synthetic-results} and \autoref{tab:synthetic_results_concordance_index} we compare against other model-free baselines such as MLE \cite{levina2004maximum, mackay2005comments} and FIS \cite{albergante2019estimating}.
We also consider datasets with a single (uni-dimensional) manifold where the average LID estimate can be used to approximate the \textit{global} intrinsic dimension. Our results in \autoref{tab:synthetic_results_ID} demonstrate that although model-free baselines indeed accurately estimate global intrinsic dimension, they perform poorly when focusing on (pointwise) LID estimates.

\subsection{Experiments with Fully-Connected Architectures on Image Data}

We first focus on the simple image datasets MNIST \cite{lecun2010mnist} and FMNIST \cite{xiao2017fashion}. We flatten the images and use the same MLP architecture as in our synthetic experiments. 
Despite using an MLP, our DMs can generate reasonable samples (\aref{appx:image_experiments_curves_and_samples}) and the \method~curve for both MNIST and FMNIST is shown in \autoref{fig:fmnist_mnist_curve}. The knee points are identified at $t_0 = 0.1$, resulting in average LID estimates of approximately $130$ and $170$, respectively.
Evaluating LID estimates for image data is challenging due to the lack of ground truth. Although our LID estimates are higher than those in \cite{pope2021intrinsic} and \cite{brown2022verifying}, our experiments (\autoref{tab:synthetic_results_ID} of \aref{appx:synthetic_benchmark}) and the findings in \cite{tempczyk2022lidl} and \cite{stanczuk2022your} show that model-free baselines underestimate LID of high-dimensional data, especially images. 

\subsection{Experiments with UNet Architectures on Image Data}

\begin{figure*}[tp]\captionsetup{font=footnotesize, labelformat=simple, labelsep=colon}
    \centering
    
    \begin{subfigure}{0.3\textwidth}
        \centering
        \includegraphics[width=\linewidth]{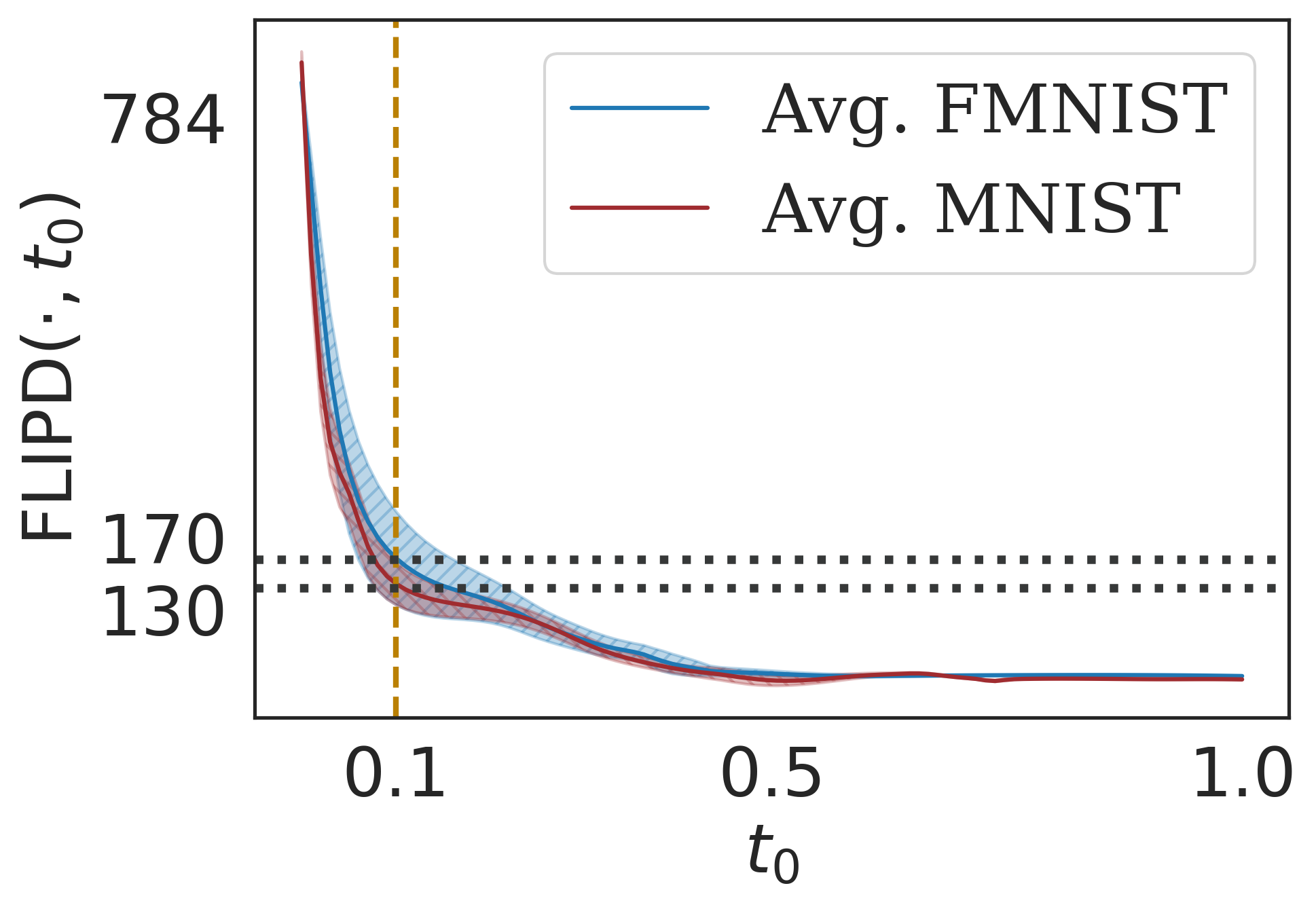}
        \vspace{-0.5cm}
        \subcaption{\method~of MNIST and FMNIST.}
        \label{fig:fmnist_mnist_curve}
    \end{subfigure}%
    \label{fig:image_experiments}
    \hfill
    \begin{subfigure}{0.3\textwidth}
        \centering
        \includegraphics[width=\linewidth, trim = 50 10 50 0, clip]{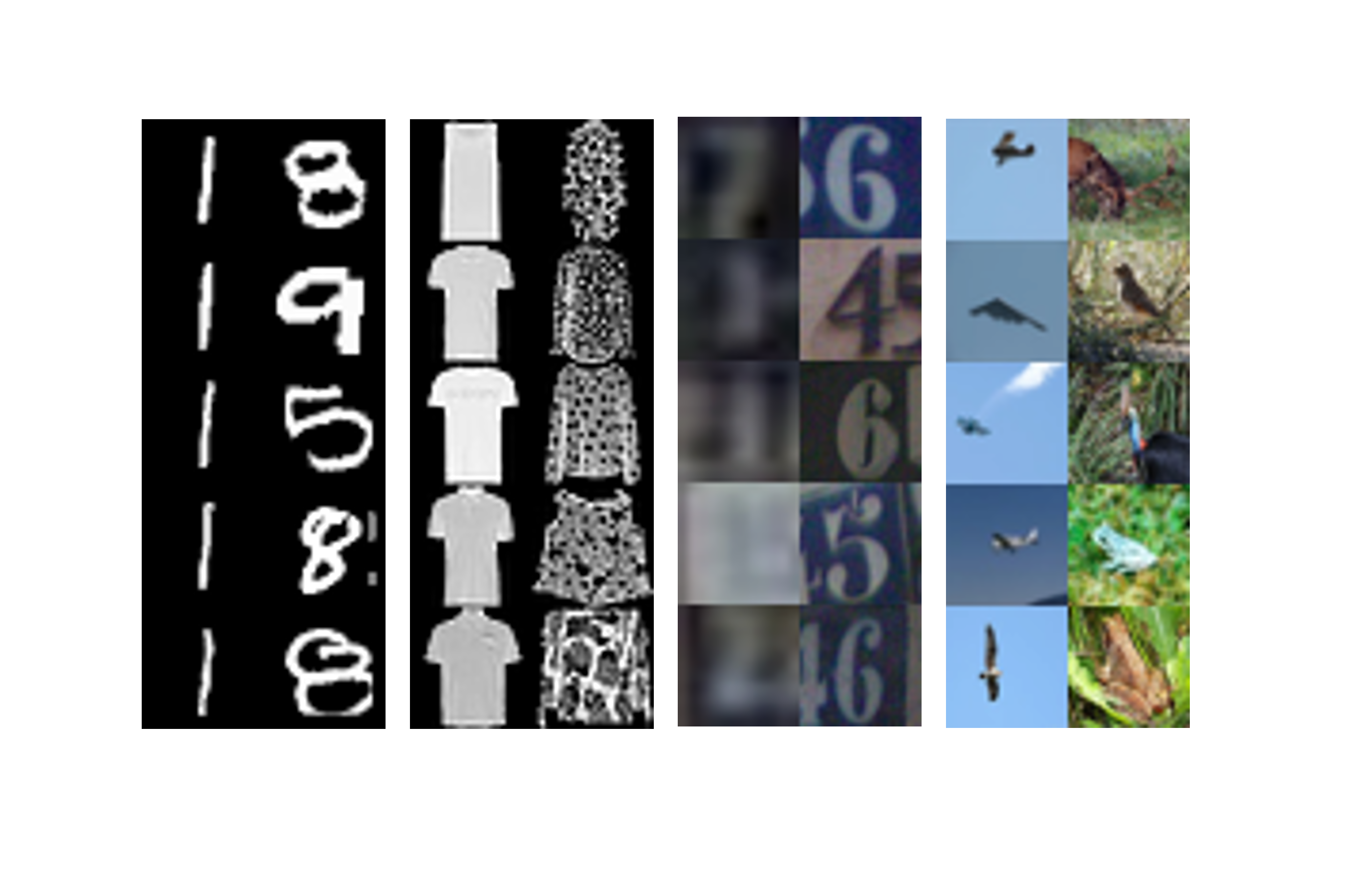}
        \vspace{-0.5cm}
        \subcaption{Ordering small images.}
        \label{fig:small_images_sorted}
    \end{subfigure}%
    \hfill
    \begin{subfigure}{0.36\textwidth}
        \centering
        \includegraphics[width=\linewidth, trim = 10 10 10 10, clip]{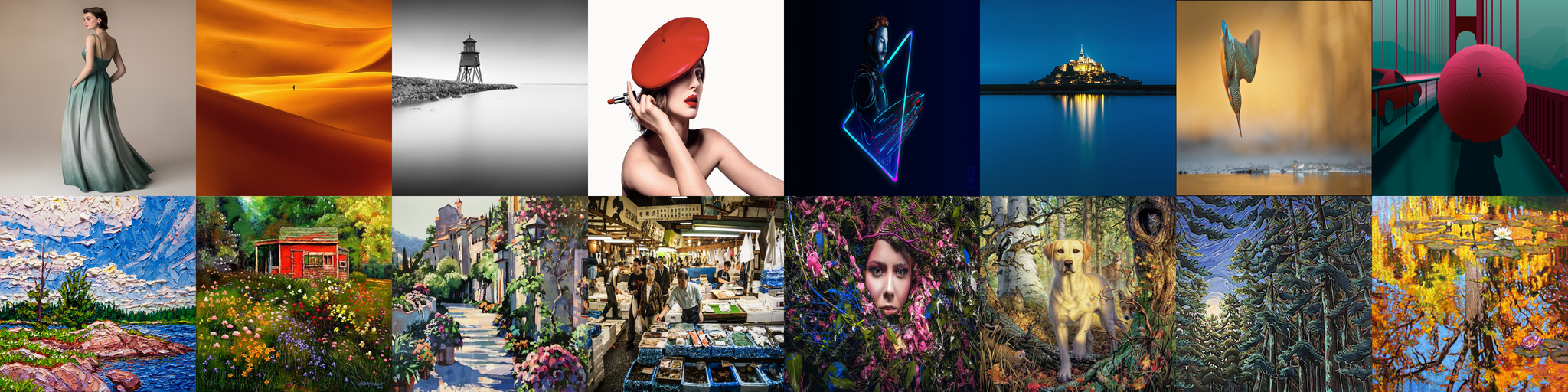}
        \vspace{0.1cm}
        \includegraphics[width=\linewidth, trim = 10 10 10 10, clip]{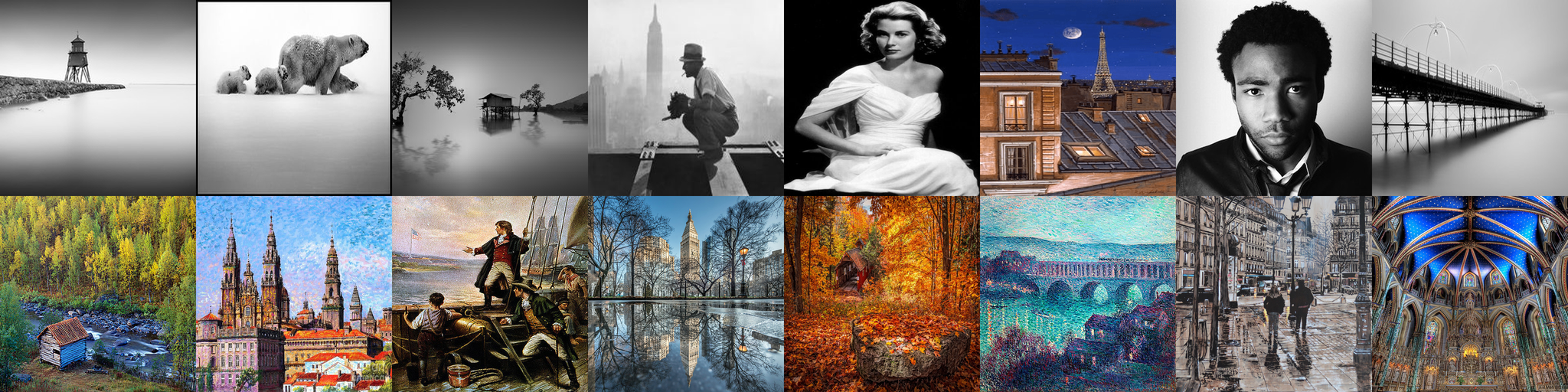}
        \subcaption{Ordering high-resolution images in LAION using FLIPD (top) and PNG compression size (bottom).}
        \label{fig:stable_diffusion}
    \end{subfigure}%
    \caption{Overview of image LID: \textbf{(a)} shows the \method~curves that are used to estimate average LID for MNIST and FMNIST when using MLP backbones; \textbf{(b)} compares images with small and large \method~estimates from FMNIST, MNIST, SVHN, and CIFAR10 when using UNet backbones; and \textbf{(c)} compares LAION images with small and large \method~estimates using Stable Diffusion (top, $t_0 = 0.3$) and PNG compression sizes (bottom).}
    \vspace{-0.2cm}
\end{figure*}

When moving to more complex image datasets, the backbone fails to generate high-quality samples. Therefore, we replace it with state-of-the-art UNets \cite{ronneberger2015u,von-platen-etal-2022-diffusers} (see \aref{appx:UNet_hparam}). 
Surprisingly, we find that using \kneedle~with UNets fails to produce sensible LID estimates with FLIPD (see curves in \autoref{fig:image_lid_curves} of \aref{appx:image_experiments_curves_and_samples}). We discuss why this might be the case in \aref{appx:image_experiments_curves_and_samples}, 
and from here on we simply set $t_0$ as a hyperparameter instead of using \kneedle. 
Although avoiding \kneedle~when using UNets results in increased sensitivity with respect to $t_0$, we argue that FLIPD remains a valuable measure of complexity as it produces sensible image rankings and it is highly correlated with with PNG compression length. 
We took random subsets of $4096$ images from each of FMNIST, MNIST, SVHN \cite{netzer2011reading}, and CIFAR10 \cite{krizhevsky2009learning}, and sorted them according to their \method~estimates (obtained using UNet backbones). We show the top and bottom 5 images for each dataset in \autoref{fig:small_images_sorted}, and include more samples in \aref{appx:sorted_images}. Our visualization shows that higher \method~estimates indeed correspond to images with more detail and texture, while lower estimates correspond to less complex ones.
Additionally, we show in \aref{appx:hutchinson_analysis} that using only $k=50$ Hutchinson samples to approximate the trace term in \method~is sufficient for small values of $t_0$.

Further, we quantitatively assess our estimates by computing Spearman's rank correlation coefficient between different LID estimators and PNG compression size, used as a proxy for complexity in the absence of ground truth. We highlight that although we expect this coefficient to be high, a perfect LID estimator need not achieve a correlation of $1$. As shown in \autoref{tab:correlations}, \method~has a high correlation 
\begin{wraptable}[11]{r}{0.53\textwidth}\captionsetup{font=footnotesize}
    \centering
    \vspace{-1pt}
    \caption{Spearman's correlation between LID estimates and PNG compression size. FLIPD and NB were computed using the same UNet backbone.}
    \begin{tabularx}{\linewidth}{lXXXX}
        \toprule
        {\footnotesize Method}  & {\footnotesize MNIST} & {\footnotesize FMNIST} & {\footnotesize CIFAR10} & {\footnotesize SVHN} \\
        \midrule
        {\footnotesize \method}  & 
        $0.837$     & $\mbf{0.883}$      & $0.819$       & $\mbf{0.876}$    \\
            {\footnotesize NB}     & 
        $\mbf{0.864}$     & $0.480$      & $\mbf{0.894}$       &  $0.573$   \\
        {\footnotesize ESS}     & 
        $0.444$     & $0.063$      & $0.326$       &  $0.019$   \\
        {\footnotesize LPCA}    & 
        $0.413$     & $0.01$      & $0.302$       &  $-0.008$   \\
        \bottomrule
    \end{tabularx}    \label{tab:correlations}
\end{wraptable}
with PNG, whereas model-free estimators do not. 
We find that the NB estimator correlates slightly more with PNG on MNIST and CIFAR10, but significantly less in FMNIST and SVHN. 
Moreover, in \aref{appx:multiscale_images}, we analyze how increasing $t_0$ affects FLIPD by re-computing the correlation with the PNG size at different values of $t_0 \in (0, 1)$. We see that as $t_0$ increases, the correlation with PNG decreases. Despite this decrease, we observe an interesting phenomenon: while the image orderings change, qualitatively, the smallest $\method(\cdot, t_0)$ estimates still represent less complex data compared to the highest $\method(\cdot,t_0)$, even for relatively large $t_0$. We hypothesize that for larger $t_0$, similar to the ``string within a doughnut'' experiment in \autoref{fig:doughnut}, the orderings correspond to coarse-grained and semantic notions of complexity rather than fine-grained ones such as textures, concepts that a metric such as PNG compression size cannot capture.

We also consider high-resolution images from LAION-Aesthetics \citep{schuhmann2022laion} and, for the first time, estimate LID for extremely high-dimensional images with $D=3\times 512 \times 512 = 786,\!432$. We use Stable Diffusion \cite{rombach2022high}, a latent DM pretrained on LAION-5B \citep{schuhmann2022laion}. This includes an encoder and a decoder trained to preserve relevant characteristics of the data manifold in latent representations. Since the encoder and decoder are continuous and effectively invert each other, we argue that the Stable Diffusion encoder can, for practical purposes, be considered a topological embedding of the LAION-5B dataset into its latent space of dimension $4\times 64 \times 64=16,\!384$. Therefore, the dimension of the LAION-5B submanifold in latent space should be unchanged. 
We leave an empirical verification of this hypothesis to future work and thus estimate image LIDs by carrying out \method~in the latent space of Stable Diffusion. Here, we set the Hutchinson sample count to $k=1$, meaning we only require a \textit{single} Jacobian-vector-product. When we order a random subset of $1600$ samples according to their FLIPD at $t_0=0.3$, the more complex images are clustered at the end, while the least complex are clustered at the beginning: see \autoref{fig:lid_illustration} and \autoref{fig:stable_diffusion} for the lowest- and highest-LID images from this ordering, and \autoref{fig:stable_diffusion_full} in \aref{appx:stable_diffusion} to view the entire subset and other values of $t_0$. In comparison to orderings according to PNG compression size (\autoref{fig:stable_diffusion}), FLIPD prioritizes semantic complexity over low-level details like colouration.

\begin{wraptable}[7]{r}{0.6\textwidth}\captionsetup{font=footnotesize}
    \centering
    \vspace{-13pt}
    \caption{Time, in seconds, to estimate LID for a single image.}
    \begin{tabularx}{\linewidth}{lXXX}
        \toprule
        {\footnotesize Method}  & {\footnotesize MNIST/FMNIST} & {\footnotesize SVHN/CIFAR10 } & {\footnotesize LAION} \\
        \midrule
        {\footnotesize \method}  & 
        $\mbf{0.10}$     & $\mbf{0.13}$      & $\mbf{0.20}$\\
            {\footnotesize NB}     & 
        $1.6$     & $10.8$      & $ > 9 \times 10^3$\\
        \bottomrule
    \end{tabularx}
    % \vspace{-0.3cm}
    \label{tab:runtime}
\end{wraptable}
Finally, we compare the runtimes for computing FLIPD and NB for all models using UNet backbones. We show results in \autoref{tab:runtime}. For $28 \times 28$ greyscale (MNIST/FMNIST) and $3 \times 32 \times 32$ low-resolution RGB (SVHN/CIFAR10) images, we use $k=50$ Hutchinson samples: at these dimensions, FLIPD achieves $\sim 10\times$ and $\sim 100\times$ respective speedups over NB. For $4 \times 64 \times 64$ LAION-Aesthetics images on the latent space of Stable Diffusion, we use $k=1$ for FLIPD, which ensures it remains highly tractable. At this resolution NB is completely intractable: constructing $S(x)$ for a single image $x$ takes $2.5$ hours, and computing NB would then still require performing a SVD on this $16,384 \times 65,536$ matrix.

\section{Conclusions, Limitations, and Future Work} \label{sec:conclusion}
In this work we have shown that the Fokker-Planck equation can be utilized for efficient LID estimation with any pre-trained DM. We have provided strong theoretical foundations and extensive benchmarks showing that \method~estimates accurately reflect data complexity. 
Although FLIPD produces excellent LID estimates on synthetic benchmarks, its instability with respect to the choice of network architecture on images is surprising, and results in LID estimates which strongly depend on this choice. 
We see this behaviour as a limitation, even if FLIPD nonetheless still provides a meaningful measure of complexity regardless of architecture. 
Given that FLIPD is tractable, differentiable, and compatible with any DM, we hope that it will find uses in applications where LID estimates have already proven helpful, including OOD detection, AI-generated data analysis, and adversarial example detection.

{
\small
\bibliography{refs}
\bibliographystyle{plainnat}
}

%%%%%%%%%%%%%%%%%%%%%%%%%%%%%%%%%%%%%%%%%%%%%%%%%%%%%%%%%%%%
\newpage
\appendix

\section{Explicit Formulas}\label{app:explicit}

\subsection{Variance-Exploding Diffusion Models}

Variance-exploding DMs are such that $f(x, t)=0$ with $g$ being non-zero. In this case \citep{song2020score}:
\begin{equation}
    \meanfn(t) = 1, \quad \text{and}\quad \sigma^2(t) = \int_0^t g^2(u) \rd u.
\end{equation}
Since $g$ is non-zero, $g^2$ is positive, so that $\sigma^2$ is increasing, and thus injective. It follows that $\sigma$ is also injective, so that $\lambda = \sigma / \meanfn = \sigma$ is injective. \autoref{eq:method_formula} then implies that
\begin{equation}
    \method(x, t_0) = D + \sigma^2(t_0) \left[ \tr\big(\nabla s(x, t_0)\big) + \Vert s ( x, t_0) \Vert_2^2\right].
\end{equation}

\subsection{Variance-Preserving Diffusion Models (DDPMs)}

Variance-preserving DMs are such that
\begin{equation}
    f(x, t) = -\dfrac{1}{2}\beta(t)x,\quad \text{and}\quad g(t)=\sqrt{\beta(t)},
\end{equation}
where $\beta$ is a positive scalar function. In this case \citep{song2020score}:
\begin{equation}
    \meanfn(t) = e^{-\tfrac{1}{2} B(t)}, \quad \text{and}\quad \sigma^2(t) = 1 - e^{-B(t)}, \quad \text{where}\quad B(t) \coloneqq \int_0^t \beta(u) \rd u.
\end{equation}
We then have that
\begin{equation}
    \lambda(t) = \sqrt{\dfrac{\sigma^2(t)}{\meanfn^2(t)}} = \sqrt{e^{B(t)} - 1}.
\end{equation}
Since $\beta$ is positive, $B$ is increasing and thus injective, from which it follows that $\lambda$ is injective as well. Plugging everything into \autoref{eq:method_formula}, we obtain:
\begin{align} \label{eq:explicit_vpsde}
    \method(x, t_0) & =  D + \big(1 - e^{-B(t_0)}\big) \left(\tr \Big(\nabla s\big(e^{-\tfrac{1}{2}B(t_0)}x, t_0\big)\Big) + \big\Vert s\big(e^{-\tfrac{1}{2}B(t_0)}x, t_0\big) \big\Vert_2^2\right).
\end{align}

\subsection{Sub-Variance-Preserving Diffusion Models}

Sub-variance-preserving DMs are such that
\begin{equation}
    f(x, t) = -\dfrac{1}{2}\beta(t)x,\quad \text{and}\quad g(t)=\sqrt{\beta(t)\left(1-e^{-2B(t)}\right)}, \quad \text{where}\quad B(t) \coloneqq \int_0^t \beta(u) \rd u,
\end{equation}
and where $\beta$ is a positive scalar function. 
In this case \citep{song2020score}:
\begin{equation}
    \meanfn(t) = e^{-\tfrac{1}{2} B(t)}, \quad \text{and}\quad \sigma^2(t) = \left(1 - e^{-B(t)}\right)^2.
\end{equation}
We then have that
\begin{equation}
    \lambda(t) = \dfrac{\sigma(t)}{\meanfn(t)} = e^{\tfrac{1}{2}B(t)} - e^{-\tfrac{1}{2}B(t)} = 2 \sinh\left(\dfrac{1}{2}B(t)\right).
\end{equation}
Since $\beta$ is positive, $B$ is increasing and thus injective, from which it follows that $\lambda$ is injective as well due to the injectivity of $\sinh$. Plugging everything into \autoref{eq:method_formula}, we obtain:
\begin{align}
    \method(x, t_0) = D + \left(1 - e^{-B(t_0)}\right)^2 \left(\tr \Big(\nabla s\big(e^{-\tfrac{1}{2}B(t_0)}x, t_0\big)\Big) + \big\Vert s\big(e^{-\tfrac{1}{2}B(t_0)}x, t_0\big) \big\Vert_2^2\right).
\end{align}

\section{Proofs and Derivations}\label{app:proofs}

\subsection{Derivation of \autoref{eq:dm_convolution}}\label{app:dm_convolution}

We have that:
\begin{align}
    p\big(\meanfnstd{\delta}x, t(\delta)\big) & = \int p_{t(\delta) \mid 0}\Big(\meanfnstd{\delta}x \mid x_0\Big) p(x_0, 0) \rd x_0\\
    & = \int \mathcal{N}\Big(\meanfnstd{\delta}x; \meanfnstd{\delta}x_0, \sigma^2\big(t(\delta)\big) I_D\Big) p(x_0, 0)\rd x_0 \\
    & = \dfrac{1}{\meanfnstd{\delta}^D} \int \mathcal{N}\left(x; x_0, \dfrac{\sigma^2\big(t(\delta)\big)}{\meanfn^2\big(t(\delta)\big)} I_D\right) p(x_0, 0)\rd x_0\\
    & = \dfrac{1}{\meanfnstd{\delta}^D} \int \mathcal{N}\Big(x; x_0, \lambda^2\big(t(\delta)\big)I_D\Big) p(x_0, 0) \rd x_0 \\
    & = \dfrac{1}{\meanfnstd{\delta}^D} \int \mathcal{N}\big(x; x_0, e^{2\delta} I_D\big) p(x_0, 0) \rd x_0 = \dfrac{1}{\meanfnstd{\delta}^D} \varrho(x, \delta),
\end{align}
where we used that $\mathcal{N}(ax; ax_0, \sigma^2 I_D) = \tfrac{1}{a^D} \mathcal{N}(x; x_0, (\sigma^2/a^2)I_D)$, along with the definition $ \lambda(t)\coloneqq\sigma(t)/\meanfn(t)$. It is thus easy to see that taking logarithms yields \autoref{eq:dm_convolution}.

\subsection{Derivation of \autoref{eq:rate_of_change}}\label{app:rate_of_change}

First, we recall the Fokker-Planck equation associated with the SDE in \autoref{eq:forward_sde}, which states that:
\begin{equation}\label{eq:fp}
    \dfrac{\partial}{\partial t} p(x, t) = -p(x, t) \left[\nabla \cdot f(x, t) \right] - \langle f(x, t), \nabla p(x, t) \rangle + \dfrac{1}{2} g^2(t) \tr \left(\nabla^2 p(x, t)\right).
\end{equation}
We begin by using this equation to derive $\partial / \partial t \, \log p(x, t)$. Noting that $\nabla p(x, t) = p(x, t) s(x, t)$, we have that:
\begin{align}
    \tr\left(\nabla^2 p(x, t)\right) & = \tr\left( \nabla \left[ p(x, t) s(x, t)\right]\right) = p(x, t)\tr\left(\nabla s(x, t)\right) + \tr\left(s(x, t) \nabla p(x, t)^\top\right)\\
    & = p(x, t)\left[\tr\left(\nabla s(x,t)\right) + \Vert s(x, t) \Vert_2^2\right].
\end{align}
Because
\begin{equation}
    \dfrac{\partial}{\partial t} p(x, t) = p(x, t) \dfrac{\partial}{\partial t} \log p(x, t),
\end{equation}
it then follows that
\begin{equation}\label{eq:fp_log}
    \dfrac{\partial}{\partial t} \log p(x, t) = -\left[\nabla \cdot f(x, t)\right] - \langle f(x, t), s(x, t) \rangle + \dfrac{1}{2} g^2(t) \left[ \tr \left(\nabla s(x, t)\right) + \Vert s(x, t) \Vert_2^2  \right].
\end{equation}
Then, from \autoref{eq:dm_convolution} and the chain rule, we get:
\begin{alignat}{2}
    \appxrescale{\dfrac{\partial}{\partial \delta} \log \varrho(x, \delta)} & \appxrescale{~=~} &&\appxrescale{\dfrac{\rd}{\rd \delta} \left[ D \log \meanfn\big(t(\delta)\big) + \log p\Big( \meanfn\big(t(\delta)\big)x, t(\delta) \Big) \right]}\\
    & \appxrescale{~=~} &&
 \appxrescale{D \left[ \dfrac{\rd}{\rd \delta} \log \meanfn\big(t(\delta)\big) \right]} \nonumber\\
 &{} && \appxrescale{+ \begin{pmatrix}
     \nabla \log p\Big(\meanfn\big(t(\delta)\big)x, t(\delta)\Big) \\
     \dfrac{\partial}{\partial t} \log p\Big(\meanfn\big(t(\delta)\big)x, t(\delta)\Big)
 \end{pmatrix}^\top \begin{pmatrix}
     \dfrac{\partial}{\partial t} \meanfn\big(t(\delta)\big)x\\
     1
 \end{pmatrix} \dfrac{\partial}{\partial \delta} t(\delta)}\\
 & \appxrescale{~=~} && \appxrescale{\left[\dfrac{\partial}{\partial \delta} t(\delta) \right]\Bigg[ D \dfrac{ \dfrac{\partial}{\partial t} \meanfn\big(t(\delta)\big)}{\meanfnstd{\delta}} + \begin{pmatrix}
    s\Big(\meanfnstd{\delta}x, t(\delta)\Big) \\
     \dfrac{\partial}{\partial t} \log p\Big(\meanfnstd{\delta}x, t(\delta)\Big)
 \end{pmatrix}^\top \begin{pmatrix}
     \dfrac{\partial}{\partial t} \meanfn\big(t(\delta)\big)x\\
     1
 \end{pmatrix} \Bigg]}\\
 & \appxrescale{~=~} && \appxrescale{\left[\dfrac{\partial}{\partial \delta} t(\delta) \right]\Bigg[ \left(\dfrac{\partial}{\partial t} \meanfn\big(t(\delta)\big) \right) \left( \dfrac{D}{\meanfnstd{\delta}} + \Big\langle x,  s\Big(\meanfnstd{\delta}x, t(\delta)\Big) \Big\rangle \right)} \nonumber \\
 &{} &&{} \hspace{44pt} \appxrescale{+  \dfrac{\partial}{\partial t} \log p\Big(\meanfnstd{\delta}x, t(\delta)\Big) \Bigg]}.\label{eq:rate_almost_there}
\end{alignat}
Substituting \autoref{eq:fp_log} into \autoref{eq:rate_almost_there} yields:
\begin{align}\label{eq:rate_general}
    \hspace{-4pt} \appxrescale{\dfrac{\partial}{\partial \delta} \log \varrho(x, \delta) =  \left[\dfrac{\partial}{\partial \delta} t(\delta) \right] \Bigg[ }& \appxrescale{\left(\dfrac{\partial}{\partial t} \meanfn \big(t(\delta)\big)\right) \left( \dfrac{D}{\meanfnstd{\delta}} + \Big\langle x,  s\Big(\meanfnstd{\delta}x, t(\delta)\Big) \Big\rangle \right)} \nonumber \\
      & \appxrescale{- \left[ \nabla \cdot f\Big(\meanfnstd{\delta}x, t(\delta)\Big) \right] - \Big\langle f\Big(\meanfnstd{\delta}x, t(\delta)\Big),  s\Big(\meanfnstd{\delta}x, t(\delta)\Big) \Big\rangle} \nonumber\\
    & \appxrescale{+ \dfrac{1}{2}g^2\big(t(\delta)\big)\left(\tr\Bigg(\nabla s\Big(\meanfnstd{\delta}x, t(\delta)\Big)\Bigg) + \Big\Vert s\Big(\meanfnstd{\delta}x, t(\delta)\Big)\Big\Vert_2^2\right) \Bigg].}
\end{align}
From now on, to simplify notation, when dealing with a scalar function $h$, we will denote its derivative as $h'$. Since $t(\delta) = \lambda^{-1}(e^\delta)$, the chain rule gives:
\begin{equation}\label{eq:t_term}
    \dfrac{\partial}{\partial \delta} t(\delta) = \dfrac{e^\delta}{\lambda'\big(\lambda^{-1}(e^\delta)\big)} = \dfrac{\lambda\big( t(\delta) \big)}{\lambda'\big(t(\delta)\big)}.
\end{equation}
So far, we have not used that $f(x, t) = b(t)x$, which implies that $\nabla \cdot f(x, t) = Db(t)$ and that $\langle f(x, t), s(x, t) \rangle = b(t) \langle x, s(x, t) \rangle$. Using these observations and \autoref{eq:t_term}, \autoref{eq:rate_general} becomes:
\begin{align}\label{eq:penultimate}
    \appxrescale{\dfrac{\partial}{\partial \delta} \log \varrho(x, \delta) = \dfrac{\lambda\big(t(\delta)\big)}{\lambda'\big(t(\delta)\big)}\Bigg[}& 
    \appxrescale{\left(\dfrac{\meanfn' \big(t(\delta)\big)}{\meanfn\big(t(\delta)\big)} - b\big(t(\delta)\big)\right)D} \nonumber \\
    &\appxrescale{+ \Big\langle \Big(\meanfn' \big(t(\delta)\big) - b\big(t(\delta)\big)\meanfn\big(t(\delta)\big)   \Big)x, s\Big(\meanfnstd{\delta}x, t(\delta)\Big) \Big\rangle} \nonumber\\
    & \appxrescale{+ \dfrac{1}{2}g^2 \big(t(\delta)\big)\left(\tr\Bigg(\nabla s\Big(\meanfnstd{\delta}x, t(\delta)\Big)\Bigg) + \Big\Vert s\Big(\meanfnstd{\delta}x, t(\delta)\Big)\Big\Vert_2^2\right) \Bigg]}.
\end{align}
If we showed that
\begin{equation}\label{eq:simplifications}
    \meanfn'(t) - b(t)\meanfn(t) = 0,\quad \text{and that}\quad \dfrac{\lambda(t)}{2 \lambda'(t)}g^2(t) = \sigma^2(t),
\end{equation}
for every $t$, then \autoref{eq:penultimate} would simplify to \autoref{eq:rate_of_change}. From equation 5.50 in \citep{sarkka2019applied}, we have that
\begin{equation}
    \meanfn'(t) = b(t)\meanfn(t),
\end{equation}
which shows that indeed  $\meanfn'(t) - b(t)\meanfn(t)=0$. Then, from equation  5.51 in \citep{sarkka2019applied}, we also have that
\begin{equation}
    \left(\sigma^2\right)'(t) = 2b(t)\sigma^2(t) + g^2(t),
\end{equation}
and from the chain rule this gives that
\begin{equation}
    \sigma'(t) = \dfrac{2b(t) \sigma^2(t) + g^2(t)}{2 \sigma(t)}.
\end{equation}
We now finish verifying \autoref{eq:simplifications}. Since $\lambda(t) = \sigma(t)/\meanfn(t)$, the chain rule implies that
\begin{align}
    \dfrac{\lambda(t)}{2 \lambda'(t)}g^2(t) & = \dfrac{\dfrac{\sigma(t)}{\meanfn(t)}}{\dfrac{\sigma'(t)\meanfn(t) - \sigma(t)\meanfn'(t)}{\meanfn^2(t)}} \dfrac{g^2(t)}{2} = \dfrac{\sigma(t) \meanfn(t)}{\sigma'(t) \meanfn(t) - \sigma(t)b(t)\meanfn(t)} \dfrac{g^2(t)}{2}\\
    & = \dfrac{\sigma(t)}{\sigma'(t) - b(t) \sigma(t)} \dfrac{g^2(t)}{2} = \dfrac{\sigma(t)}{\dfrac{2b(t)\sigma^2(t) + g^2(t)}{2 \sigma(t)} - b(t)\sigma(t)} \dfrac{g^2(t)}{2}\\
    & = \dfrac{2 \sigma^2(t)}{2b(t)\sigma^2(t)+g^2(t) - 2b(t)\sigma^2(t)}\dfrac{g^2(t)}{2} = \sigma^2(t),
\end{align}
which, as previously mentioned, shows that \autoref{eq:penultimate} simplifies to \autoref{eq:rate_of_change}.

\subsection{Proof of \autoref{thm:main}}\label{app:thm}
\newcommand{\prob}{p}
\newcommand{\q}{\varrho}
\def\*#1{\ensuremath{\mathcal{#1}}}
\newcommand{\conv}{\circledast}
\newcommand{\eqn}[1]{\begin{align*}#1\end{align*}}
\def\reals{\mathbb{R}}
\newcommand{\Gsn}{\mathcal{N}}
\newcommand{\norm}[1]{\left\lVert#1\right\rVert_2}
\newcommand{\abs}[1]{\left| #1 \right|}

\newenvironment{proofof}[1]{\noindent\textbf{Proof of {#1}}
  \hspace*{1em}}{\qed\bigskip\\}
\newenvironment{proof-of-theorem}[1][{}]{\noindent\textbf{Proof of Theorem {#1}.}
  \hspace*{0em}}{\qed\\}
\newenvironment{proof-of-claim}[1][{}]{\noindent\textbf{Proof of Claim {#1}.}
  \hspace*{0em}}{\qed\\}

We begin by stating and proving a lemma which we will later use.

\begin{lemma}\label{lem}
    For any $\epsilon>0$ and $\xi > 0$, there exists $\Delta < 0$ such that for all $\delta < \Delta$ and $y \in \reals^d$ with $\norm{y} > \xi$, it holds that:
    \begin{equation}
        \mathcal{N}(y; 0, e^{2\delta}I_d) < \epsilon e^{2\delta}.
    \end{equation}
\end{lemma}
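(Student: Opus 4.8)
The plan is to make the Gaussian density fully explicit and reduce the statement to an elementary one-dimensional estimate in $\delta$. Writing
\begin{equation}
    \mathcal{N}(y; 0, e^{2\delta} I_d) = \frac{1}{(2\pi)^{d/2}\, e^{d\delta}} \exp\!\left( -\frac{\norm{y}^2}{2 e^{2\delta}} \right),
\end{equation}
I would first note that the right-hand side is strictly decreasing in $\norm{y}$, so it suffices to control its value on the sphere $\norm{y} = \xi$: if $\mathcal{N}(y; 0, e^{2\delta} I_d) \le \epsilon e^{2\delta}$ holds there, it holds for all $y$ with $\norm{y} > \xi$ as well, and in fact strictly since $\norm{y} > \xi$ is strict. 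This collapses the universal quantifier over $y$ and leaves a bound involving only $\delta$, $\xi$, $\epsilon$, and $d$.

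Next I would take logarithms of the resulting inequality
\begin{equation}
    \frac{1}{(2\pi)^{d/2}\, e^{d\delta}} \exp\!\left( -\frac{\xi^2}{2 e^{2\delta}} \right) \le \epsilon e^{2\delta},
\end{equation}
which rearranges to
\begin{equation}
    \frac{\xi^2}{2}\, e^{-2\delta} \ge -(d+2)\,\delta - \log \epsilon - \frac{d}{2}\log(2\pi).
\end{equation}
The key point is the comparison of growth rates as $\delta \to -\infty$: both sides tend to $+\infty$, but the left-hand side grows like $e^{-2\delta}$, i.e.\ exponentially in $-\delta$, while the right-hand side is affine in $\delta$ and hence grows only linearly. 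Consequently there exists $\Delta < 0$ beyond which the left-hand side strictly exceeds the right-hand side; substituting $s = -\delta$ and invoking the elementary fact that $e^{2s}$ eventually dominates any affine function of $s$ makes this precise. Undoing the logarithm then yields the claimed bound.

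I do not anticipate any genuine obstacle here, as the content is entirely elementary. The only things requiring care are the bookkeeping of signs and the direction of the inequality when passing through the logarithm, and explicitly recording the monotonicity-in-$\norm{y}$ reduction so that the universally quantified $y$ is replaced by the single worst case $\norm{y} = \xi$.
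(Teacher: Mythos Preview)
Your proposal is correct and essentially identical to the paper's argument: both write out the density, take logarithms, and reduce to the elementary fact that an exponential in $-\delta$ eventually dominates an affine function of $\delta$. The only cosmetic difference is that you first reduce to the worst case $\norm{y}=\xi$ and then compare growth rates, whereas the paper isolates $\norm{y}^2$ on one side and observes that the other side tends to $0$ while $\norm{y}^2 \ge \xi^2$; after moving the factor $e^{2\delta}$ across, these are the same inequality.
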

\begin{proof}
    The inequality holds if and only if
\begin{align}
    -\frac{d}{2}\log(2\pi) -d\delta -\frac{e^{-2\delta}\norm{y}^2}{2} < \log{\eps} + 2\delta,
\end{align}
which in turn is equivalent to
\begin{align}
    \norm{y}^2 > 2\left(-2\delta -\log\eps -\left(\delta+\frac{\log(2\pi)}{2}\right)d\right)e^{2\delta}.
\end{align}
  The limit of the right hand side as $\delta \to -\infty$ is $0$ (and it approaches from the positive side), while $\norm{y}^2$ is lower bounded by $\xi^2$, thus finishing the proof.
\end{proof}

We now restate \autoref{thm:main} for convenience:

\main*

\begin{proof}

As the result is invariant to rotations and translations, we
assume without loss of generality that $\mathcal{L}=\{(x', 0) \in \reals^D \mid
x' \in \reals^d, 0 \in \reals^{D-d}\}$. Since $x \in \mathcal{L}$, it has the
form $x=(x', 0)$ for some $x' \in \reals^d$. Note that formally $p(\cdot, 0)$
is not a density with respect to the $D$-dimensional Lebesgue measure, however,
with a slight abuse of notation, we identify it with $p(x')$, where $p(\cdot)$
is now the $d$-dimensional Lebesgue density of $X'$, where $(X', 0) = X \sim
p(\cdot, 0)$. We will denote $p(\cdot) \conv \mathcal{N}(\cdot; 0, e^{2\delta}
I_d)$ as $p_\delta(\cdot)$. In this simplified notation, our assumptions are
that $p(\cdot)$ is continuous at $x'$ with finite second moments, and that $x'$ is such
that $p(x') > 0$.

For ease of notation we use $\Gsn^\delta_d$ to represent the normal distribution
with variance $e^{2\delta}$ on a $d$-dimensional space, i.e.\ $\Gsn^\delta_d(\cdot) = \mathcal{N}(\cdot; 0, e^{2\delta}I_d)$. 
For any subspace $S$ we use $S^c$ to denote its
complement where the ambient space is clear from context.
$B_\xi(0)$ denotes a ball of radius $\xi$ around the origin.

We start by noticing that the derivative of the logarithm of a Gaussian with
respect to its log variance can be computed as:
\begin{equation}\label{eq:normal_deriv}
    \frac{\partial}{\partial \delta}  \log \Gsn^\delta_d(x')
    =
    \frac{\partial}{\partial \delta}  \left(
      -\frac{d}{2} \log(2\pi) -d\delta - \frac{e^{-2\delta}}{2}\norm{x'}^2
    \right)
    = -d + e^{-2\delta}\norm{x'}^2.
\end{equation}

   We then have that:
   \begin{align}
      \varrho(x, \delta) &= p_\delta(x') \times \Gsn_{D-d}^\delta (0) \\
      \implies \log \varrho(x, \delta) &= \log p_\delta(x') -\delta(D-d) + c_0 \\
      \implies \frac{\partial}{\partial \delta} \log \varrho(x, \delta) &= \frac{\partial}{\partial \delta} \log p_\delta(x') -(D-d),
   \end{align}
   where $c_0$ is a constant that does not depend on $\delta$.
   Thus, it suffices to show that $\lim_{\delta \to -\infty} \frac{\partial}{\partial \delta} \log p_\delta(x') = 0$:
   \begin{align}
    \lim_{\delta \to -\infty} \frac{\partial}{\partial \delta} \log p_\delta(x')
    &=
    \lim_{\delta \to - \infty} \frac{\frac{\partial}{\partial \delta} p_\delta(x')}{p_\delta(x')}\\
    &=
    \lim_{\delta \to -\infty} \frac{\frac{\partial}{\partial \delta} \int_{\reals^d} \prob(x'-y) \Gsn^\delta_d(y) \rd y }{\int_{\reals^d} \prob(x'-y) \Gsn^\delta_d(y) \rd y}\\
    &=
    \lim_{\delta \to -\infty} \frac{ \int_{\reals^d} \prob(x'-y) \frac{\partial}{\partial \delta}\Gsn^\delta_d(y) \rd y }{\int_{\reals^d} \prob(x-y) \Gsn^\delta_d(y) \rd y}\label{eq:swap}\\
    & = 
    \lim_{\delta \to -\infty} \frac{ \int_{\reals^d} \prob(x'-y) (-d + e^{-2\delta}\norm{y}^2)\Gsn^\delta_d(y) \rd y }{\int_{\reals^d} \prob(x'-y) \Gsn^\delta_d(y) \rd y} \label{eq:step1}\\
    &=
    - d + \lim_{\delta \to -\infty} \frac{ e^{-2\delta}\int_{\reals^d} \prob(x'-y) \norm{y}^2 \Gsn^\delta_d(y) \rd y }{\int_{\reals^d} \prob(x'-y) \Gsn^\delta_d(y) \rd y},
 \end{align}
 where we exchanged the order of derivation and integration in
 \autoref{eq:swap} using Leibniz integral rule (because the
 normal distribution, its derivative, and $p$ are continuous; note that $p$
 does not depend on $\delta$ so regularity on its derivative is not necessary),
 and where \autoref{eq:step1} follows from
 \autoref{eq:normal_deriv}. Thus, proving that
 \begin{equation}\label{eq:mid}
  \lim_{\delta \to -\infty} \frac{ e^{-2\delta}\int_{\reals^d} \prob(x'-y) \norm{y}^2 \Gsn^\delta_d(y) \rd y }{\int_{\reals^d} \prob(x'-y) \Gsn^\delta_d(y) \rd y} = d
\end{equation}
would finish our proof.

Now let $\epsilon > 0$. By continuity of $\prob$ at $x'$, there exists $\xi > 0$ such
that if $\norm{y} < \xi$, then $\abs{\prob(x'-y)-\prob(x')} < \eps$. Let
$\Delta$ be the corresponding $\Delta$ from \hyperref[lem]{Lemma B.1}. Assume
$\delta < \Delta$ and define $c_1$ and $c_2$ as follows:
\begin{equation}
    c_1 = \int_{B^c_\xi(0)} \prob(x'-y) \Gsn^\delta_d(y) \rd y,\quad
    \text{and}\quad
    c_2 = \int_{B^c_\xi(0)} \prob(x'-y) \norm{y}^2 (\eps e^{2\delta})^{-1}\Gsn^\delta_d(y) \rd y.
\end{equation}
From \hyperref[lem]{Lemma B.1} and $p$ having finite second moments it follows that
 $c_1 \in [0,1]$ and that ${c_2 \in \left[0, \int_{\reals^d} \norm{y}^2 \prob(x'-y)dy \right]}$. We have:
\begin{align}
 &\frac{ e^{-2\delta}\int_{\reals^d} \prob(x'-y) \norm{y}^2 \Gsn^\delta_d(y) \rd y }{\int_{\reals^d} \prob(x'-y) \Gsn^\delta_d(y) \rd y} = 
% \lim_{\delta \to -\infty}
\frac{ e^{-2\delta}\int_{B_\xi(0)} \prob(x'-y) \norm{y}^2 \Gsn^\delta_d(y) \rd y +c_2\eps}{\int_{B_\xi(0)} \prob(x'-y) \Gsn^\delta_d(y) \rd y + c_1 \eps} \\
&=
 \frac{ e^{-2\delta}\prob(x') \int_{B_\xi(0)} \norm{y}^2 \Gsn^\delta_d(y) \rd y +
      e^{-2\delta}\int_{B_\xi(0)} \left( \prob(x'-y) - \prob(x') \right)\norm{y}^2 \Gsn^\delta_d(y) \rd y
    +c_2\eps}{
    \prob(x')\int_{B_\xi(0)} \Gsn^\delta_d(y) \rd y + 
  \int_{B_\xi(0)} \left( \prob(x'-y)-\prob(x')\right) \Gsn^\delta_d(y) \rd y+ c_1 \eps}.\label{eq:c1_c2}
\end{align}
Analoguously to $c_1$ and $c_2$, there exists $c_3 \in [-1,1]$ and $c_4 \in
[-d, d]$ so that the \autoref{eq:c1_c2} is equal to:
\begin{equation}
  \frac{ e^{-2\delta} \int_{B_\xi(0)} \norm{y}^2 \Gsn^\delta_d(y) \rd y +\frac{(c_2+c_4)\eps}{\prob(x')}}{
 \int_{B_\xi(0)} \Gsn_d^\delta(y)\rd y + \frac{(c_1 + c_3)\eps}{\prob(x')}}
 =
  \frac{d - e^{-2\delta} \int_{B^c_\xi(0)} \norm{y}^2 \Gsn^\delta_d(y) \rd y +\frac{(c_2+c_4)\eps}{\prob(x')}}{
 1 - \int_{B^c_\xi(0)} \Gsn^\delta_d(y) \rd y + \frac{(c_1 + c_3)\eps}{\prob(x')}} \eqqcolon I.
\end{equation}
We still need to prove that $\lim_{\delta \to -\infty} I = d$. Taking $\limsup$ and $\liminf$ as $\delta \to -\infty$ yields:
\begin{equation}
  \frac{d +\frac{(c'_2+c'_4)\eps}{\prob(x')}}{1 + \frac{(c''_1 + c''_3)\eps}{\prob(x')}} \leq
  \liminf_{\delta \to -\infty} I \leq
  \limsup_{\delta \to -\infty} I\leq
  \frac{d +\frac{(c''_2+c''_4)\eps}{\prob(x')}}{1 + \frac{(c'_1 + c'_3)\eps}{\prob(x')}},
\end{equation}
where $c''_i = \limsup_{\delta \to -\infty} c_i$ and $c'_i = \liminf_{\delta \to -\infty} c_i$. 
Note that although the values of $c'_i$ and $c''_i$ depend on $\eps$ and $\xi$, the bounds on them do not. 
We can thus take the limit of this inequality as $\xi$ and $\eps$ approach zero:
\begin{equation}
  d \leq
  \lim_{\eps, \xi \to 0}\liminf_{\delta \to -\infty} I \leq
  \lim_{\eps, \xi \to 0}\limsup_{\delta \to -\infty} I\leq
  d.
\end{equation}
However, note that every step up to here has been an equality, therefore
\begin{equation}
    I = \frac{ e^{-2\delta}\int_{\reals^d} \prob(x'-y) \norm{y}^2 \Gsn^\delta_d(y) \rd y }{\int_{\reals^d} \prob(x'-y) \Gsn^\delta_d(y) \rd y},
\end{equation}
so that $I$ does not depend on $\eps$ nor on $\xi$. In turn, this implies that
\begin{equation}
  d \leq
  \liminf_{\delta \to -\infty} I \leq
  \limsup_{\delta \to -\infty} I\leq
  d \implies \lim_{\delta \to -\infty} I = d,
\end{equation}
which finishes the proof.
\end{proof}

\clearpage 
\section{Adapting \method~for DDPMs} \label{appx:ddpm_formulations}

Here, we adapt \method~for state-of-the-art DDPM architectures and follow the discretized notation from \citet{ho2020denoising} where instead of using a continuous time index $t$ from $0$ to $1$, a timestep $\pirate{t}$ belongs instead to the sequence $\{ 0, \hdots, \pirate{T} \}$ with $\pirate{T}$ being the largest timescale. We use the colour \pirate{gold} to indicate the notation used by \citet{ho2020denoising}. We highlight that the content of this section is a summary of the equivalence between DDPMs and the score-based formulation established by \citet{song2020score}.

\begin{table}\captionsetup{font=footnotesize}
    \footnotesize
    \centering
    \vspace{-0.3cm}
    \caption{Comparing the discretized DDPM notation with score-based DM side-by-side.}
    \begin{tabularx}{\linewidth}{lXX}
        \toprule
        {Term} & {DDPM \cite{ho2020denoising}}  & {Score-based DM \cite{song2020score}} \\ 
        \midrule
        Timestep & $\pirate{t} \in \{0, 1, \hdots, \pirate{T}\}$  & $\pirate{t}/\pirate{T} = t \in [0, 1]$ \\
        (Noised out) datapoint & $\pirate{x_t}$  & $\x_{\pirate{t}/\pirate{T}} = x_t$ \\
        Diffusion process hyperparameter & $\pirate{\beta_t}$  & $\beta(\pirate{t}/\pirate{T}) = \beta(t)$ \\
        Mean of transition kernel & $\sqrt{\pirate{\bar{\alpha}_t}}$ & $\meanfn(\pirate{t}/\pirate{T}) = \psi(t)$\\
        Std of transition kernel & $\sqrt{1 - \pirate{\bar{\alpha}_t}}$ & $\sigma(\pirate{t}/\pirate{T}) = \sigma(t)$\\
        Network parameterization & $-\pirate{\epsilon}(x, \pirate{t}) /  \sqrt{1 - \pirate{\bar{\alpha}_t}}$ & $\hat{s}(x, \pirate{t}/\pirate{T}) = \hat{s}(x, t)$\\ 
        % LID with \method & $\pirate{\method}(\cdot, \pirate{t})$ & $\method(\cdot, \pirate{t}/T)$ \\
        \bottomrule
    \end{tabularx}
    \vspace{-0.3cm}
    \label{tab:ddpm_vs_score}
\end{table}

As a reminder, DDPMs can be viewed as discretizations of the \textit{forward} SDE process of a DM, where the process turns into a Markov \textit{noising} process:
\begin{equation}
    \pirate{p}(\pirate{\x_t} \mid \pirate{\x_{t-1}}) \coloneqq \mathcal{N}(\pirate{\x_t}; \sqrt{1 - \pirate{\beta_t}} \cdot \pirate{x_{t-1}}, \pirate{\beta_t}I_D).
\end{equation}
We also use sub-indices $\pirate{t}$ instead of functions evaluated at $t$ to keep consistent with \citet{ho2020denoising}'s notation. This in turn implies the following transition kernel:
\begin{equation}
    \pirate{p}(\pirate{x_t} \mid x_0) = \mathcal{N}(\pirate{x_t}; \sqrt{\pirate{\bar{\alpha}_t}} x_0, (1 - \pirate{\bar{\alpha}_t}) I_D)
\end{equation}
where $\pirate{\alpha_t} \coloneqq 1 - \pirate{\beta_t}$ and $\pirate{\bar{\alpha}_t} \coloneqq \prod_{s=1}^{\pirate{t}} \pirate{\alpha_t}$.

DDPMs model the \textit{backward} diffusion process (or \textit{denoising} process) by modelling a network $\pirate{\epsilon}: \R^D \times \{1, \hdots, \pirate{T}\} \to \R^D$ that takes in a noised-out point and outputs a residual that can be used to denoise. 
\citet{song2020score} show that one can draw an equivalence between the network $\pirate{\epsilon}(\cdot, \pirate{t})$ and the score network (see their Appendix B).  Here, we rephrase the connections in a more explicit manner where we note that:
\begin{equation} \label{eq:equivalence}
    -\pirate{\epsilon}(x, \pirate{t}) / \sqrt{1 - \pirate{\bar{\alpha}_t}} = s(x, \pirate{t}/\pirate{T}).
\end{equation}
Consequently, plugging into \autoref{eq:explicit_vpsde}, we get the following formula adapted for DDPMs:
\begin{align} \label{eq:method_for_ddpm_optimized}
        \method(\x, t_0) &  = D - \sqrt{1 - \pirate{\bar{\alpha}_{t_0}}} \tr \left(\nabla \pirate{\epsilon}(\sqrt{\pirate{\bar{\alpha}_{t_0}}} \x, \pirate{t_0})\right) + \Vert \pirate{\epsilon}(\sqrt{\pirate{\bar{\alpha}_{t_0}}} \x, \pirate{t_0})\Vert_2^2,
\end{align}
where $\pirate{t_0} = t_0 \times \pirate{T}$ (best viewed in colour). 
We include \autoref{tab:ddpm_vs_score}, which summarizes all of the equivalent notation when moving from DDPMs to score-based DMs and vice-versa.

\clearpage
\section{Experimental Details}\label{app:details}

Throughout all our experiments, we used an NVIDIA A100 GPU with 40GB of memory.

\begin{table}[t]\captionsetup{font=footnotesize}
\centering
\caption{Essential hyperparameter settings for the Diffusion models with MLP backbone.}
\label{tab:hyperparams_mlp_unet}
\begin{tabularx}{0.9\textwidth}{p{3.9cm}>{\centering\arraybackslash}m{8.2cm}}
\toprule
Property & Model Configuration \\
\midrule
Learning rate & $10^{-4}$ \\
Optimizer & \texttt{AdamW} \\
Scheduler & Cosine scheduling with $500$ warmup steps \cite{von-platen-etal-2022-diffusers}\\
Epochs & $200$, $400$, $800$, or $1000$ based on the ambient dimension \\
\midrule
Score-matching loss & Likelihood weighting \cite{song2021maximum}\\
SDE drift & $f(\x, t) \coloneqq -\frac{1}{2} \beta(t) \x$\\
SDE diffusion & $g(t) \coloneqq \sqrt{\beta(t)}$\\
$\beta(t)$ & Linear interpolation: $\beta(t) \coloneqq 0.1 + 20t$\\
\midrule
Score network & MLP\\
MLP hidden sizes & $\langle4096, 2048, 2 \times 1024, 3 \times 512, 2\times 1024, 2048, 4096\rangle$\\
Time embedding size & $128$ \\
\bottomrule
\end{tabularx}
\end{table}

\subsection{DM Hyperparameter Setup} \label{appx:dm_hyperparams}

To mimic a UNet, we use an MLP architecture with a bottleneck as our score network for our synthetic experiments. This network contains $2\times L + 1$ fully connected layers with dimensions $\langle h_1, h_2, \hdots, h_L, \hdots h_{2L+1} \rangle$ forming a bottleneck, i.e., $h_L$ has the smallest size.
Notably, for $1 \le i \le L$, the $i$th transform connects layer $i-1$ (or the input) to layer $i$ with a linear transform of dimensions ``$h_{i-1} \times h_i$'', and the $(L + i)$th layer (or the output) not only contains input from the $(L+i-1)$th layer but also, contains skip connections from layer $(L - i)$ (or the input), thus forming a linear transform of dimension ``$(h_{L+i-1} + h_{L-i}) \times h_{L+i}$''. For image experiments, we scale and shift the pixel intensities to be zero-centered with a standard deviation of $1$.
In addition, we embed times $t \in (0, 1)$ using the scheme in \cite{von-platen-etal-2022-diffusers} and concatenate with the input before passing to the score network.
All hyperparameters are summarized in \autoref{tab:hyperparams_mlp_unet}.

\subsection{\method~Estimates and Curves for Synthetic Distributions} \label{appx:lid_curve_synthetic}

\begin{wrapfigure}{r}{0.4\textwidth}
    \vspace{-0.33cm}
    \includegraphics[width=\linewidth]{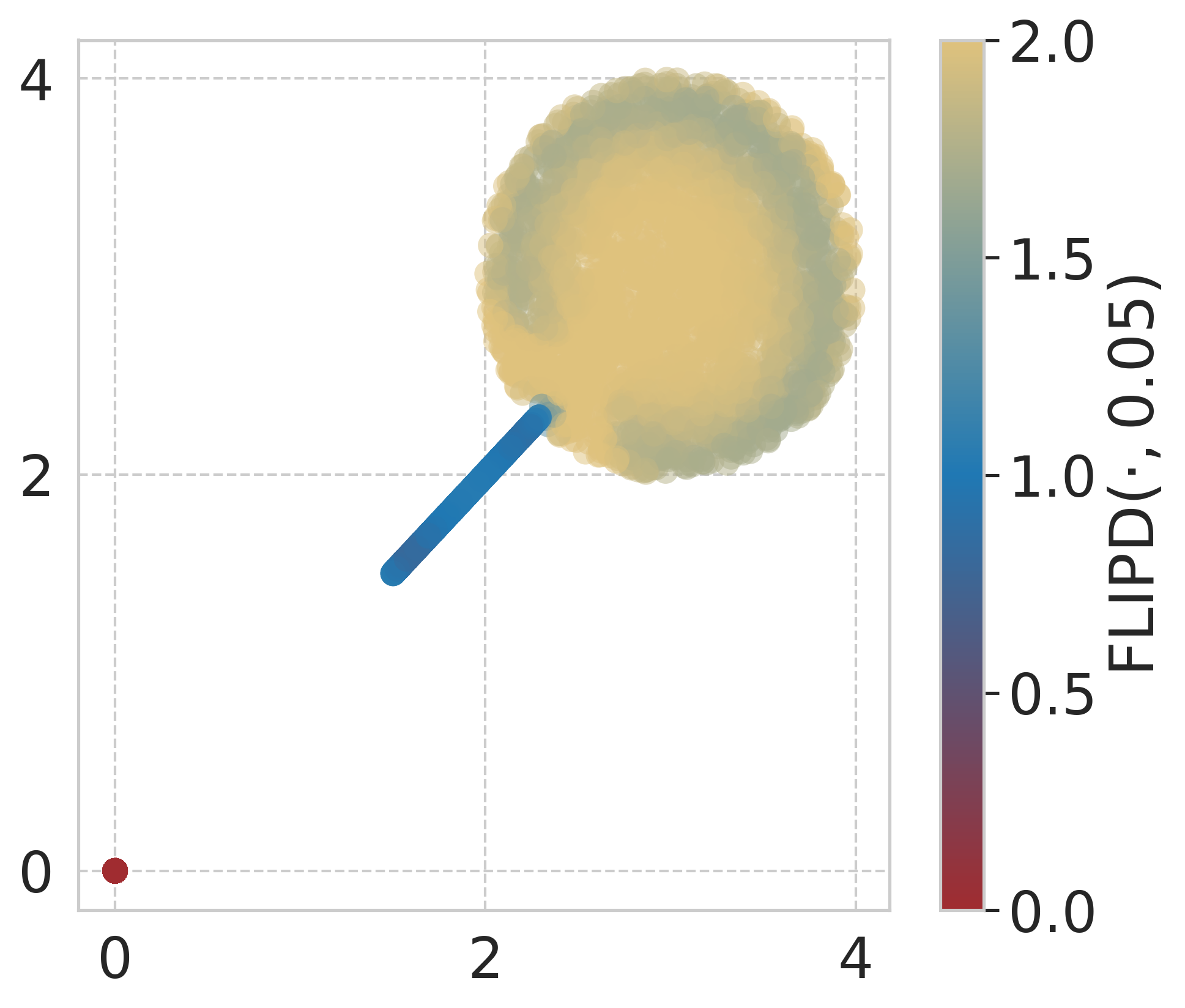}
    \caption{The \method~estimates on a Lollipop dataset from \cite{tempczyk2022lidl}.}
    \label{fig:lollipop_scatterplot}
    \vspace{-0.4cm}
\end{wrapfigure}

\autoref{fig:lollipop_scatterplot} shows pointwise LID estimates for a ``lollipop'' distribution taken from \cite{tempczyk2022lidl}.  
It is a uniform distribution over three submanifolds: $(i)$ a 2d ``candy'', $(ii)$ a 1d ``stick'', and $(iii)$ an isolated point of zero dimensions. Note that the \method~estimates at $t_0=0.05$ for all three submanifolds are coherent.

\autoref{fig:lid_curve_evolution} shows the \method~curve as training progresses on the lollipop example and the Gaussian mixture that was already discussed in \autoref{sec:experiments}. We see that gradually, knee patterns emerge at the correct LID, indicating that the DM is learning the data manifold. Notably, data with higher LID values get assigned higher estimates \textit{even after a few epochs}, demonstrating that \method~effectively ranks data based on LID, even when the DM is underfitted.

Finally, \autoref{fig:lid_curve_complex} presents a summary of complex manifolds obtained from neural spline flows and high-dimensional mixtures, showing knees around the true LID.

\begin{figure*}[!h]\captionsetup{font=footnotesize, labelformat=simple, labelsep=colon}
    \centering
    \begin{subfigure}{0.30\textwidth}
        \centering
        \includegraphics[width=\linewidth]{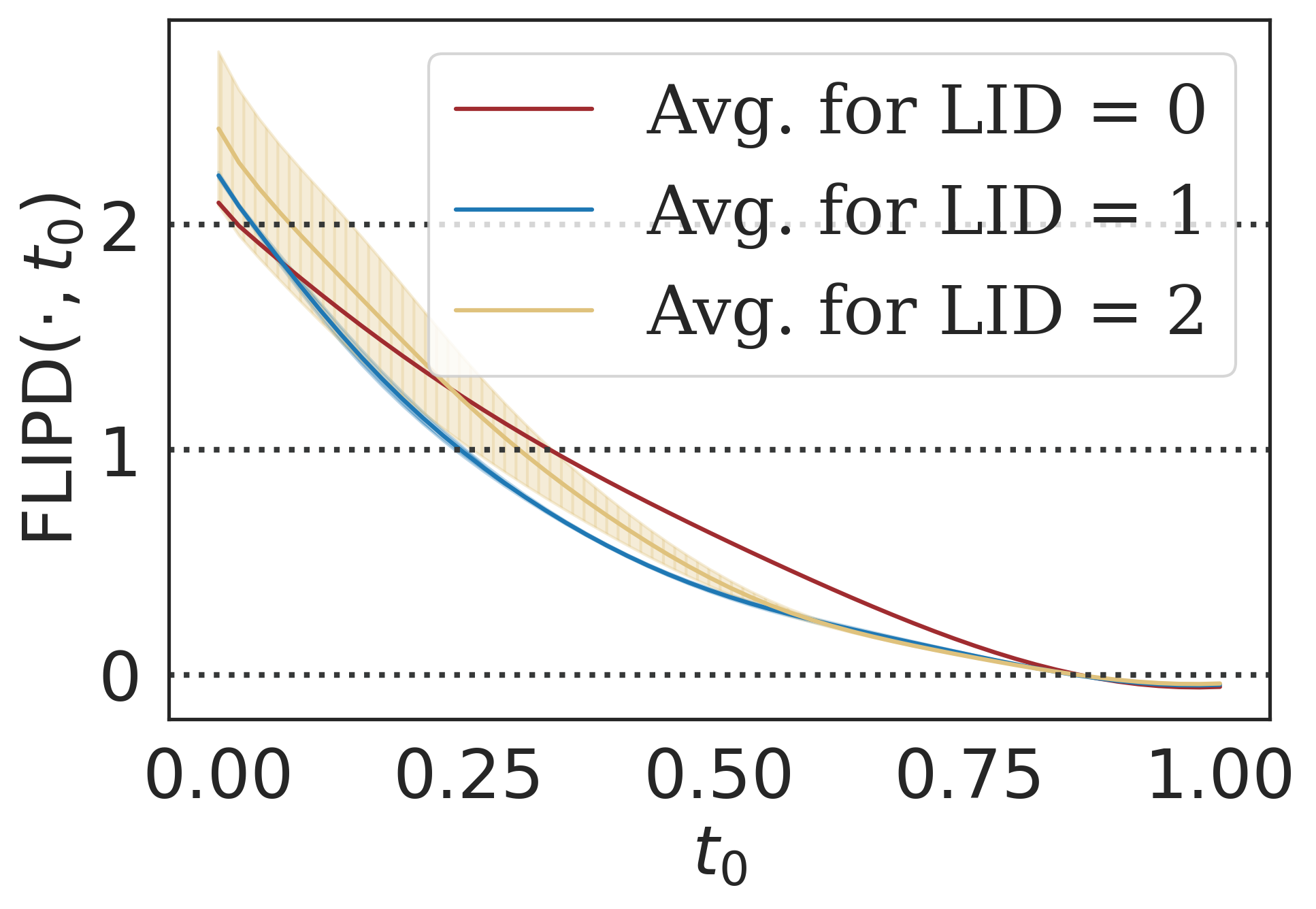}
        \vspace{-0.5cm}
        \subcaption{After 10 epochs.}
    \end{subfigure}%
    \hfill
    \begin{subfigure}{0.30\textwidth}
        \centering
        \includegraphics[width=\linewidth]{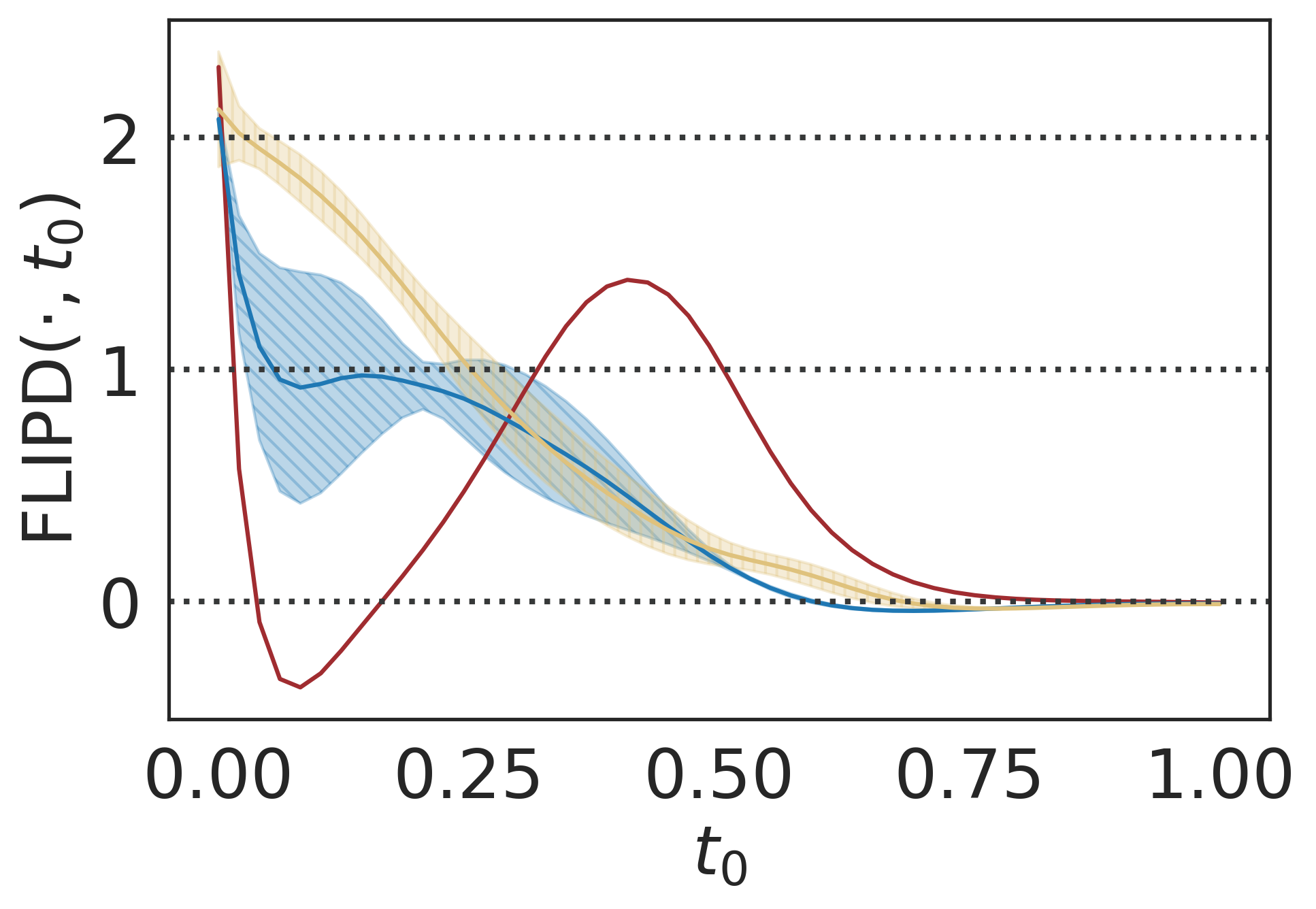}
        \vspace{-0.5cm}
        \subcaption{After 20 epochs.}
    \end{subfigure}%
    \hfill
    \begin{subfigure}{0.30\textwidth}
        \centering
        \includegraphics[width=\linewidth]{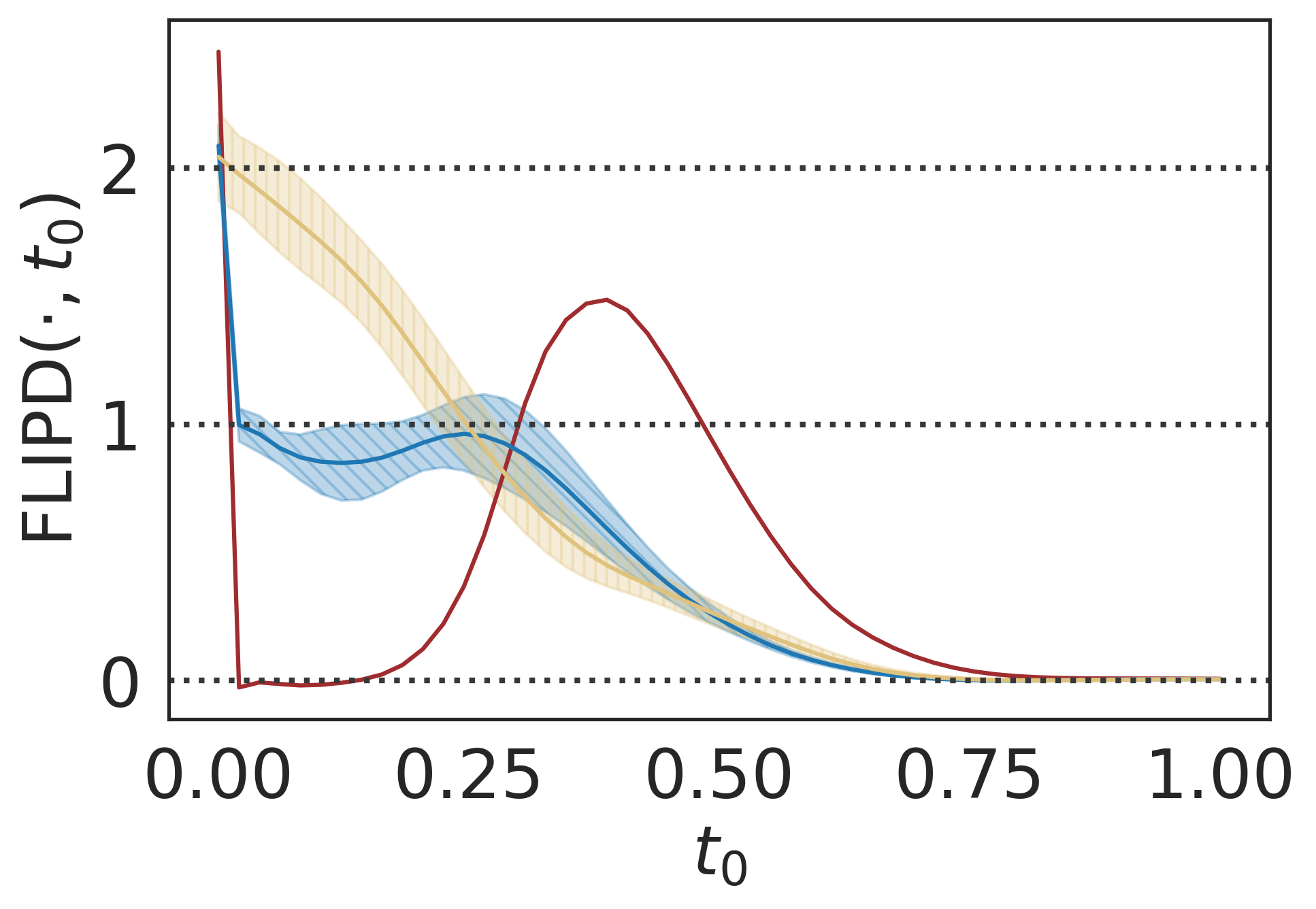}
        \vspace{-0.5cm}
        \subcaption{After 100 epochs.}
    \end{subfigure}%
    \\
    \begin{subfigure}{0.30\textwidth}
        \centering
        \includegraphics[width=\linewidth]{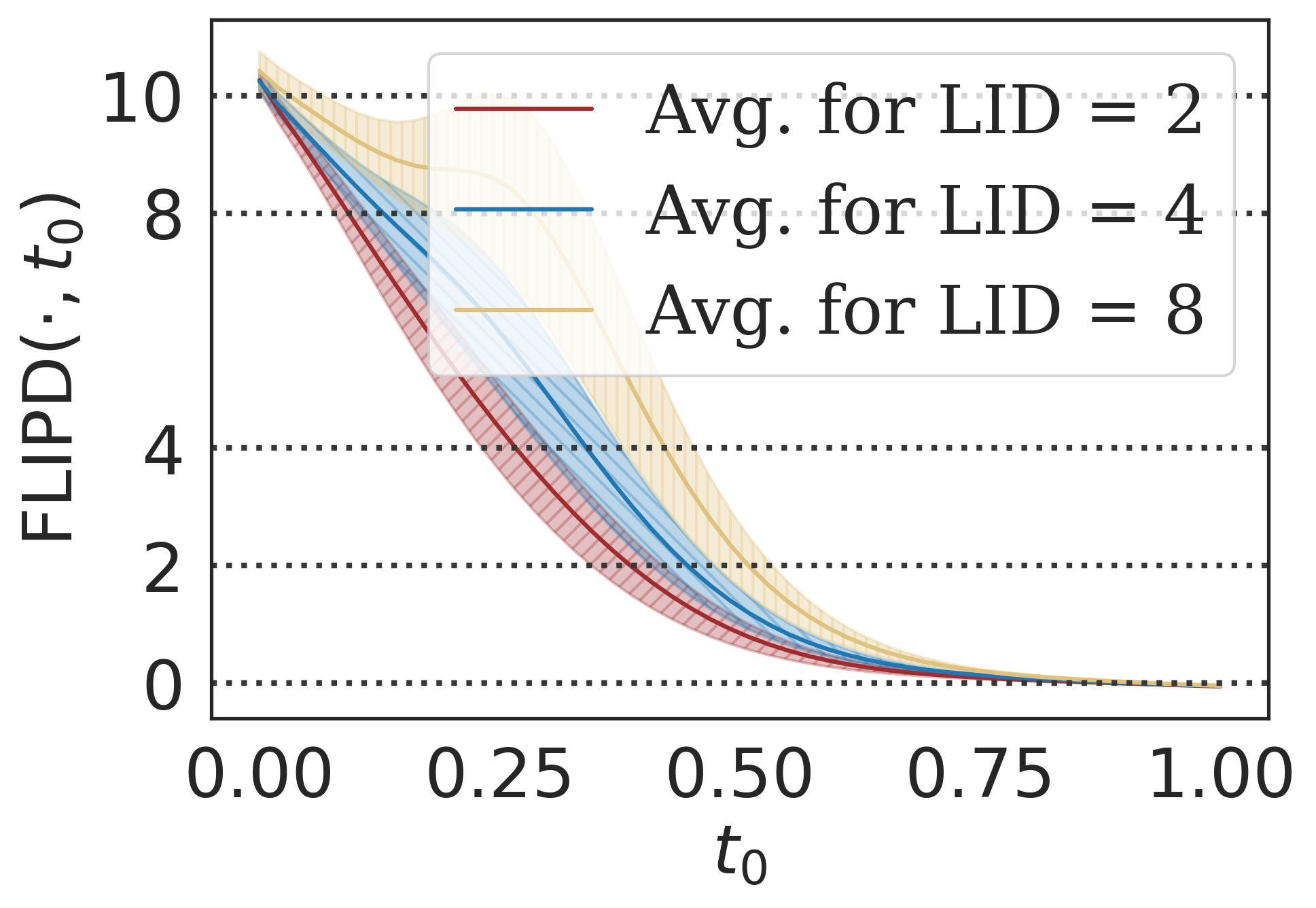}
        \vspace{-0.5cm}
        \subcaption{After 10 epochs.}
    \end{subfigure}%
    \hfill
    \begin{subfigure}{0.30\textwidth}
        \centering
        \includegraphics[width=\linewidth]{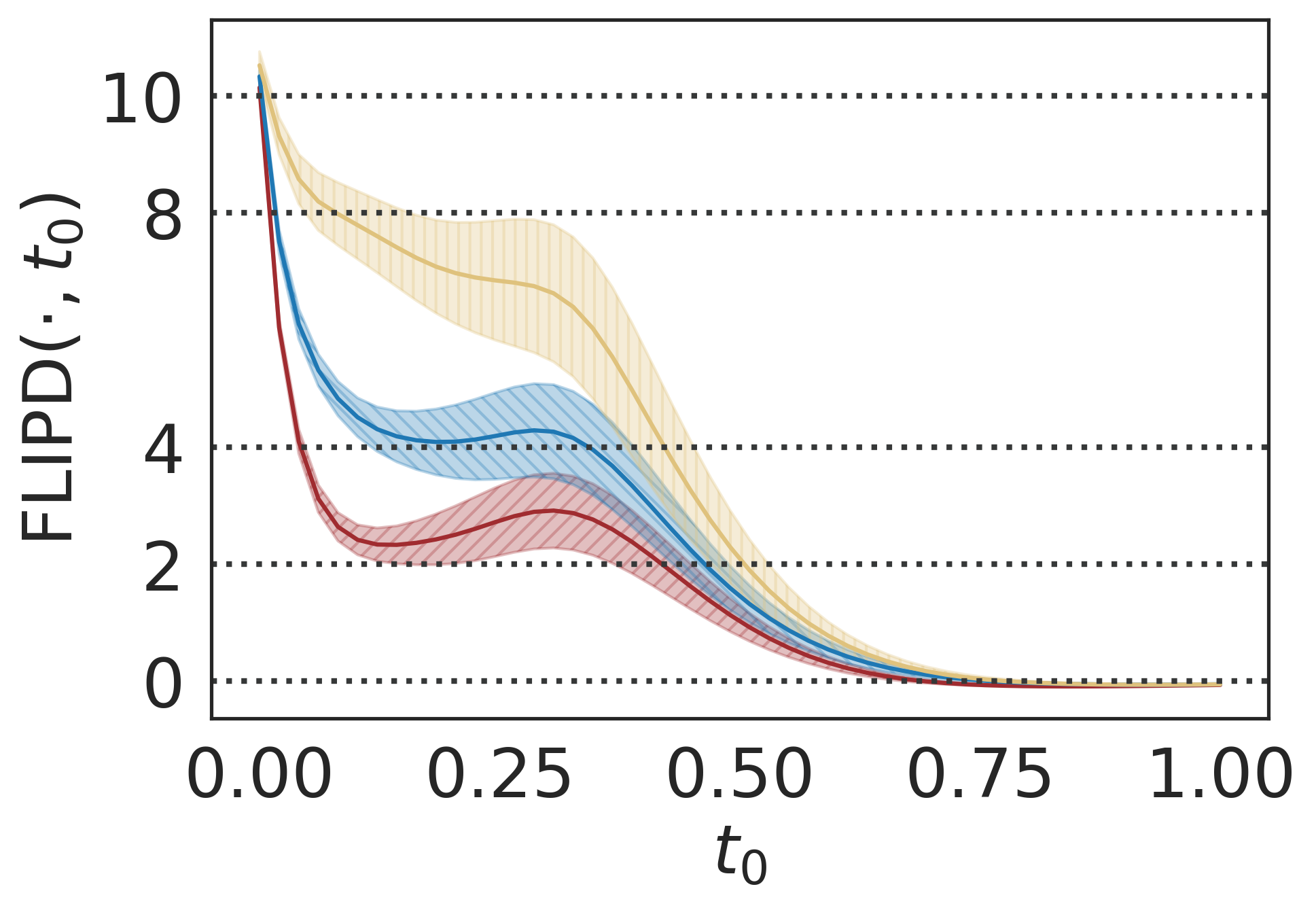}
        \vspace{-0.5cm}
        \subcaption{After 20 epochs.}
    \end{subfigure}%
    \hfill
    \begin{subfigure}{0.30\textwidth}
        \centering
        \includegraphics[width=\linewidth]{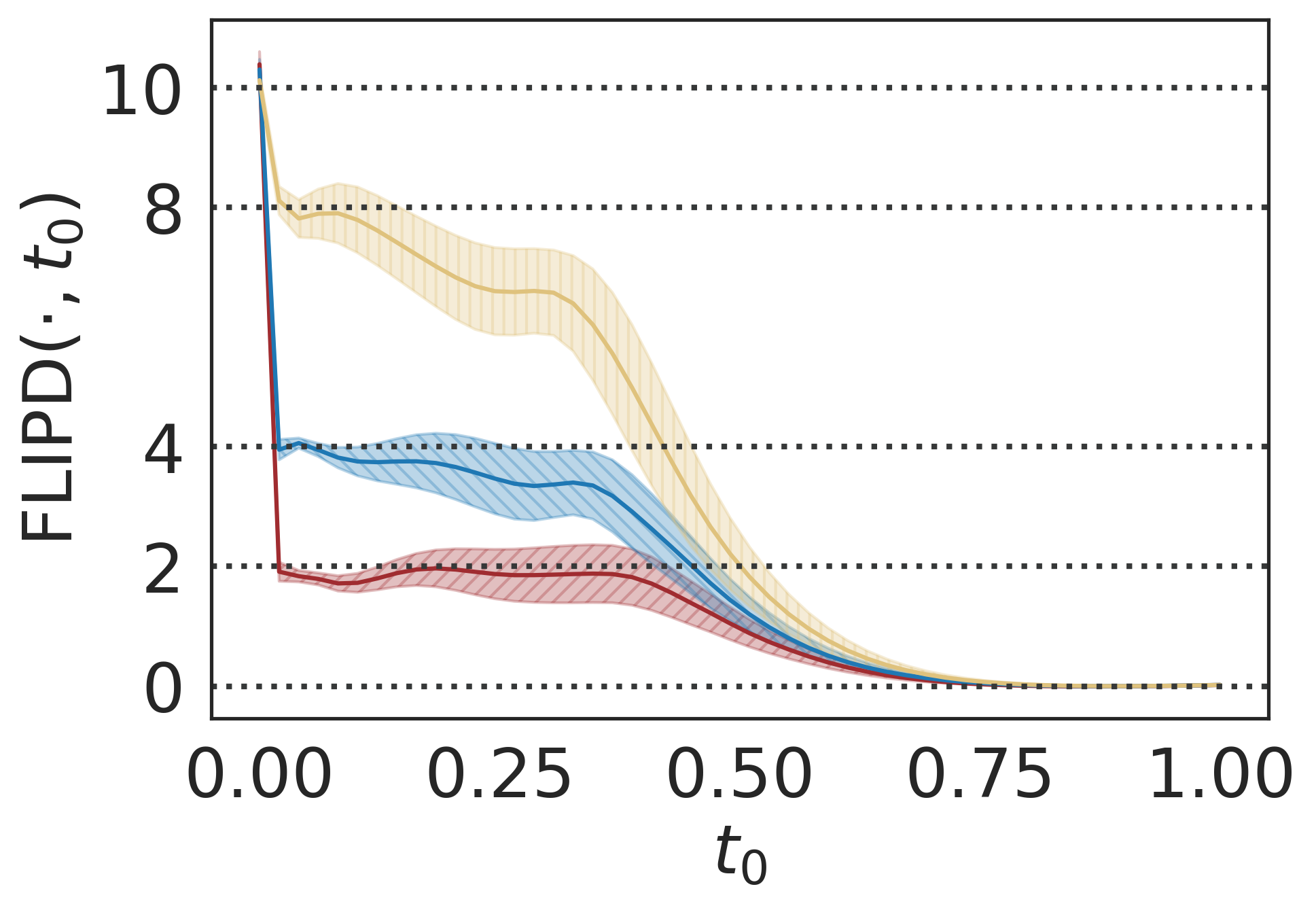}
        \vspace{-0.5cm}
        \subcaption{After 100 epochs.}
    \end{subfigure}
    \caption{The evolution of the \method~curve while training the DM to fit a Lollipop (top) and a manifold mixture $\Ncal_{2} + \Ncal_4 + \Ncal_8 \subseteq \R^{10}$ (bottom).}
    \vspace{-0.1cm}
    \label{fig:lid_curve_evolution}
\end{figure*}

\begin{figure*}[!h]\captionsetup{font=footnotesize, labelformat=simple, labelsep=colon}
    \centering
    
    \begin{subfigure}{0.33\textwidth}
        \centering
        \includegraphics[width=\linewidth]{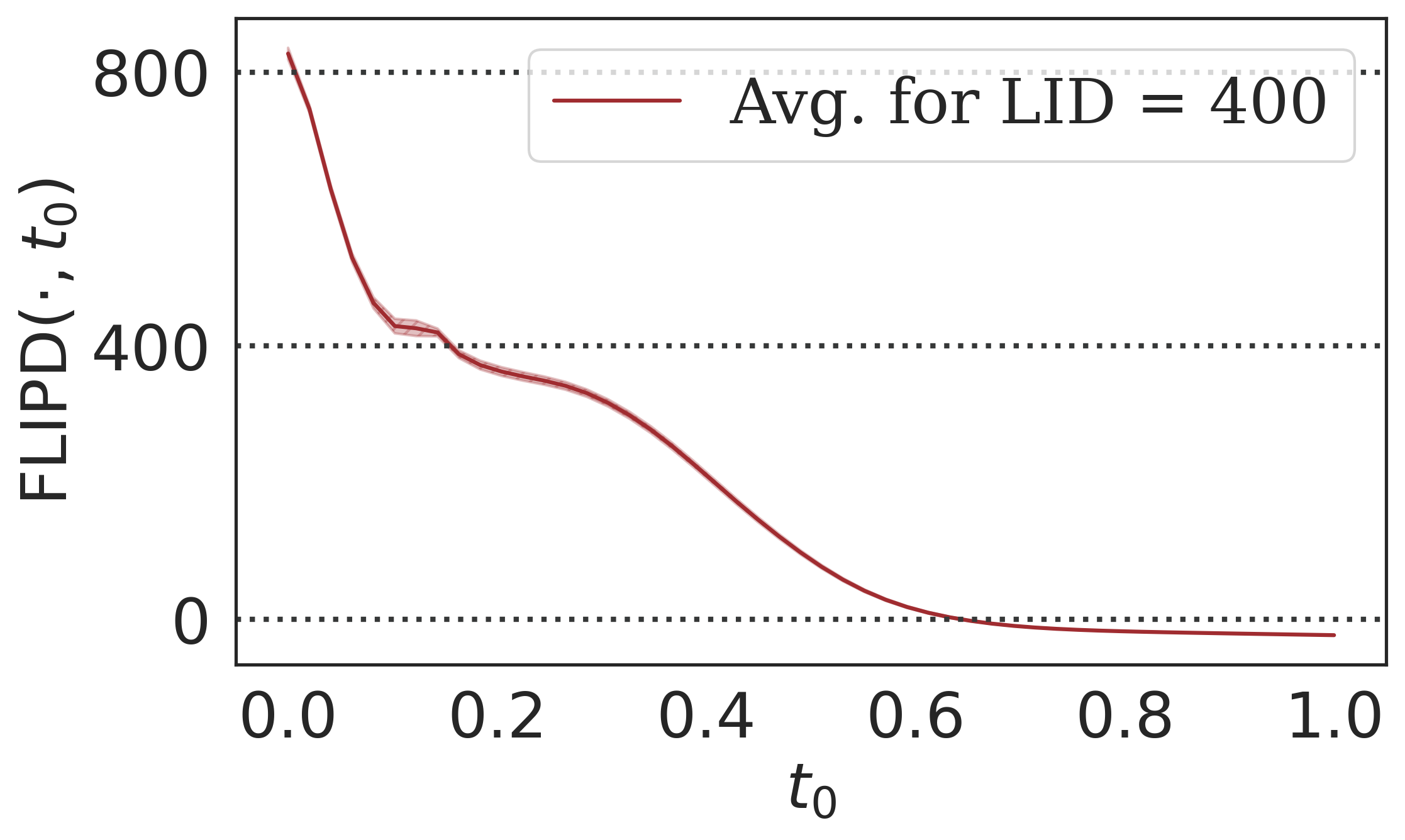}
        \vspace{-0.5cm}
        \subcaption{~~$\Fcal_{400} \subseteq \R^{800}$.}
    \end{subfigure}%
    \hfill
    \begin{subfigure}{0.33\textwidth}
        \centering
        \includegraphics[width=\linewidth]{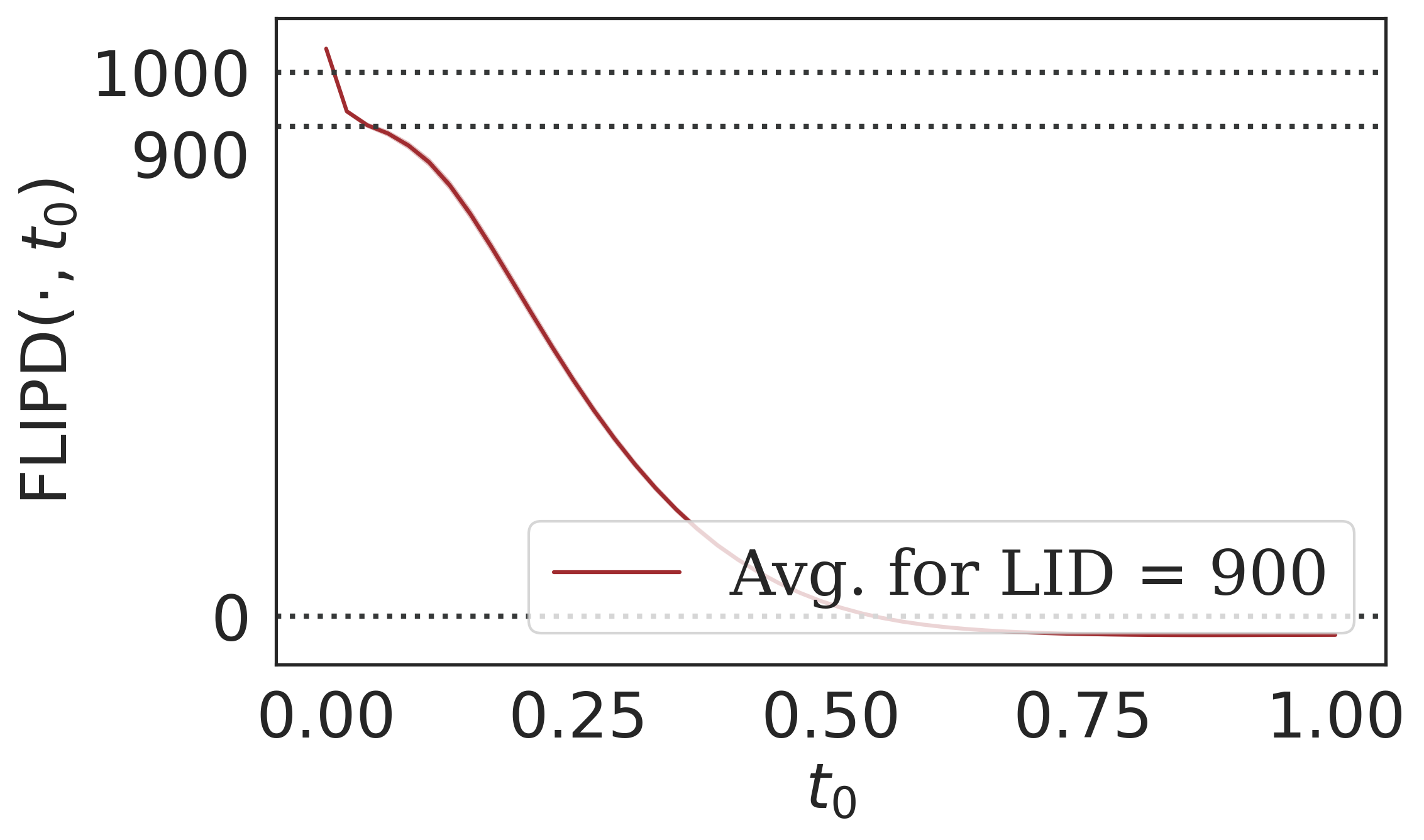}
        \vspace{-0.5cm}
        \subcaption{~~$\Ucal_{900} \subseteq \R^{1000}$.}
    \end{subfigure}%
    \hfill
    \begin{subfigure}{0.33\textwidth}
        \centering
        \includegraphics[width=\linewidth]{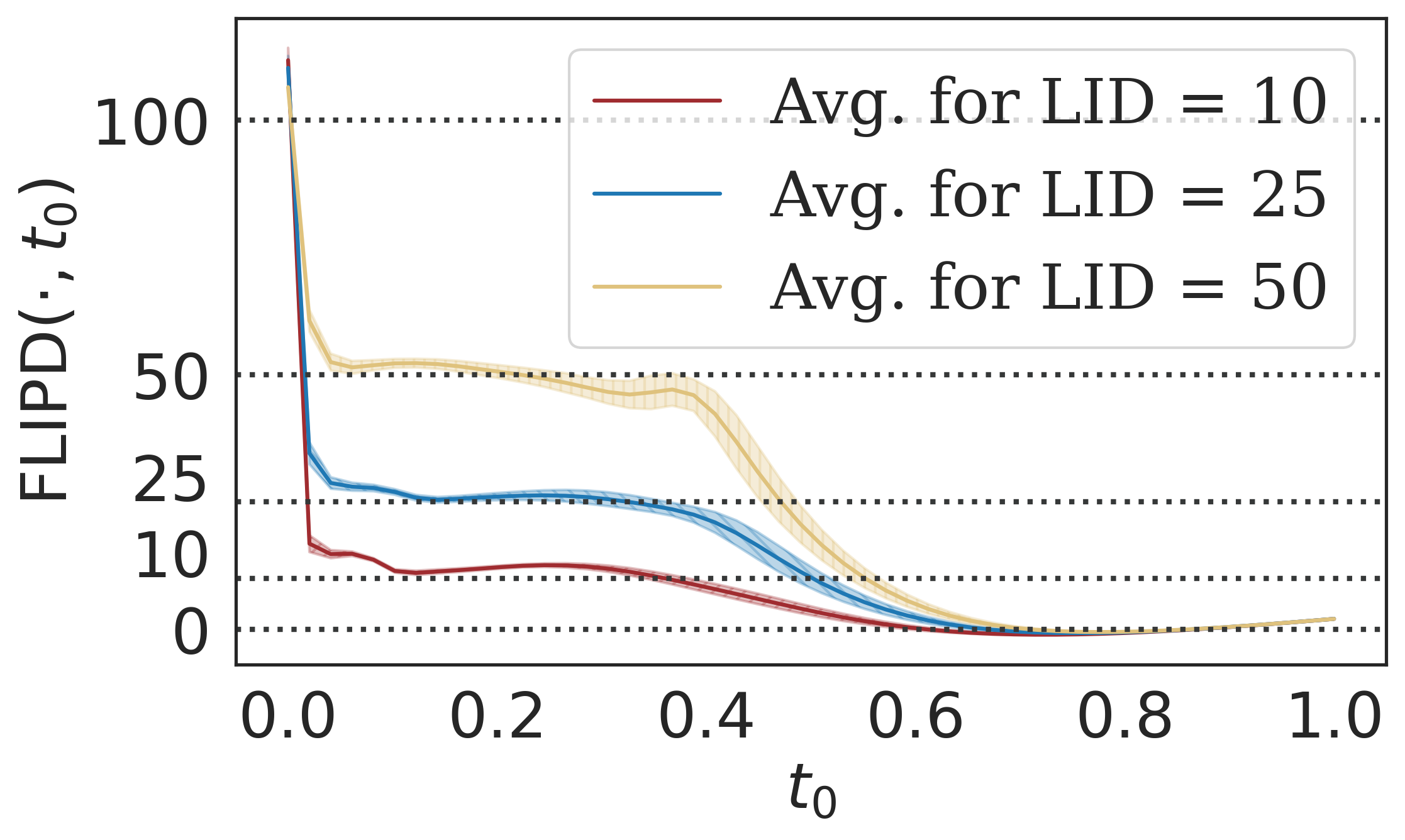}
        \vspace{-0.5cm}
        \subcaption{~~$\Ncal_{10} + \Ncal_{25} + \Ncal_{50} \subseteq \R^{100}$.}
    \end{subfigure}%
    \caption{The \method~curve for complex and high-dimensional manifolds.}
    \vspace{-0.2cm}
    \label{fig:lid_curve_complex}
\end{figure*}

\subsection{A Simple Multiscale Experiment} \label{appx:multiscale_explicit}

\citet{tempczyk2022lidl} argue that when setting $\delta$, all the directions of data variation that have a log standard deviation below $\delta$ are ignored. Here, we make this connection more explicit.

\begin{wrapfigure}{r}{0.33\textwidth}\captionsetup{font=footnotesize}
        \vspace{-0.5cm}
        \includegraphics[width=\linewidth]{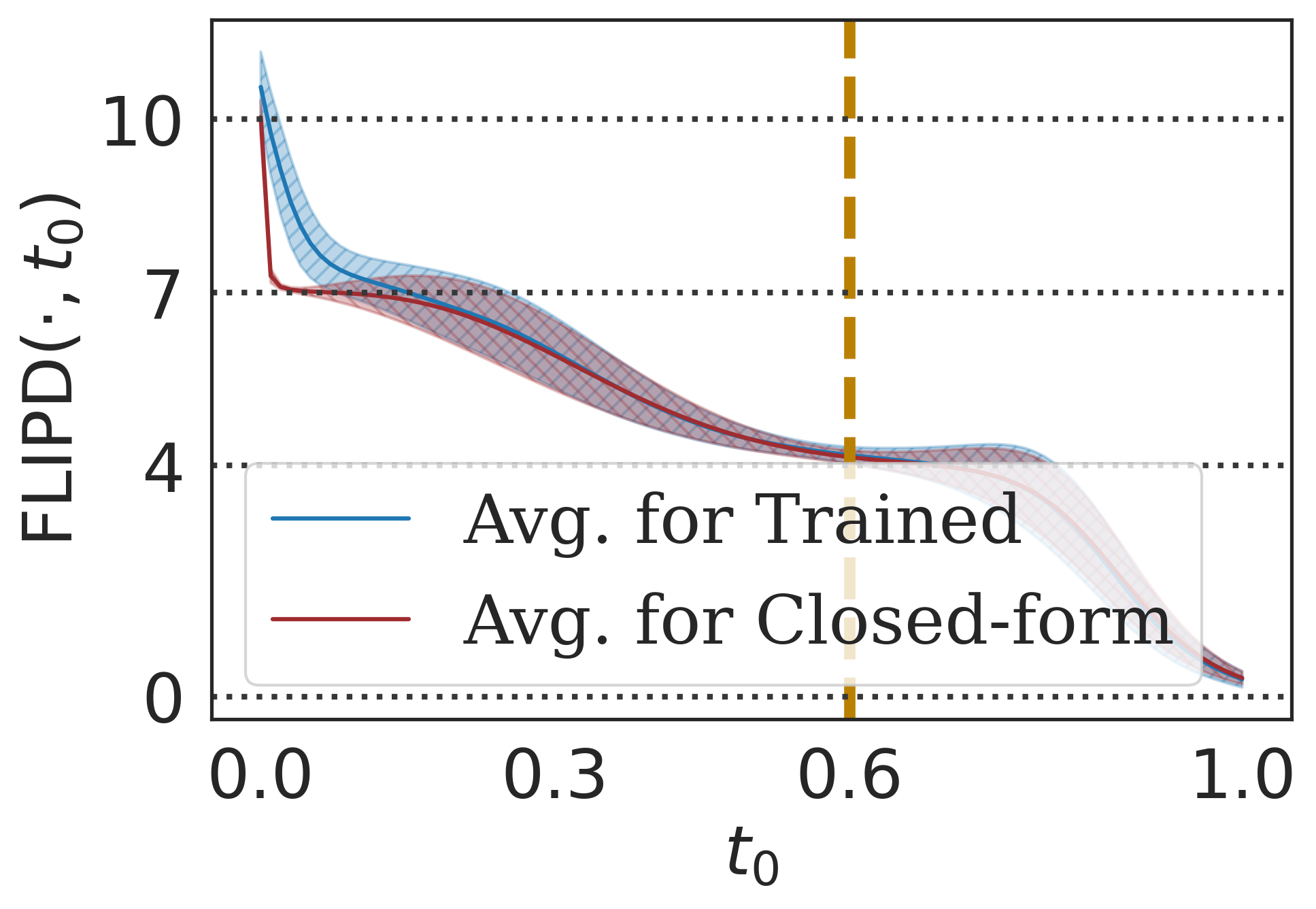}
        \caption{\method~curve for a multivariate Gaussian with controlled covariance eigenspectrum.}
        \label{fig:lid_curve_multiscale_gaussian}
        \vspace{-16pt}
\end{wrapfigure}

We define a multivariate Gaussian distribution with a prespecified eigenspectrum for its covariance matrix: having three eigenvalues of $10^{-4}$, three eigenvalues of $1$, and four eigenvalues of $10^3$. This ensures that the distribution is numerically 7d and that the directions and amount of data variation are controlled using the eigenvectors and eigenvalues of the covariance matrix.

For this multivariate Gaussian, the score function in \autoref{eq:rate_of_change} can be written in closed form; thus, we evaluate $\method$ both with and without training a DM. We see in \autoref{fig:lid_curve_multiscale_gaussian} the estimates obtained in both scenarios match closely, with some deviations due to imperfect model fit, which we found matches perfectly when training for longer.

Apart from the initial knee at $7$, which is expected, we find another at $t_0=0.6$ (corresponding to $e^{\delta} \approx 6.178$ with our hyperparameter setup in \autoref{tab:hyperparams_mlp_unet}) where the value of \method~is $4$. This indeed confirms that the estimator focuses solely on the 4d space characterized by the eigenvectors having eigenvalues of $10^3$, and by ignoring the eigenvalues $10^{-4}$ and $1$ which are both smaller than $6.178$.

\subsection{In-depth Analysis of the Synthetic Benchmark} \label{appx:synthetic_benchmark}

\paragraph{Generating Manifold Mixtures}
To create synthetic data, we generate each component of our manifold mixture separately and then join them to form a distribution. All mixture components are sampled with equal probability in the final distribution.
For a component with the intrinsic dimension of $d$, we first sample from a base distribution in $d$ dimensions. 
This base distribution is isotropic Gaussian and Laplace for $\Ncal_d$ and $\Lcal_d$ and uniform for the case of $\Ucal_d$ and $\Fcal_d$. We then zero-pad these samples to match the ambient dimension $D$ and perform a random $D \times D$ rotation on $\R^D$ (each component has one such transformation).
For $\Fcal_d$, we have an additional step to make the submanifold complex. We first initialize a neural spline flow (using the \texttt{nflows} library \cite{nflows}) with 5 coupling transforms, 32 hidden layers, 32 hidden blocks, and tail bounds of 10. The data is then passed through the flow, resulting in a complex manifold embedded in $\R^D$ with an LID of $d$. 
Finally, we standardize each mixture component individually and set their modes such that the pairwise Euclidean distance between them is at least 20. We then translate each component so its barycenter matches the corresponding mode, ensuring that data from different components rarely mixes, thus maintaining distinct LIDs.

\paragraph{LIDL Baseline} 
Following the hyperparameter setup in \cite{tempczyk2022lidl}, we train $8$ models with different noised-out versions of the dataset with standard deviations $e^{\delta_i} \in \{ 0.01, 0.014, 0.019, 0.027, 0.037, 0.052, 0.072, 0.1 \}$.
The normalizing flow backbone is taken from \cite{nflows}, using 10 piecewise rational quadratic transforms with 32 hidden dimensions, 32 blocks, and a tail bound of 10. While \cite{tempczyk2022lidl} uses an autoregressive flow architecture, we use coupling transforms for increased training efficiency and to match the training time of a single DM.

\paragraph{Setup}
We use model-free estimators from the \texttt{skdim} library \cite{bac2021} and across our experiments, we sample $10^6$ points from the synthetic distributions for either fitting generative models or fitting the model-free estimators. We then evaluate LID estimates on a uniformly subsampled set of $2^{12}$ points from the original set of $10^6$ points. Some methods are relatively slow, and this allows us to have a fair, yet feasible comparison. We do not see a significant difference even when we double the size of the subsampled set.
We focus on four different model-free baselines for our evaluation: ESS \cite{johnsson2014low}, LPCA \cite{fukunaga1971algorithm, cangelosi2007component}, MLE \cite{levina2004maximum, mackay2005comments}, and FIS \cite{albergante2019estimating}. 
We use default settings throughout, except for high-dimensional cases ($D > 100$), where we set the number of nearest neighbours $k$ to $1000$ instead of the default $100$. This adjustment is necessary because we observed a significant performance drop, particularly in LPCA, when $k$ was too small in these scenarios.
We note that this adjustment has not been previously implemented in similar benchmarks conducted by \citet{tempczyk2022lidl}, despite their claim that their method outperforms model-free alternatives.
We also note that FIS does not scale beyond $100$ dimensions, and thus leave it blank in our reports. Computing pairwise distances on high dimensions ($D \ge 800$) on all $10^6$ samples takes over 24 hours even with $40$ CPU cores. Therefore, for $D \ge 800$, we use the same $2^{12}$ subsamples we use for evaluation.

\paragraph{Evaluation}
We have three tables to summarize our analysis: 
$(i)$ \autoref{tab:synthetic-results} shows the MAE of the LID estimates, comparing each datapoint's estimate to the ground truth at a fine-grained level; 
$(ii)$ \autoref{tab:synthetic_results_ID} shows the average LID estimate for synthetic manifolds with only one component, this average is typically used to estimate \textit{global} intrinsic dimensionality in baselines; 
finally, $(iii)$ \autoref{tab:synthetic_results_concordance_index} looks at the concordance index \cite{harrell1996multivariable} of estimates for cases with multiple submanifolds of different dimensionalities. Concordance indices for a sequence of estimates $\{\widehat{\LID}\}_{n=1}^{N}$ are defined as follows:
\begin{equation} \label{eq:concordance_index}
    \mathcal{C}\left(\{ \LID_n \}_{n=1}^N, \{ \widehat{\LID}_n \}_{n=1}^N \right) = \sum_{\substack{1 \le n_1 \neq n_2 \le N \\ \LID_{n_1} \le \LID_{n_2}}} \mathbb{I}(\widehat{\LID}_{n_1} \le \widehat{\LID}_{n_2}) / \binom{N}{2},
\end{equation}
where $\mathbb{I}$ is the indicator function; a perfect estimator will have a $\mathcal{C}$ of $1$. 
Instead of emphasizing the actual values of the LID estimates, this metric assesses how well the ranks of an estimator align with those of ground truth \cite{steck2007ranking, mayr2008cross,teles2012concordance, kamkari2024orderbased}, thus evaluating LID as a ``relative'' measure of complexity.

\paragraph{Model-free Analysis}
Among model-free methods, LPCA and ESS show good performance in low dimensions, with LPCA being exceptionally good in scenarios where the manifold is affine. 
As we see in \autoref{tab:synthetic-results}, while model-free methods produce reliable estimates when $D$ is small, as $D$ increases the estimates become more unreliable.
In addition, we include average LID estimates in \autoref{tab:synthetic_results_ID} and see that all model-free baselines underestimate intrinsic dimensionality to some degree, with LPCA and MLE being particularly drastic when $D > 100$.
We note that ESS performs relatively well, even beating model-based methods in some $800$-dimensional scenarios. 
However, we note that both the $\mathcal{C}$ values in \autoref{tab:synthetic_results_concordance_index} and MAE values in \hyperref[tab:synthetic-results]{Tables \ref*{tab:synthetic-results}} and \ref{tab:synthetic_results_ID} suggest that none of these estimators effectively estimate LID in a pointwise manner and cannot rank data by LID as effectively as \method.

\paragraph{Model-based Analysis} Focusing on model-based methods, we see in \autoref{tab:synthetic_results_concordance_index} that all except \method~perform poorly in ranking data based on LID. Remarkably, \method~achieves perfect $\mathcal{C}$ values among \textit{all} datasets; further justifying it as a relative measure of complexity. We also note that while LIDL and NB provide better global estimates in \autoref{tab:synthetic_results_ID} for high dimensions, they have worse MAE performance in \autoref{tab:synthetic-results}. This once again suggests that at a local level, our estimator is superior compared to others, beating all baselines in $4$ out of $5$ groups of synthetic manifolds in \autoref{tab:synthetic-results}.

\subsection{Ablations} \label{appx:ablations}

We begin by evaluating the impact of using \kneedle. Our findings, summarized in \autoref{tab:ablation}, indicate that while setting a small fixed $t_0$ is effective in low dimensions, the advantage of \kneedle~becomes particularly evident as the number of dimensions increases.

We also tried combining it with \kneedle~by sweeping over the origin $\delta_1$ and arguing that the estimates obtained from this method also exhibit knees. Despite some improvement in high-dimensional settings, \autoref{tab:ablation} shows that even coupling it with \kneedle~does not help.

\begin{table*}[t]\captionsetup{font=footnotesize}
\vspace{15pt}
\centering
\footnotesize
\caption{MAE (lower is better). Rows show synthetic manifolds and columns represent different variations of our Fokker-Planck-based estimators.}
\begin{tabularx}{\textwidth}{l*{4}{Y}} 
\toprule
\multicolumn{1}{c}{ Synthetic Manifold} & {\scriptsize \method} & {\scriptsize \method} & {\scriptsize FPRegress} & {\scriptsize FPRegress} \\
\multicolumn{1}{c}{} & {\tiny $t_0\!=\!.05$} &  {\scriptsize \kneedle} &  {\scriptsize \kneedle} & {\tiny $\delta_1\!=\!-1$}  \\
\toprule
Lollipop in $\R^2$ &  %lollipop                                
$\mbf{0.142}$  &  $0.419$ &   $0.572$ &   $0.162$\\ 
String within doughnut $\R^3$ & %torus_circle                      
$\mbf{0.052}$  &  $0.055$ &   $0.398$ &   $0.377$\\ 
Swiss Roll in $\R^3$ & %swiss_roll                                     
$\mbf{0.053}$  & $0.055$  &   $0.087$ &   $0.161$ \\ 
$\Lcal_{5} \subseteq \R^{10}$ & %affine_10D_5d_laplace                             
$\mbf{0.100}$  & $0.169$  &   $1.168$ &   $0.455$\\ 
$\Ncal_{90} \subseteq \R^{100}$ &  %affine_100D_90d_gaussian                          
$0.501$  & $\mbf{0.492}$  &   $3.142$ &   $0.998$ \\ 
$\Ucal_{10} + \Ucal_{30} + \Ucal_{90} \subseteq \R^{100}$ &  %affine_100D_10d_30d_90d_uniform
$3.140$  & $\mbf{1.298}$  &  $5.608$  &   $10.617$\\ 
$\mathcal{F}_{10} + \mathcal{F}_{25} + \mathcal{F}_{50} \subseteq \R^{100}$ & % RQNSF_100D_10d_25d_50d_uniform
$14.37$ & $\mbf{3.925}$  &  $16.32$ &    $21.01$ \\
$\mathcal{U}_{10} + \mathcal{U}_{80} + \mathcal{U}_{200} \subseteq \R^{800}$ & % affine_800D_10d_80d_200d_uniform
$39.54$ &  $\mbf{14.30}$    & $30.06$  &    $29.99$\\
\bottomrule
\end{tabularx}
\label{tab:ablation}
\end{table*}

\subsection{Improving the NB Estimators with \kneedle} \label{appx:nb_kneedle}

We recall that the NB estimator requires computing $\rank S(\x)$, where $S(\x)$ is a $K \times D$ matrix formed by stacking the scores $\hat{s}(\cdot, t_0)$. We set $t_0 = 0.01$ as it provides the most reasonable estimates. To compute $\rank S(\x)$ numerically, \citet{stanczuk2022your} perform a singular value decomposition on $S(\x)$ and use a cutoff threshold $\tau$ below which singular values are considered zero. Finding the best $\tau$ is challenging, so \citet{stanczuk2022your} propose finding the two consecutive singular values with the maximum gap. Furthermore, we see that sometimes the top few singular values are disproportionately higher than the rest, resulting in severe overestimations of the LID.
Thus, we introduce an alternative algorithm to determine the optimal $\tau$. For each $\tau$, we estimate LID by thresholding the singular values. Sweeping 100 different $\tau$ values from 0 to $1000$ at a geometric scale (to further emphasize smaller thresholds) produces estimates ranging from $D$ (keeping all singular values) to 0 (ignoring all). As $\tau$ varies, we see that the estimates plateau over a certain range of $\tau$. We use \kneedle~to detect this plateau because the starting point of a plateau is indeed a knee in the curve. This significantly improves the baseline, especially in high dimensions: see the third column of \hyperref[tab:synthetic-results]{Tables \ref*{tab:synthetic-results}}, \ref{tab:synthetic_results_ID}, and \ref{tab:synthetic_results_concordance_index} compared to the second column.

\clearpage
\begin{table*}[t]\captionsetup{font=footnotesize}
\vspace{15pt}
\centering
\footnotesize
\caption{MAE (lower is better). Each row represents a synthetic dataset and each column represents an LID estimation method. Rows are split into groups based on the ambient dimension: the first group of rows shows toy examples; the second shows low-dimensional data with $D = 10$; the third shows moderate-dimensional data with $D = 100$; and the last two show high-dimensional data with $D = 800$ and $D=1000$, respectively.}
\renewcommand{\arraystretch}{1.2}
\begin{tabularx}{\textwidth}{l|*{4}{X}|*{4}{X}}
\toprule
\multicolumn{1}{c}{} & \multicolumn{4}{c}{Model-based} & \multicolumn{4}{c}{Model-free} \\ 
\cmidrule(r){2-5} \cmidrule(l){6-9} 
Synthetic Manifold& 
{\method\hspace{1mm}{\scriptsize \kneedle}}    & {NB {\scriptsize Vanilla}} & {NB {\scriptsize \kneedle}} & {LIDL}     & {ESS}         & {LPCA}        & {MLE}         & {FIS} \\
\midrule
Lollipop $\subseteq \R^2$ & % lollipop                    
$0.419$     & $0.577$       & $0.855$       & $\mbf{0.052}$       & $0.009$       & $\mbf{0.000}$ & $0.142$       & $0.094$ \\
Swiss Roll $\subseteq \R^3$ & % swiss_roll         
$0.055$     & $0.998$       & $\mbf{0.016}$       & $0.532$       & $0.017$       & $\mbf{0.000}$       & $0.165$       & $0.018$ \\
String within doughnut $\subseteq \R^3$&  % torus_circle                                                     
$\mbf{0.055}$     & $1.475$       & $0.414$       & $1.104$       & $0.017$       & $\mbf{0.000}$       & $0.128$       & $0.041$ \\
\cmidrule(lr){1-1} \cmidrule(lr){2-5} \cmidrule(lr){6-9}
Summary (Toy Manifolds) & $\mbf{0.176}$ & $1.017$ & $0.428$ & $0.563$ & $0.014$ & $\mbf{0.000}$ & $0.145$ & $0.051$\\
\midrule
$\Ncal_{5} \subseteq \R^{10}$ & % affine_10D_5d_gaussian
$0.084$     & $5.000$       & $\mbf{0.005}$       & $0.071$       & $0.061$       & $\mbf{0.000}$       & $0.441$       & $0.206$ \\
$\Lcal_{5} \subseteq \R^{10}$ & % affine_10D_5d_laplace
$0.169$     & $1.000$       & $0.146$       & $\mbf{0.101}$       & $0.068$       & $\mbf{0.000}$       & $0.462$       & $0.203$ \\
$\Ucal_{5} \subseteq \R^{10}$ & % affine_10D_5d_uniform
$0.324$     & $4.994$       & $0.933$       & $\mbf{0.123}$       & $0.153$       & $\mbf{0.000}$       & $0.451$       & $0.186$ \\
$\Fcal_{5} \subseteq \R^{10}$ & % RQNSF_10D_5d_uniform
$0.666$     & $1.216$       & $0.765$       & $\mbf{0.487}$       & $0.168$       & $\mbf{0.000}$       & $0.497$       & $0.176$ \\
$\Ncal_{2} + \Ncal_4 + \Ncal_8 \subseteq \R^{10}$ & % affine_10D_2d_4d_8d_gaussian
$\mbf{0.287}$     & $5.706$       & $1.078$       & $0.308$       & $0.156$       & $\mbf{0.000}$       & $0.406$       & $0.206$ \\
$\Lcal_{2} + \Lcal_4 + \Lcal_8 \subseteq \R^{10}$ & % affine_10D_2d_4d_8d_laplace
$\mbf{0.253}$     & $5.708$       & $0.772$       & $0.515$       & $0.193$       & $\mbf{0.001}$       & $0.437$       & $0.018$ \\
$\Ucal_{2} + \Ucal_4 + \Ucal_8 \subseteq \R^{10}$ & % affine_10D_2d_4d_8d_uniform
$0.586$     & $5.677$       & $3.685$       & $\mbf{0.363}$       & $0.331$       & $\mbf{0.115}$       & $0.540$       & $0.222$ \\
$\Fcal_{2} + \Fcal_4 + \Fcal_8 \subseteq \R^{10}$ & % RQNSF_10D_2d_4d_8d_uniform
$\mbf{0.622}$     & $5.709$       & $2.187$       & $1.013$       & $0.428$       & $\mbf{0.115}$       & $0.642$       & $0.269$ \\
\cmidrule(lr){1-1} \cmidrule(lr){2-5} \cmidrule(lr){6-9}
Summary ($10$-dimensional) & $\mbf{0.066}$ & $0.381$ & $0.161$ & $0.211$ & $0.005$ & $\mbf{0.000}$ & $0.054$ & $0.019$\\
\midrule
$\Ucal_{10} \subseteq \R^{100}$ & % affine_100D_10d_uniform
$0.910$     & $30.115$       & $\mbf{0.000}$       & $1.370$       & $0.644$       & $\mbf{0.000}$       & $1.263$       & $-$ \\
$\Ucal_{30} \subseteq \R^{100}$ & % affine_100D_30d_uniform
$0.505$     & $50.521$       & $\mbf{0.000}$       & $0.542$       & $1.465$       & $\mbf{0.002}$       & $7.622$       & $-$ \\
$\Ucal_{90} \subseteq \R^{100}$ & % affine_100D_90d_uniform
$0.640$     & $1.157$       & $1.327$       & $\mbf{0.332}$       & $\mbf{2.034}$       & $21.90$       & $39.65$       & $-$ \\
$\Ncal_{30} \subseteq \R^{100}$ & % affine_100D_30d_gaussian
$0.887$     & $52.43$       & $\mbf{0.000}$       & $0.892$       & $0.534$       & $\mbf{0.000}$       & $5.703$       & $-$ \\
$\Ncal_{90} \subseteq \R^{100}$ & % affine_100D_90d_gaussian
$0.492$     & $\mbf{0.184}$       & $2.693$       & $0.329$       & $\mbf{1.673}$       & $21.88$       & $39.45$       & $-$ \\
$\Fcal_{80} \subseteq \R^{100}$ & % RQNSF_100D_80d_uniform
$\mbf{1.869}$     & $20.00$       & $3.441$       & $1.871$       & $\mbf{3.660}$       & $16.78$       & $34.41$       & $-$ \\
$\Ucal_{10} + \Ucal_{25} + \Ucal_{50} \subseteq \R^{100}$ & % affine_100D_10d_25d_50d_uniform
$\mbf{0.868}$     & $57.96$       & $0.890$       & $5.869$       & $\mbf{4.988}$       & $6.749$       & $16.12$       & $-$ \\
$\Ucal_{10} + \Ucal_{30} + \Ucal_{90} \subseteq \R^{100}$ & % affine_100D_10d_30d_90d_uniform
$\mbf{1.298}$     & $61.58$       & $1.482$       & $8.460$       & $21.89$       & $\mbf{20.06}$       & $41.05$       & $-$ \\
$\Ncal_{10} + \Ncal_{25} + \Ncal_{50} \subseteq \R^{100}$ & % affine_100D_10d_25d_50d_gaussian
$1.813$     & $74.20$       & $\mbf{0.555}$       & $8.873$       & $7.712$       & $\mbf{5.716}$       & $14.37$       & $-$ \\
$\Fcal_{10} + \Fcal_{25} + \Fcal_{50} \subseteq \R^{100}$ & % RQNSF_100D_10d_25d_50d_uniform
$\mbf{3.925}$     & $74.20$       & $6.205$       & $18.61$       & $9.200$       & $\mbf{6.769}$       & $16.78$       & $-$ \\
\cmidrule(lr){1-1} \cmidrule(lr){2-5} \cmidrule(lr){6-9}
Summary ($100$-dimensional) & $\mbf{1.321}$ & $42.24$ & $1.659$ & $4.715$ & $\mbf{5.380}$ & $9.986$ & $21.64$ & $-$ \\
\midrule
$\Ucal_{200} \subseteq \R^{800}$ & % affine_800D_200d_uniform
$11.54$     & $600.0$      & $\mbf{7.205}$       & $55.98$      & $3.184$       & $\mbf{0.000}$     & $159.3$     & $-$ \\
$\Fcal_{400} \subseteq \R^{800}$ & % RQNSF_800D_400d_uniform
$20.46$      & $400.0$      & $\mbf{10.15}$       & $207.2$       & $14.48$       & $\mbf{2.919}$       & $342.6$       & $-$ \\
$\Ucal_{10} + \Ucal_{80} + \Ucal_{200} \subseteq \R^{800}$ & % affine_800D_10d_80d_200d_uniform
$\mbf{14.30}$     & $715.3$    & $18.82$      & $120.7$     & $1.385$       & $\mbf{0.004}$      & $72.25$      & $-$ \\
\cmidrule(lr){1-1} \cmidrule(lr){2-5} \cmidrule(lr){6-9}
Summary ($800$-dimensional) & $15.43$ & $571.8$ & $\mbf{12.06}$ & $128.0$ & $6.350$ & $\mbf{0.974}$ & $191.4$ & -\\
\midrule
$\Ncal_{900} \subseteq \R^{1000}$ & % affine_1000D_900d_gaussian
$\mbf{3.913}$     & $100.0$       & $24.38$       & $10.45$      & $\mbf{14.16}$      & $219.8$       & $819.2$     & $-$ \\
$\Ucal_{100} \subseteq \R^{1000}$ & % affine_1000D_100d_uniform
$12.81$    & $900.0$       & $62.68$      & $\mbf{12.65}$      & $1.623$       & $\mbf{0.000}$      & $72.06$      & $-$ \\
$\Ucal_{900} \subseteq \R^{1000}$ & % affine_1000D_900d_uniform
$12.81$    & $100.0$       & $\mbf{0.104}$       & $24.90$      & $\mbf{14.49}$      & $219.1$       & $810.2$     & $-$ \\
$\Fcal_{500} \subseteq \R^{1000}$ & % RQNSF_1000D_500d_uniform
$\mbf{21.77}$     & $500.0$       & $52.19$       & $341.3$       & $\mbf{18.83}$       & $29.99$       & $435.7$       & $-$ \\
\cmidrule(lr){1-1} \cmidrule(lr){2-5} \cmidrule(lr){6-9}
Summary ($1000$-dimensional) & $\mbf{12.83}$ & $400.0$ & $34.84$ & $97.33$ & $\mbf{12.28}$ & $117.2$ & $534.3$ & $-$ \\
\bottomrule
\end{tabularx}
\label{tab:synthetic-results}
\end{table*}

\clearpage
\begin{table*}[t]\captionsetup{font=footnotesize}
\vspace{15pt}
\centering
\footnotesize
\caption{Concordance index (higher is better with $1.000$ being the gold standard). Each row represents a mixture of multi-dimensional manifolds, and each column represents an LID estimation method. This table evaluates how accurately different estimators rank datapoints based on their LID.}
\renewcommand{\arraystretch}{1.2}
\begin{tabularx}{\textwidth}{l*{8}{X}}
\toprule
Synthetic Manifold& 
{\method\hspace{1mm}{\scriptsize \kneedle}}    & {NB {\scriptsize Vanilla}} & {NB {\scriptsize \kneedle}} & {LIDL}     & {ESS}         & {LPCA}        & {MLE}         & {FIS} \\
\midrule
Lollipop $\subseteq \R^2$ & % lollipop                    
$\mbf{1.000}$     & $0.426$       & $0.394$       & $0.999$       & $\mbf{1.000}$       & $\mbf{1.000}$       & $\mbf{1.000}$       & $\mbf{1.000}$ \\
String withing doughnut $\subseteq \R^3$&  % torus_circle                                                     
$\mbf{1.000}$      & $0.483$       & $0.486$       & $0.565$       & $\mbf{1.000}$        & $\mbf{1.000}$        & $\mbf{1.000}$        & $\mbf{1.000}$  \\
$\Ncal_{2} + \Ncal_4 + \Ncal_8 \subseteq \R^{10}$ & % affine_10D_2d_4d_8d_gaussian
$\mbf{1.000}$      & $0.341$       & $0.725$       & $0.943$       & $\mbf{1.000}$        & $\mbf{1.000}$        & $\mbf{1.000}$       & $\mbf{1.000}$  \\
$\Lcal_{2} + \Lcal_4 + \Lcal_8 \subseteq \R^{10}$ & % affine_10D_2d_4d_8d_laplace
$\mbf{1.000}$      & $0.342$       & $0.752$       & $0.884$       & $\mbf{1.000}$        & $\mbf{1.000}$       & $\mbf{1.000}$        & $\mbf{1.000}$  \\
$\Ucal_{2} + \Ucal_4 + \Ucal_8 \subseteq \R^{10}$ & % affine_10D_2d_4d_8d_uniform
$\mbf{1.000}$     & $0.334$       & $0.462$       & $0.903$       & $\mbf{1.000}$        & $\mbf{1.000}$        & $\mbf{1.000}$        & $\mbf{1.000}$  \\
$\Fcal_{2} + \Fcal_4 + \Fcal_8 \subseteq \R^{10}$ & % RQNSF_10D_2d_4d_8d_uniform
$\mbf{1.000}$      & $0.342$       & $0.578$       & $0.867$       & $\mbf{1.000}$       & $\mbf{1.000}$       & $0.999$       & $\mbf{1.000}$  \\
$\Ucal_{10} + \Ucal_{25} + \Ucal_{50} \subseteq \R^{100}$ & % affine_100D_10d_25d_50d_uniform
$\mbf{1.000}$     & $0.467$       & $0.879$       & $0.759$       & $0.855$       & $0.897$       & $\mbf{1.000}$       & $-$ \\
$\Ucal_{10} + \Ucal_{30} + \Ucal_{90} \subseteq \R^{100}$ & % affine_100D_10d_30d_90d_uniform
$\mbf{1.000}$     & $0.342$       & $0.826$       & $0.742$       & $0.742$       & $0.855$       & $\mbf{1.000}$        & $-$ \\
$\Ncal_{10} + \Ncal_{25} + \Ncal_{50} \subseteq \R^{100}$ & % affine_100D_10d_25d_50d_gaussian
$\mbf{1.000}$      & $0.342$       & $0.866$       & $0.736$       & $0.878$       & $0.917$       & $\mbf{1.000}$       & $-$ \\
$\Fcal_{10} + \Fcal_{25} + \Fcal_{50} \subseteq \R^{100}$ & % RQNSF_100D_10d_25d_50d_uniform
$\mbf{1.000}$     & $0.342$       & $0.731$       & $0.695$       & $0.847$       & $0.897$       & $\mbf{1.000}$        & $-$ \\
$\Ucal_{10} + \Ucal_{80} + \Ucal_{200} \subseteq \R^{800}$ & % affine_800D_10d_80d_200d_uniform
$\mbf{1.000}$      & $0.342$       & $0.841$       & $0.697$       & $\mbf{1.000}$        & $\mbf{1.000}$        & $\mbf{1.000}$        & $-$ \\
\bottomrule
\end{tabularx}
\label{tab:synthetic_results_concordance_index}
\end{table*}

\begin{table*}[t]\captionsetup{font=footnotesize}
\vspace{15pt}
\centering
\footnotesize
\caption{Analysis on manifolds with a single global intrinsic dimension. Columns, categorized by whether or not they use a model, display various LID estimators. 
The first set of rows shows the average LID as a \textit{global} intrinsic dimension estimate and is grouped into lower- $(D \le 100)$ or higher-dimensional $(D > 100)$ categories.
Methods that most closely match the true intrinsic dimension are bolded. 
The second set of rows shows the pointwise precision of the estimators, as indicated by MAE (lower is better): while previous baselines generally provide reliable global estimates, their pointwise precision is subpar, especially with higher dimensionality.}
\renewcommand{\arraystretch}{1.2}
\begin{tabularx}{\textwidth}{l|*{4}{X}|*{4}{X}}
\toprule
\multicolumn{1}{c}{} & \multicolumn{4}{c}{Model-based} & \multicolumn{4}{c}{Model-free} \\ 
\cmidrule(r){2-5} \cmidrule(l){6-9} 
& 
{\method\hspace{1mm}{\scriptsize \kneedle}}    & {NB {\scriptsize Vanilla}} & {NB {\scriptsize \kneedle}} & {LIDL}     & {ESS}         & {LPCA}        & {MLE}         & {FIS} \\
\midrule
\multicolumn{1}{c}{Synthetic Manifold} & \multicolumn{8}{c}{Average LID Estimate} \\ 
\cmidrule(r){1-1} \cmidrule(l){2-9}
Swiss Roll $\subseteq \R^3$ & % swiss_roll         
$\mbf{2.012}$     & $2.998$       & $1.984$       & $2.527$       & $2.008$       & $\mbf{2.000}$       & $2.013$      & $2.003$ \\
$\Ncal_{5} \subseteq \R^{10}$ & % affine_10D_5d_gaussian         
$5.017$     & $10.00$       & $\mbf{4.995}$       & $5.067$       & $5.004$       & $\mbf{5.000}$       & $5.108$      & $5.187$ \\
$\Lcal_{5} \subseteq \R^{10}$ & % affine_10D_5d_laplace         
$\mbf{4.968}$     & $6.000$       & $4.854$       & $5.089$       & $4.985$       & $\mbf{5.000}$       & $5.123$      & $5.182$ \\
$\Ucal_{5} \subseteq \R^{10}$ & % affine_10D_5d_uniform         
$4.796$     & $9.994$       & $4.067$       & $\mbf{5.107}$       & $4.880$       & $\mbf{5.000}$       & $4.833$      & $5.152$ \\
$\Fcal_{5} \subseteq \R^{10}$ & % RQNSF_10D_5d_uniform         
$\mbf{4.722}$     & $6.216$       & $4.461$       & $5.482$       & $4.890$       & $\mbf{5.000}$       & $4.829$      & $5.131$ \\
$\Ucal_{10} \subseteq \R^{100}$ & % affine_100D_10d_uniform         
$11.68$     & $40.12$       & $\mbf{10.00}$       & $11.29$       & $9.361$       & $\mbf{10.00}$       & $8.905$      & $-$ \\
$\Ucal_{30} \subseteq \R^{100}$ & % affine_100D_30d_uniform         
$31.01$     & $80.52$       & $\mbf{30.00}$       & $30.08$       & $28.54$       & $\mbf{29.99}$       & $22.38$      & $-$ \\
$\Ucal_{90} \subseteq \R^{100}$ & % affine_100D_90d_uniform         
$89.54$     & $91.16$       & $91.32$       & $\mbf{90.24}$       & $\mbf{88.27}$       & $68.10$       & $50.35$      & $-$ \\
$\Ncal_{30} \subseteq \R^{100}$ & % affine_100D_30d_gaussian         
$30.79$     & $82.43$       & $\mbf{30.00}$       & $30.85$       & $29.67$       & $\mbf{30.00}$       & $24.38$      & $-$ \\
$\Ncal_{90} \subseteq \R^{100}$ & % affine_100D_90d_gaussian         
$\mbf{89.88}$     & $90.18$       & $92.62$       & $90.21$       & $\mbf{88.89}$       & $68.12$       & $50.55$      & $-$ \\
$\Fcal_{80} \subseteq \R^{100}$ & % RQNSF_100D_80d_uniform         
$77.97$     & $100.0$       & $83.23$       & $\mbf{81.46}$       & $\mbf{76.35}$       & $63.22$       & $45.59$      & $-$ \\
\midrule
$\Ucal_{200} \subseteq \R^{800}$ & % affine_800D_200d_uniform         
$211.5$     & $800.0$       & $\mbf{207.2}$       & $256.0$       & $199.3$       & $\mbf{200.0}$       & $40.73$      & $-$ \\
$\Fcal_{400} \subseteq \R^{800}$ & % RQNSF_800D_400d_uniform         
$454.7$     & $800.0$       & $\mbf{410.1}$       & $607.2$       & $385.8$       & $\mbf{397.1}$       & $57.37$      & $-$ \\
$\Ncal_{900} \subseteq \R^{1000}$ & % affine_1000D_900d_gaussian         
$\mbf{890.3}$     & $1000.$       & $924.4$       & $924.4$       & $\mbf{897.3}$       & $680.2$       & $80.78$      & $-$ \\
$\Ucal_{100} \subseteq \R^{1000}$ & % affine_1000D_100d_uniform         
$135.76$     & $1000.$       & $162.7$       & $\mbf{112.6}$       & $99.54$       & $\mbf{100.0}$       & $27.94$      & $-$ \\
$\Ucal_{900} \subseteq \R^{1000}$ & % affine_1000D_900d_uniform         
$864.9$     & $1000.$       & $\mbf{900.1}$       & $911.9$       & $\mbf{899.3}$       & $680.9$       & $89.80$      & $-$ \\
$\Fcal_{500} \subseteq \R^{1000}$ & % RQNSF_1000D_500d_uniform         
$582.7$     & $1000.$       & $\mbf{552.2}$       & $841.3$       & $\mbf{481.4}$       & $470.0$       & $64.29$      & $-$ \\
\midrule
\multicolumn{1}{c}{Manifolds Summary} & \multicolumn{8}{c}{Pointwise MAE Averaged Across Tasks} \\ 
\cmidrule(r){1-1} \cmidrule(l){2-9}
{$\subseteq \R^D$ where $D \le 100$} & % low_dim         
$0.633$       & $15.20$       & $0.924$       & $\mbf{0.561}$       & $0.954$ & $\mbf{5.506}$       & $11.83$      & $-$ \\
{$\subseteq \R^D$ where $D > 100$} & % high_dim         
$\mbf{13.88}$     & $433.3$       & $26.12$       & $108.7$       & $\mbf{11.13}$       & $78.64$       & $439.8$      & $-$ \\
\bottomrule
\end{tabularx}
\label{tab:synthetic_results_ID}
\end{table*}

\clearpage

\section{Image Experiments}

\vspace{-0.1cm}
\subsection{\method~Curves and DM Samples: A Surprising Result} \label{appx:image_experiments_curves_and_samples}

\hyperref[fig:dgm_quality]{Figures \ref*{fig:dgm_quality}} and \ref{fig:image_lid_curves} show DM-generated samples and the associated LID curves for $4096$ samples from the datasets. In \autoref{fig:image_lid_curves}, the \method~curves with MLPs have clearly discernible knees at which the estimated LID averages around a positive number, as expected.
When changing the architecture of the DMs from MLPs to UNets, however, the curves either lack knees or obtain them at negative values of LID.  
This makes it harder to set $t_0$ automatically with \texttt{kneedle} when using UNets, and thus makes the estimator non-robust to the choice of architecture. Note that our theory ensures that $\method(x, t_0)$ converges to $\LID(x)$ as $t_0 \rightarrow 0$ when $\hat{s}$ is an accurate estimator of the true score function; indeed, it does not make any claims about the behaviour of FLIPD when $t_0$ is large or when the model fit is poor.
Yet, the fact that the FLIPD estimates obtained from UNets are less reliable than the ones from MLPs is somewhat surprising, especially given that DMs with MLP backbones clearly produce worse-looking samples (shown in \autoref{fig:dgm_quality}), suggesting that while the model fit is better in the former case, the LID estimates are worse.
Throughout the paper, we have argued that, notwithstanding this, FLIPD estimates derived from both architectures offer useful measurements of relative complexity, and can still effectively rank data based on its complexity; refer to all the images in 
\hyperref[fig:dgm_quality]{Figures \ref*{fig:multiscale_cifar10_cnn}}, \ref{fig:multiscale_cifar10_mlp}, \ref{fig:multiscale_svhn_cnn}, 
\ref{fig:multiscale_svhn_mlp}, \ref{fig:multiscale_mnist_cnn}, \ref{fig:multiscale_mnist_mlp}, \ref{fig:multiscale_fmnist_cnn}, and \ref{fig:multiscale_fmnist_mlp}.
However, this discrepancy between FLIPD estimates -- in absolute terms -- when altering the model architecture is counterintuitive and requires further treatment.
In what follows, we propose hypotheses for the significant variation in FLIPD estimates when transitioning from MLP to UNet architectures, and we encourage further studies to investigate this issue and to adjust FLIPD to better suit DMs with state-of-the-art backbones.

While MLP architectures do not produce visually pleasing samples, comparing the first and second rows of \autoref{fig:dgm_quality} side-by-side, it also becomes clear that MLP-generated images still match some characteristics of the true dataset, suggesting that they may still be capturing the image manifold in useful ways.
To explain the unstable FLIPD estimates of UNets, we hypothesize that the convolutional layers in the UNet provide some inductive biases which, while helpful to produce visually pleasing images, might also encourage the network to over-fixate on high-frequency features. 
More specifically, score functions can be interpreted as providing a denoiser which removes noise added during the diffusion process; our hypothesis is that UNets are particularly good at removing noise along high-frequency directions but not necessarily along low-frequency ones.
Inductive biases or modelling errors which make the learned score function $\hat{s}$ deviate from $s$ might significantly alter the behaviour of FLIPD estimates, even if they result in visually pleasing images; for a more formal treatment of the effect of this deviation, please refer to the work of \citet{tempczyk2024wiener}, who link modelling errors to LID estimation errors in LIDL. 

The aforementioned inductive biases have been found to produce unintuitive behaviours in other settings as well.
For example, \citet{kirichenko2020normalizing} showed how normalizing flows over-fixate on these high-frequency features, and how this can cause them to assign large likelihoods to out-of-distribution data, even when the model produces visually convincing samples. 
We hypothesize that a similar underlying phenomenon might be at play here, and that DMs with UNets might be over-emphasizing high-frequency directions of variation in their LID estimates.
On the other hand, MLPs do not inherently distinguish between the high- and low-frequency directions and can therefore more reliably estimate the manifold dimension, even though their generated samples appear less appealing to the human eye. However, an exploration of this hypothesis is outside the scope of our work, and we highlight once again that FLIPD remains a useful measure of \textit{relative} complexity when using either UNets or MLPs.
\vspace{-0.1cm}
\subsection{UNet Architecture} \label{appx:UNet_hparam}
We use UNet architectures from \texttt{diffusers} \cite{von-platen-etal-2022-diffusers}, mirroring the DM setup in \autoref{tab:hyperparams_mlp_unet} except for the score backbone. For greyscale images, we use a convolutional block and two attention-based downsampling blocks with channel sizes of $128$, $256$, and $256$. For colour images, we use two convolutional downsampling blocks (each with $128$ channels) followed by two attention downsampling blocks (each with $256$ channels). Both setups are mirrored with their upsampling counterparts.
\vspace{-0.1cm}
\subsection{Images Sorted by \method} \label{appx:sorted_images}

\hyperref[fig:cifar10_ordering]{Figures \ref*{fig:cifar10_ordering}}, \ref{fig:svhn_ordering}, \ref{fig:mnist_ordering}, and \ref{fig:fmnist_ordering} show $4096$ samples of CIFAR10, SVHN, MNIST, and FMNIST sorted according to their \method~estimate, showing a gradient transition from the least complex datapoints (e.g., the digit $1$ in MNIST) to the most complex ones (e.g., the digit $8$ in MNIST). We use MLPs for greyscales and UNets for colour images but see similar trends when switching between backbones.
\vspace{-0.1cm}
\subsection{How Many Hutchinson Samples are Needed?} \label{appx:hutchinson_analysis}

\autoref{fig:hutchinson_trend} compares the Spearman's rank correlation coefficient between \method~while we use $k \in \{1, 50\}$ Hutchinson samples vs. computing the trace deterministically with $D$ Jacobian-vector-products:
$(i)$ Hutchinson sampling is particularly well-suited for UNet backbones, having higher correlations compared to their MLP counterparts;  
$(ii)$ as $t_0$ increases, the correlation becomes smaller, suggesting that the Hutchinson sample complexity increases at larger timescales;
$(iii)$ for small $t_0$, even one Hutchinson sample is enough to estimate LID;
$(iv)$ for the UNet backbone, $50$ Hutchinson samples are enough and have a high correlation   (larger than $0.8$) even for $t_0$ as large as $0.5$.
\vspace{-0.1cm}
\subsection{More Analysis on Images} \label{appx:multiscale_images}

While we use $100$ nearest neighbours in \autoref{tab:correlations} for ESS and LPCA, we tried both with $1000$ nearest neighbours and got similar results. Moreover, \autoref{fig:correlation_with_png} shows the correlation of \method~with PNG for $t_0 \in (0, 1)$, indicating a consistently high correlation at small $t_0$, and a general decrease while increasing $t_0$. Additionally, UNet backbones correlate better with PNG, likely because their convolutional layers capture local pixel differences and high-frequency features, aligning with PNG's internal workings. However, high correlation is still seen when working with an MLP.

\hyperref[fig:multiscale_cifar10_cnn]{Figures \ref*{fig:multiscale_cifar10_cnn}}, \ref{fig:multiscale_svhn_cnn}, \ref{fig:multiscale_mnist_cnn}, and \ref{fig:multiscale_fmnist_cnn} show images with smallest and largest \method~estimates at different values of $t_0$ for the UNet backbone and 
\hyperref[fig:multiscale_cifar10_mlp]{Figures \ref*{fig:multiscale_cifar10_mlp}}, \ref{fig:multiscale_svhn_mlp}, \ref{fig:multiscale_mnist_mlp}, and \ref{fig:multiscale_fmnist_mlp} show the same for the MLP backbone:
$(i)$ we see a clear difference in the complexity of top and bottom \method~estimates, especially for smaller $t_0$; this difference becomes less distinct as $t_0$ increases;
$(ii)$ interestingly, even for larger $t_0$ values with smaller PNG correlations, we qualitatively observe a clustering of the most complex datapoints at the end; however, the characteristic of this clustering changes. For example, see \hyperref[fig:multiscale_cifar10_cnn]{Figures \ref*{fig:multiscale_cifar10_cnn}} at $t_0=0.3$, or \ref{fig:multiscale_mnist_mlp} and \ref{fig:multiscale_fmnist_mlp} at $t_0=0.8$, suggesting that \method~focuses on coarse-grained measures of complexity at these scales;
and finally $(iii)$ while MLP backbones underperform in sample generation, their orderings are more meaningful, even showing coherent visual clustering up to $t=0.8$ in all \hyperref[fig:multiscale_cifar10_mlp]{Figures \ref*{fig:multiscale_cifar10_mlp}}, \ref{fig:multiscale_svhn_mlp}, \ref{fig:multiscale_mnist_mlp}, and \ref{fig:multiscale_fmnist_mlp}; this is a surprising phenomenon that warrants future exploration.
\vspace{-0.1cm}
\subsection{Stable Diffusion} \label{appx:stable_diffusion}

To test \method~with Stable Diffusion v1.5 \cite{rombach2022high}, which is finetuned on a subset of LAION-Aesthetics, we sampled 1600 images from LAION-Aesthetics-650k and computed FLIPD scores for each. 

We ran FLIPD with $t_0 \in \{0.01, 0.1, 0.3, 0.8\}$ and a single Hutchinson trace sample. In all cases, FLIPD ranking clearly corresponded to complexity, though we decided upon $t_0=0.3$ as best capturing the ``semantic complexity'' of image contents. All 1600 images for $t=0.3$ are depicted in \autoref{fig:stable_diffusion_full}. For all timesteps, we show previews of the 8 lowest- and highest-LID images in \autoref{fig:stable_diffusion_previews}. Note that LAION is essentially a collection of URLs, and some are outdated. For the comparisons in \autoref{fig:stable_diffusion_previews} and \autoref{fig:stable_diffusion}, we remove placeholder icons or blank images, which likely correspond to images that, at the time of writing this paper, have been removed from their respective URLs and which are generally given among the lowest LIDs.

\begin{figure*}[!htp]\captionsetup{font=footnotesize, labelformat=simple, labelsep=colon}
    \centering
    
    \begin{subfigure}{0.245\textwidth}
        \centering
        \includegraphics[width=\linewidth]{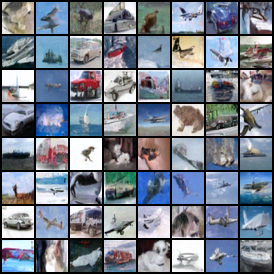}
        \vspace{-0.5cm}
        \subcaption{CIFAR10 with UNet backbone.}
    \end{subfigure}%
    \hfill
    \begin{subfigure}{0.245\textwidth}
        \centering
        \includegraphics[width=\linewidth]{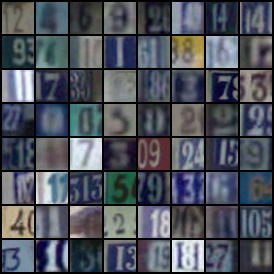}
        \vspace{-0.5cm}
        \subcaption{SVHN with UNet backbone.}
    \end{subfigure}%
    \hfill
    \begin{subfigure}{0.245\textwidth}
        \centering
        \includegraphics[width=\linewidth]{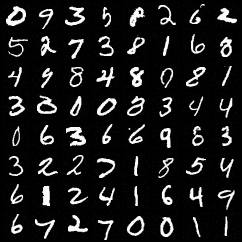}
        \vspace{-0.5cm}
        \subcaption{MNIST with UNet backbone.}
    \end{subfigure}%
    \hfill
    \begin{subfigure}{0.245\textwidth}
        \centering
        \includegraphics[width=\linewidth]{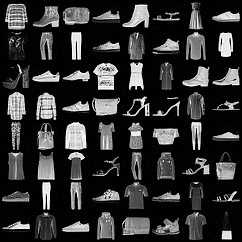}
        \vspace{-0.5cm}
        \subcaption{FMNIST with UNet backbone.}
    \end{subfigure}%
    \\
    \begin{subfigure}{0.245\textwidth}
        \centering
        \includegraphics[width=\linewidth]{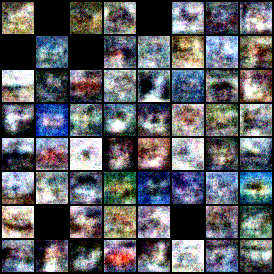}
        \vspace{-0.5cm}
        \subcaption{CIFAR10 with MLP backbone.}
    \end{subfigure}%
    \hfill
    \begin{subfigure}{0.245\textwidth}
        \centering
        \includegraphics[width=\linewidth]{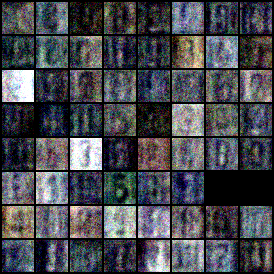}
        \vspace{-0.5cm}
        \subcaption{SVHN with MLP backbone.}
    \end{subfigure}%
    \hfill
    \begin{subfigure}{0.245\textwidth}
        \centering
        \includegraphics[width=\linewidth]{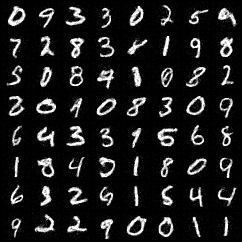}
        \vspace{-0.5cm}
        \subcaption{MNIST with MLP backbone.}
    \end{subfigure}%
    \hfill
    \begin{subfigure}{0.245\textwidth}
        \centering
        \includegraphics[width=\linewidth]{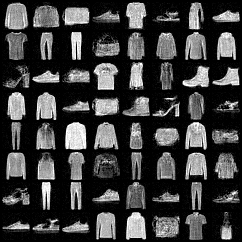}
        \vspace{-0.5cm}
        \subcaption{FMNIST with MLP backbone.}
    \end{subfigure}%
    \caption{
    Samples generated from DMs with different score network backbones, using the same seed for control. Despite the variation in backbones, images of the same cell in the grid (comparing top and bottom rows) show rough similarities, especially on CIFAR10 and SVHN.}
    \vspace{-0.3cm}
    \label{fig:dgm_quality}
\end{figure*}

\begin{figure*}[!htp]\captionsetup{font=footnotesize, labelformat=simple, labelsep=colon}
\vspace{-0.1cm}
    \centering
    
    \begin{subfigure}{0.245\textwidth}
        \centering
        \includegraphics[width=\linewidth]{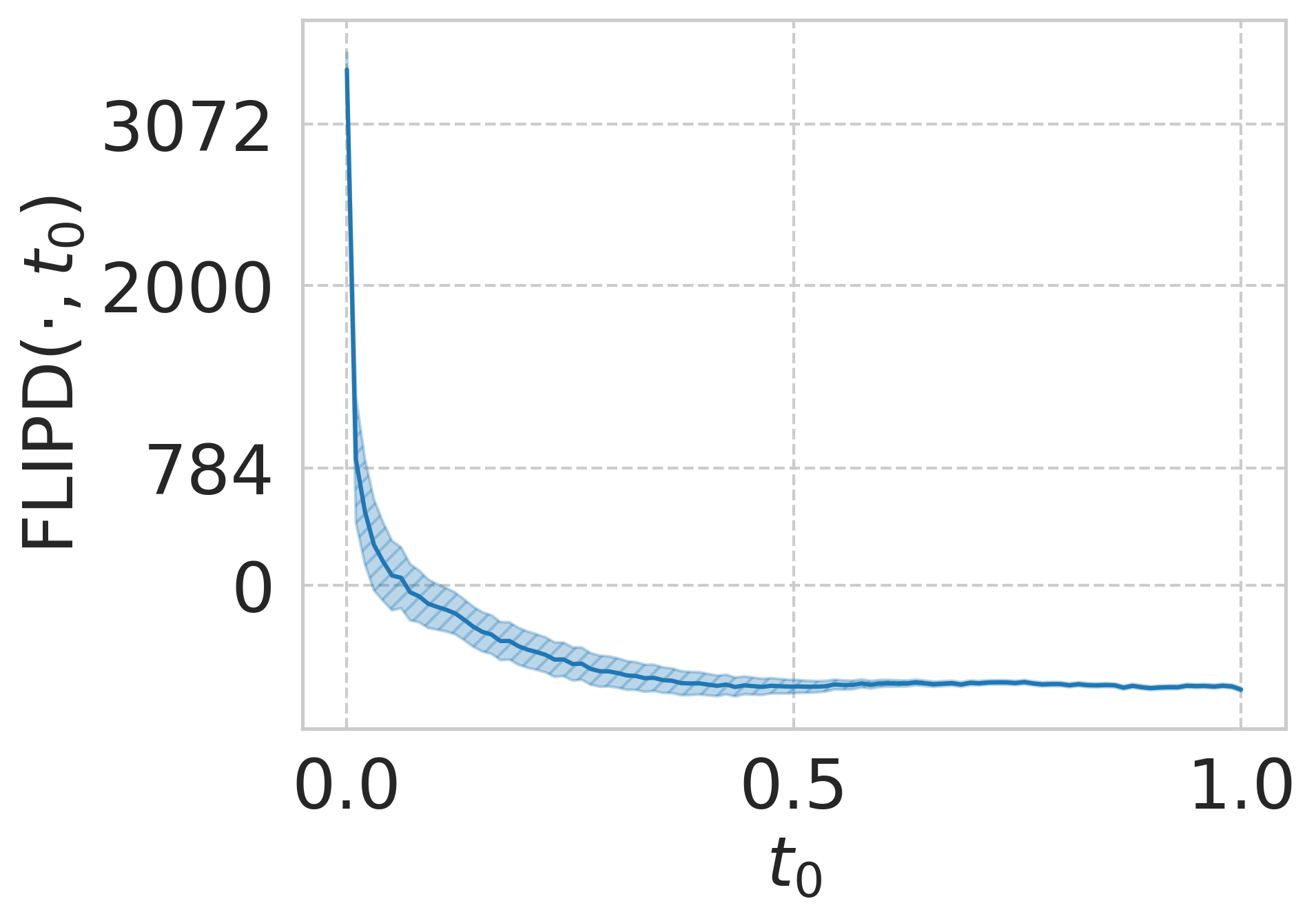}
        \vspace{-0.5cm}
        \subcaption{CIFAR10 with UNet backbone.}
    \end{subfigure}%
    \hfill
    \begin{subfigure}{0.245\textwidth}
        \centering
        \includegraphics[width=\linewidth]{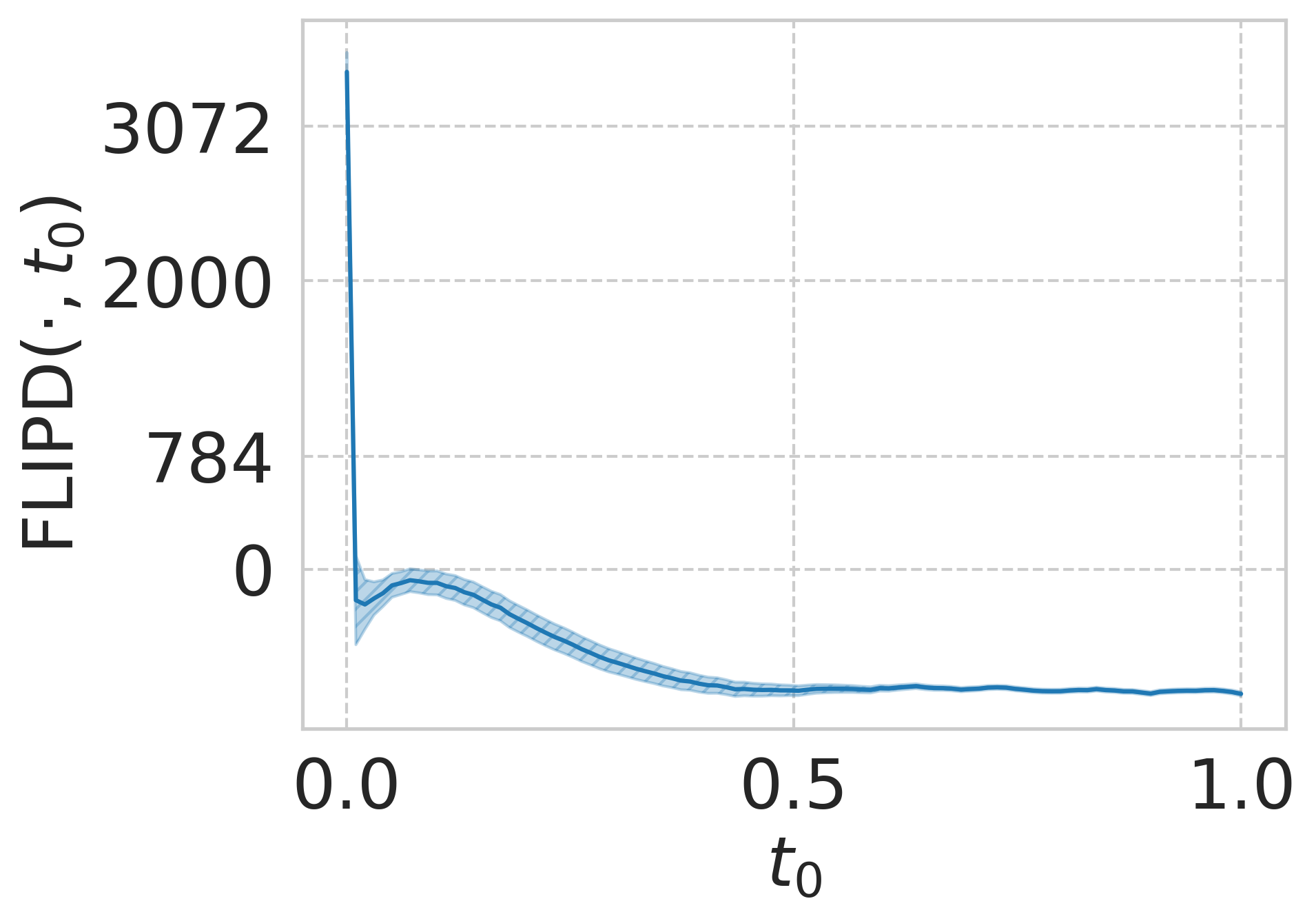}
        \vspace{-0.5cm}
        \subcaption{SVHN with UNet backbone.}
    \end{subfigure}%
    \hfill
    \begin{subfigure}{0.245\textwidth}
        \centering
        \includegraphics[width=\linewidth]{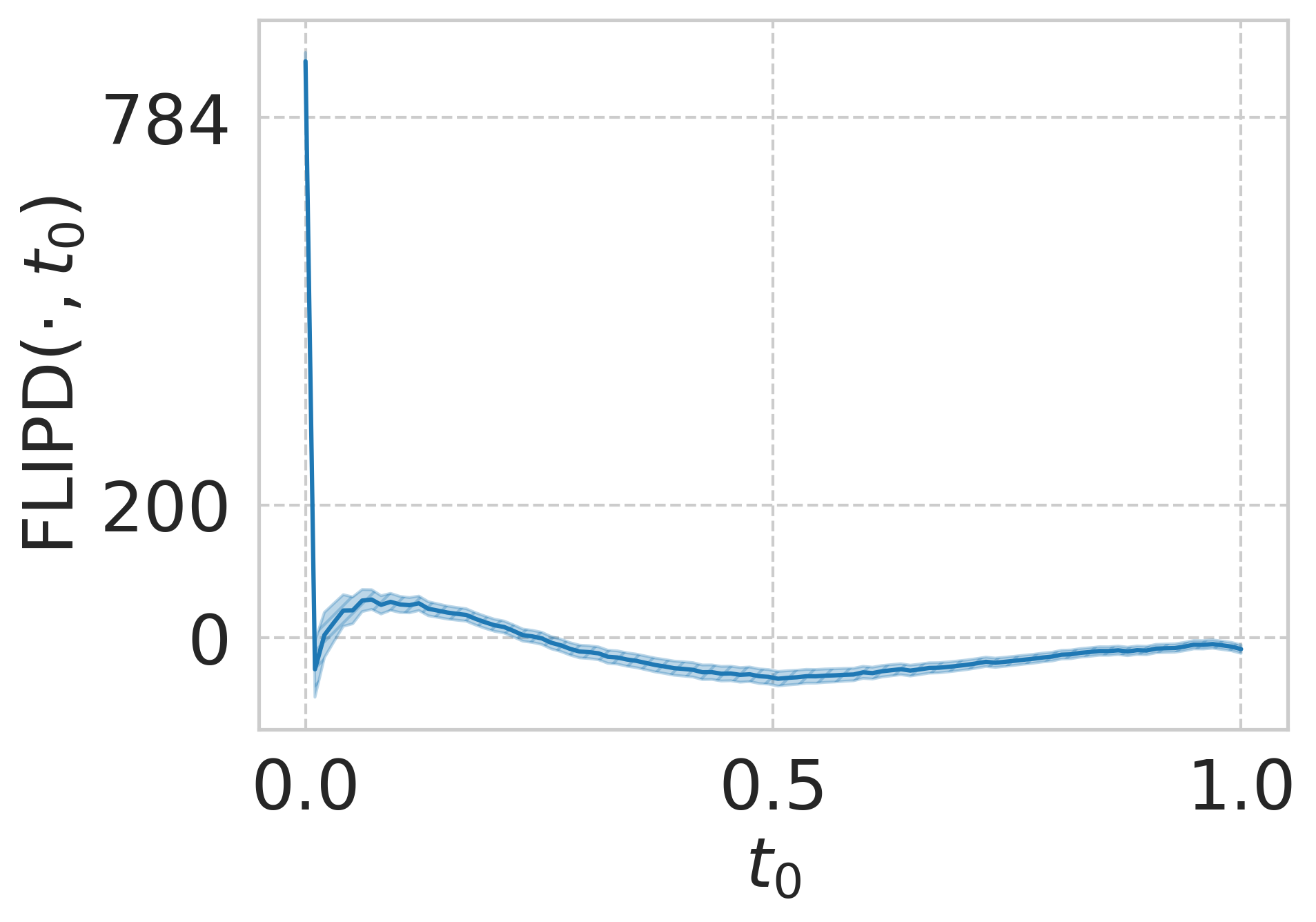}
        \vspace{-0.5cm}
        \subcaption{MNIST with UNet backbone.}
    \end{subfigure}%
    \hfill
    \begin{subfigure}{0.245\textwidth}
        \centering
        \includegraphics[width=\linewidth]{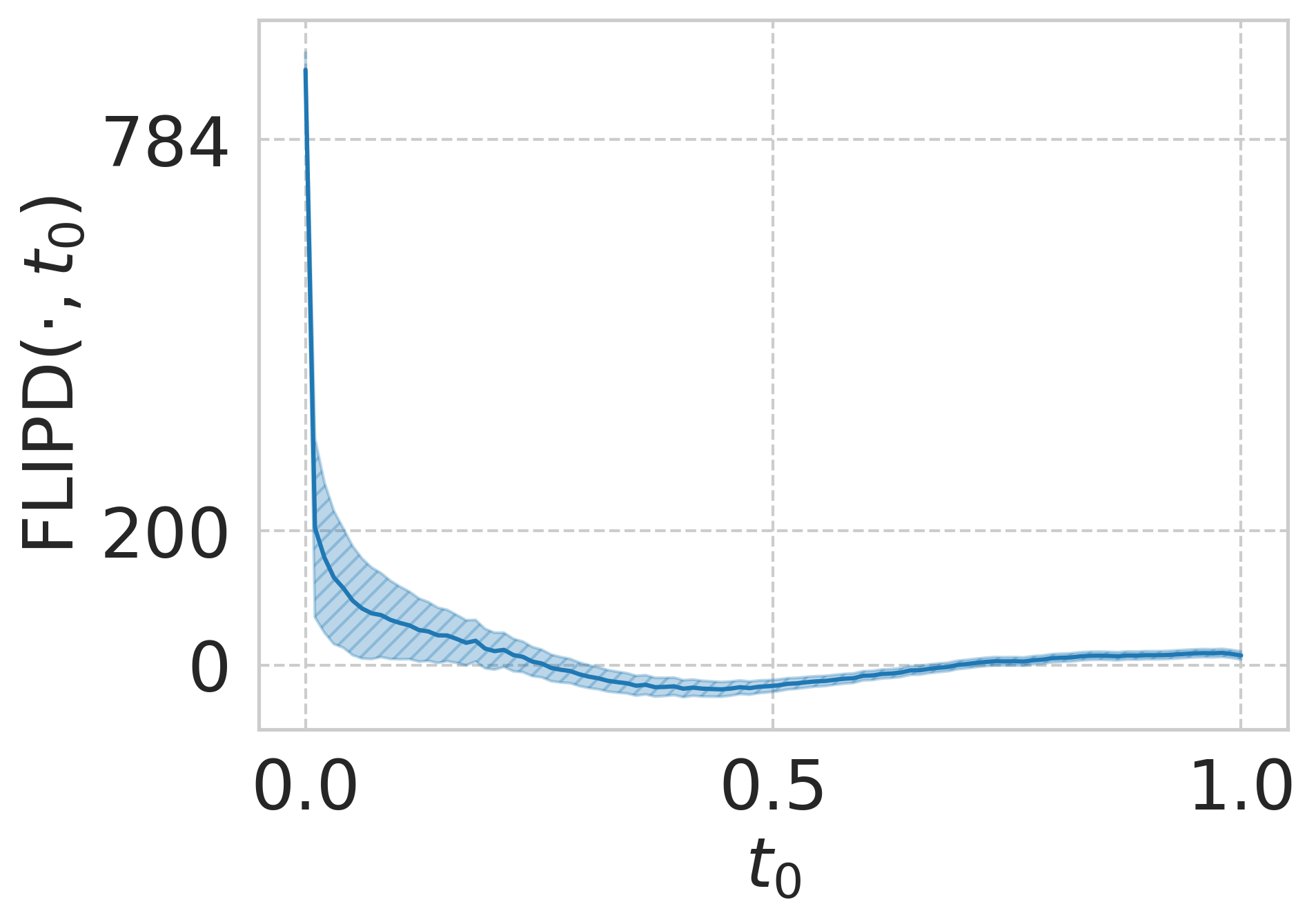}
        \vspace{-0.5cm}
        \subcaption{FMNIST with UNet backbone.}
    \end{subfigure}%
    \\
    \begin{subfigure}{0.245\textwidth}
        \centering
        \includegraphics[width=\linewidth]{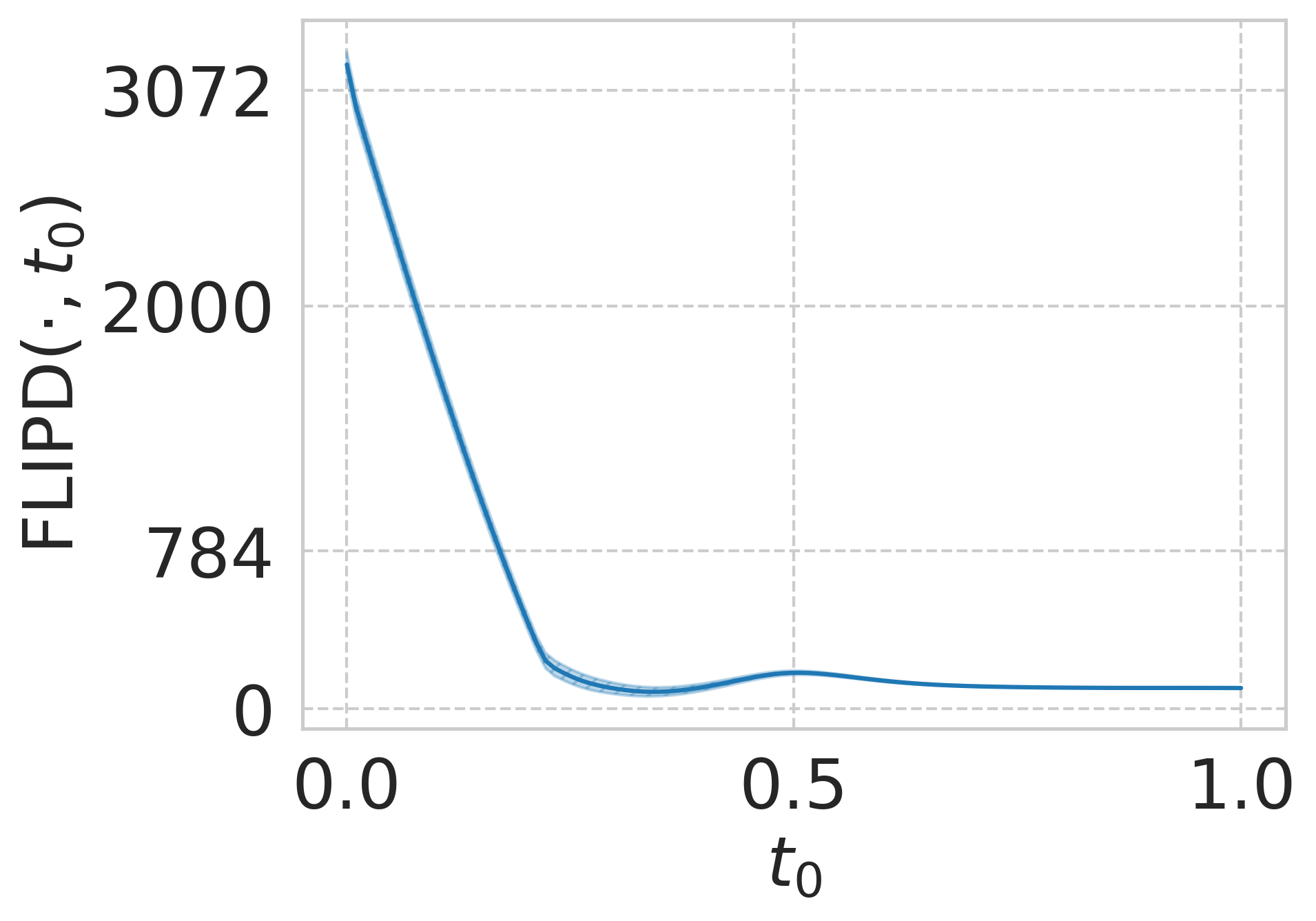}
        \vspace{-0.5cm}
        \subcaption{CIFAR10 with MLP backbone.}
    \end{subfigure}%
    \hfill
    \begin{subfigure}{0.245\textwidth}
        \centering
        \includegraphics[width=\linewidth]{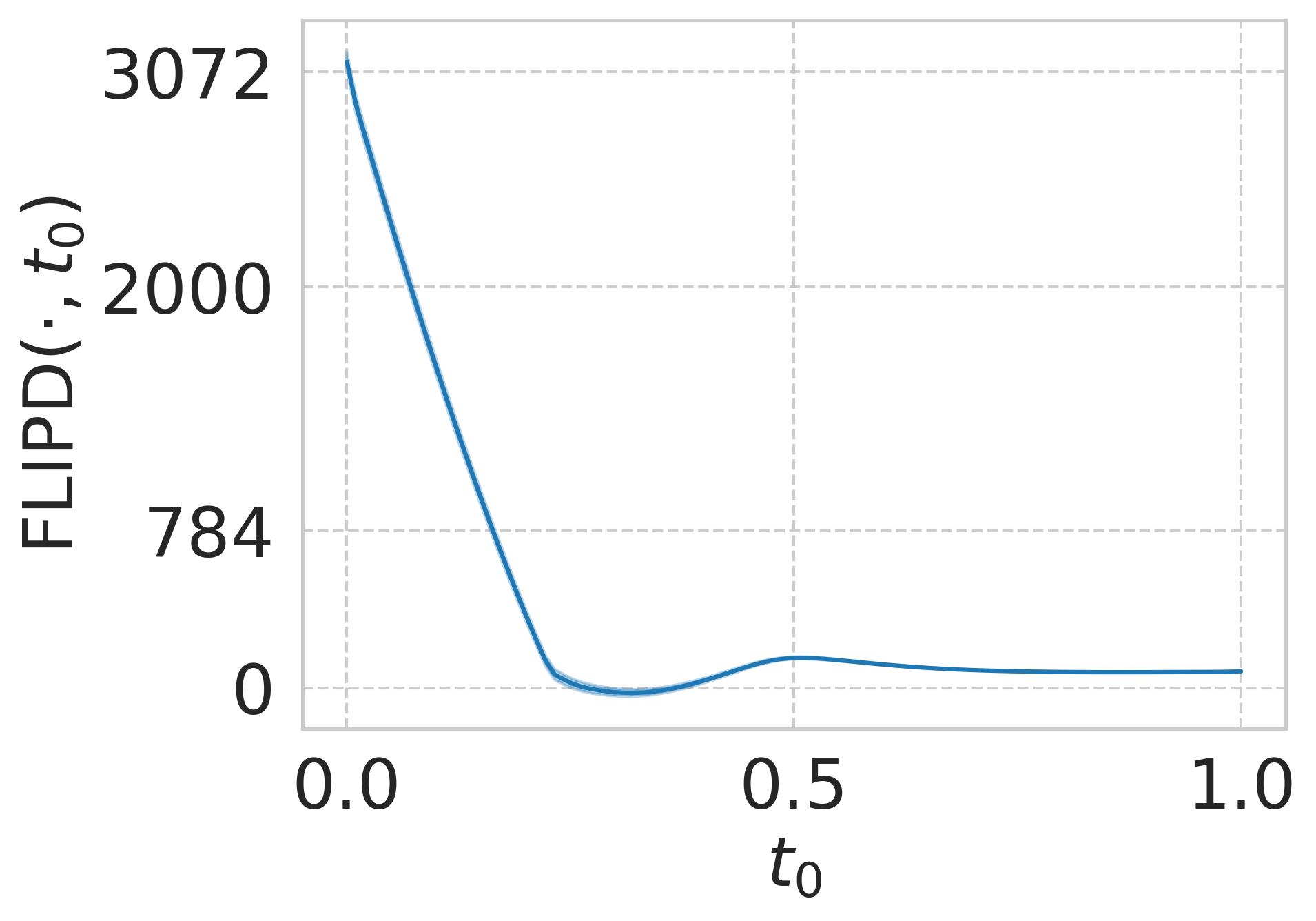}
        \vspace{-0.5cm}
        \subcaption{SVHN with MLP backbone.}
    \end{subfigure}%
    \hfill
    \begin{subfigure}{0.245\textwidth}
        \centering
        \includegraphics[width=\linewidth]{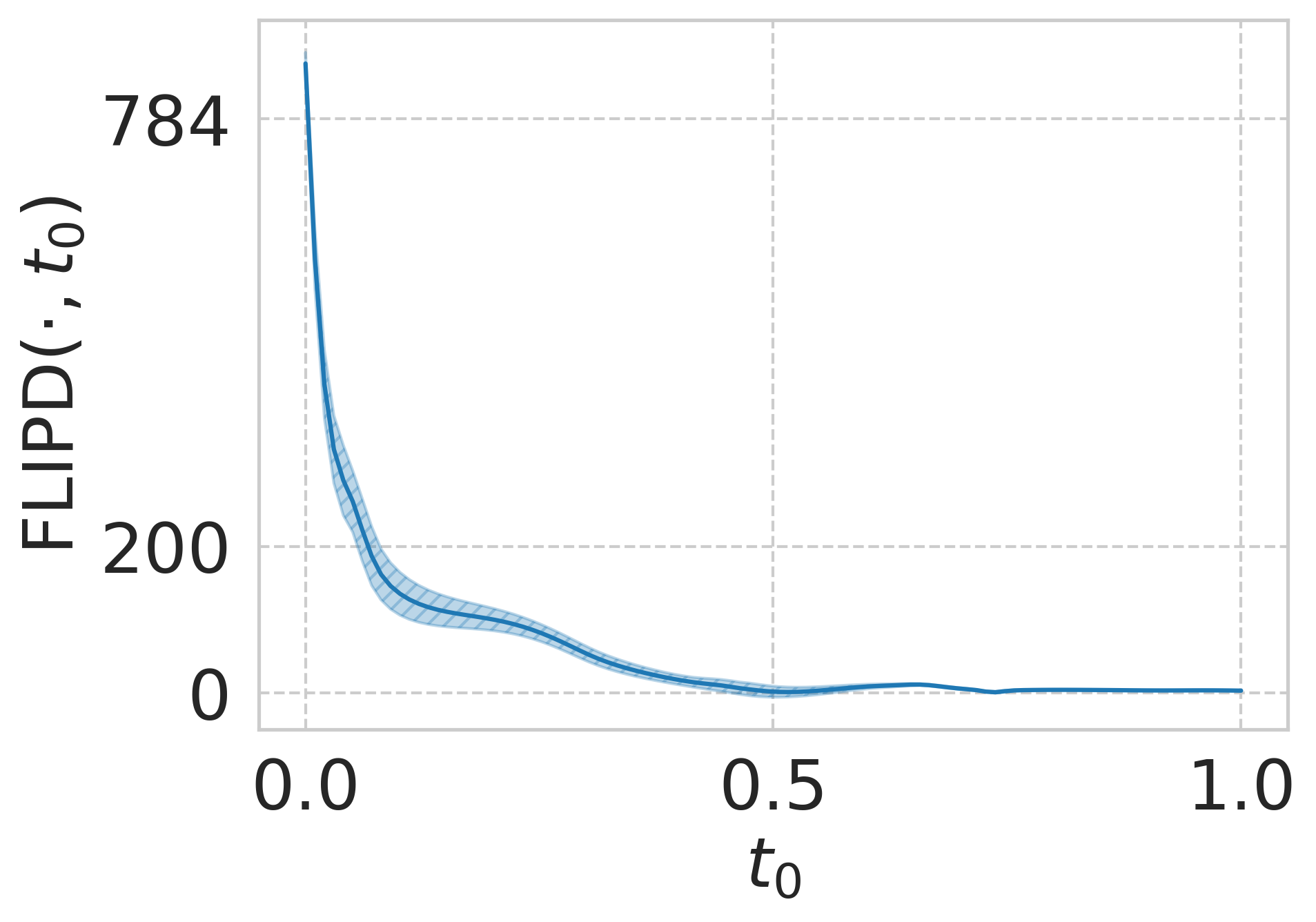}
        \vspace{-0.5cm}
        \subcaption{MNIST with MLP backbone.}
    \end{subfigure}%
    \hfill
    \begin{subfigure}{0.245\textwidth}
        \centering
        \includegraphics[width=\linewidth]{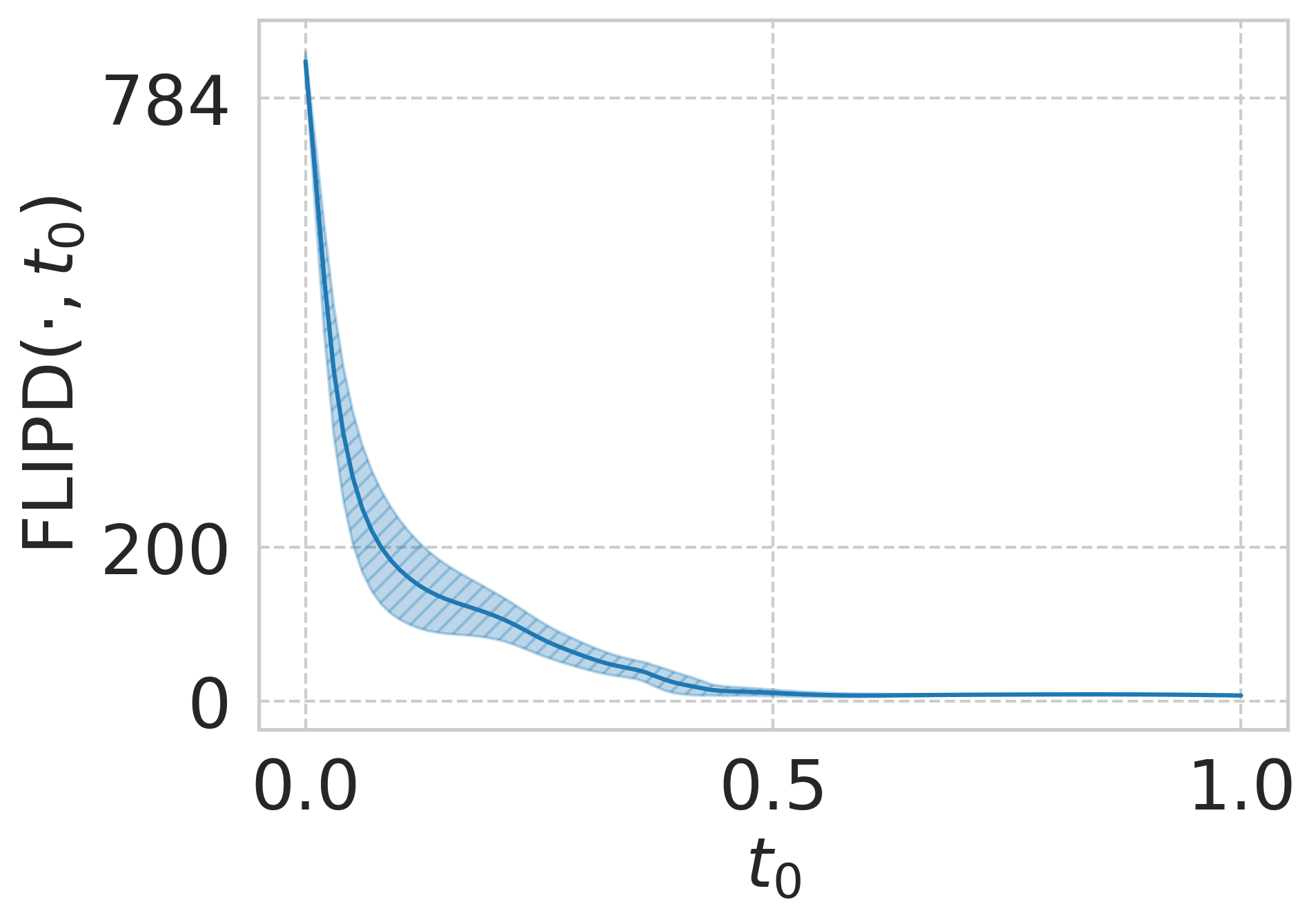}
        \vspace{-0.5cm}
        \subcaption{FMNIST with MLP backbone.}
    \end{subfigure}%
    \caption{\method~curves from all the different DMs with different score network backbones.}
    \vspace{-0.1cm}
    \label{fig:image_lid_curves}
\end{figure*}

\clearpage
\begin{figure}[H]
    \centering
    \includegraphics[width=\textwidth]{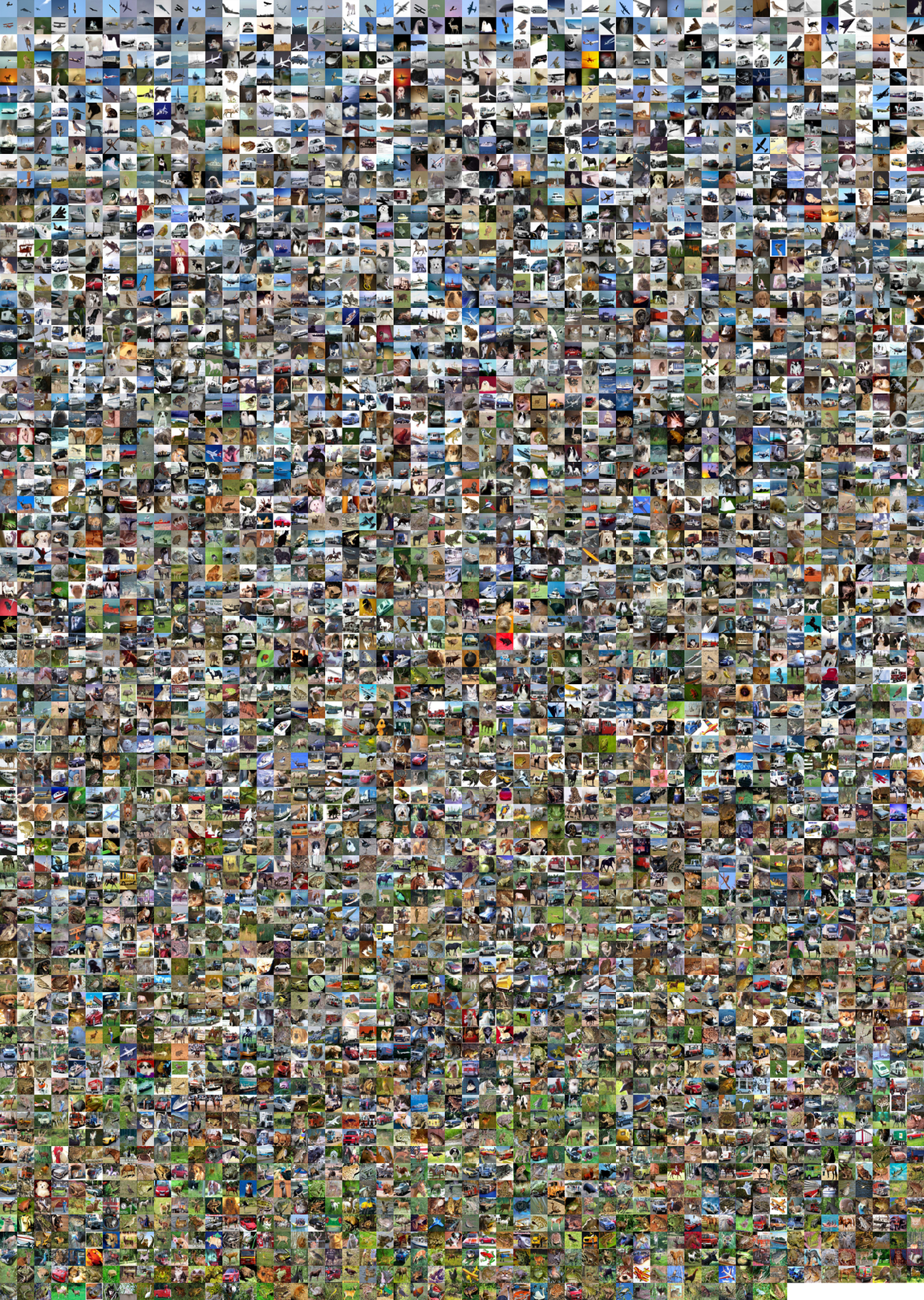}
    \caption{CIFAR10 sorted (left to right and top to bottom) by \method~estimate (UNet) at $t_0=0.01$.}
    \label{fig:cifar10_ordering}
\end{figure}

\clearpage
\begin{figure}[H]
    \centering
    \includegraphics[width=\textwidth]{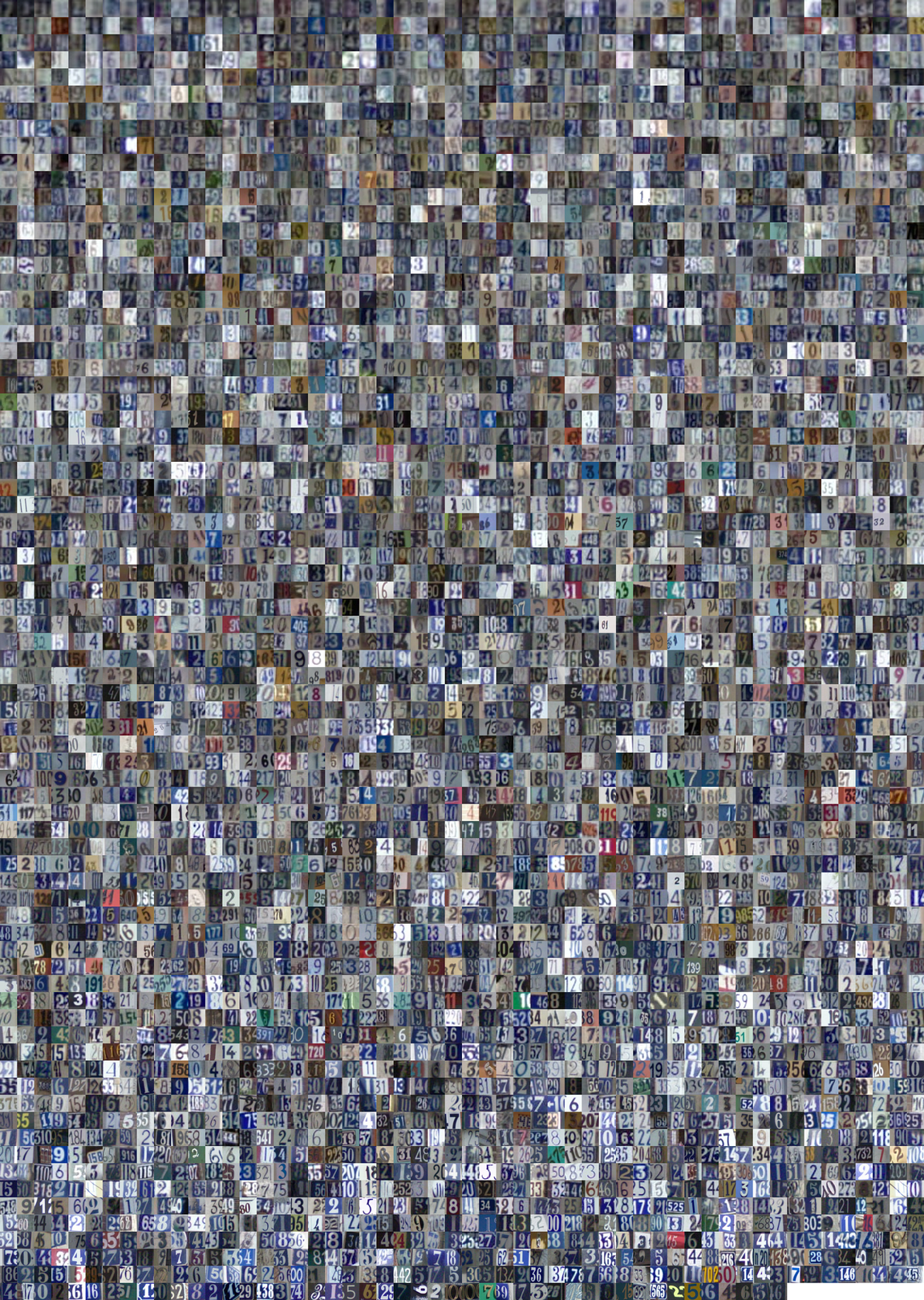}
    \caption{SVHN sorted (left to right and top to bottom) by \method~estimate (UNet) at $t_0=0.01$.}
    \label{fig:svhn_ordering}
\end{figure}

\clearpage
\begin{figure}[H]
    \centering
    \includegraphics[width=\textwidth]{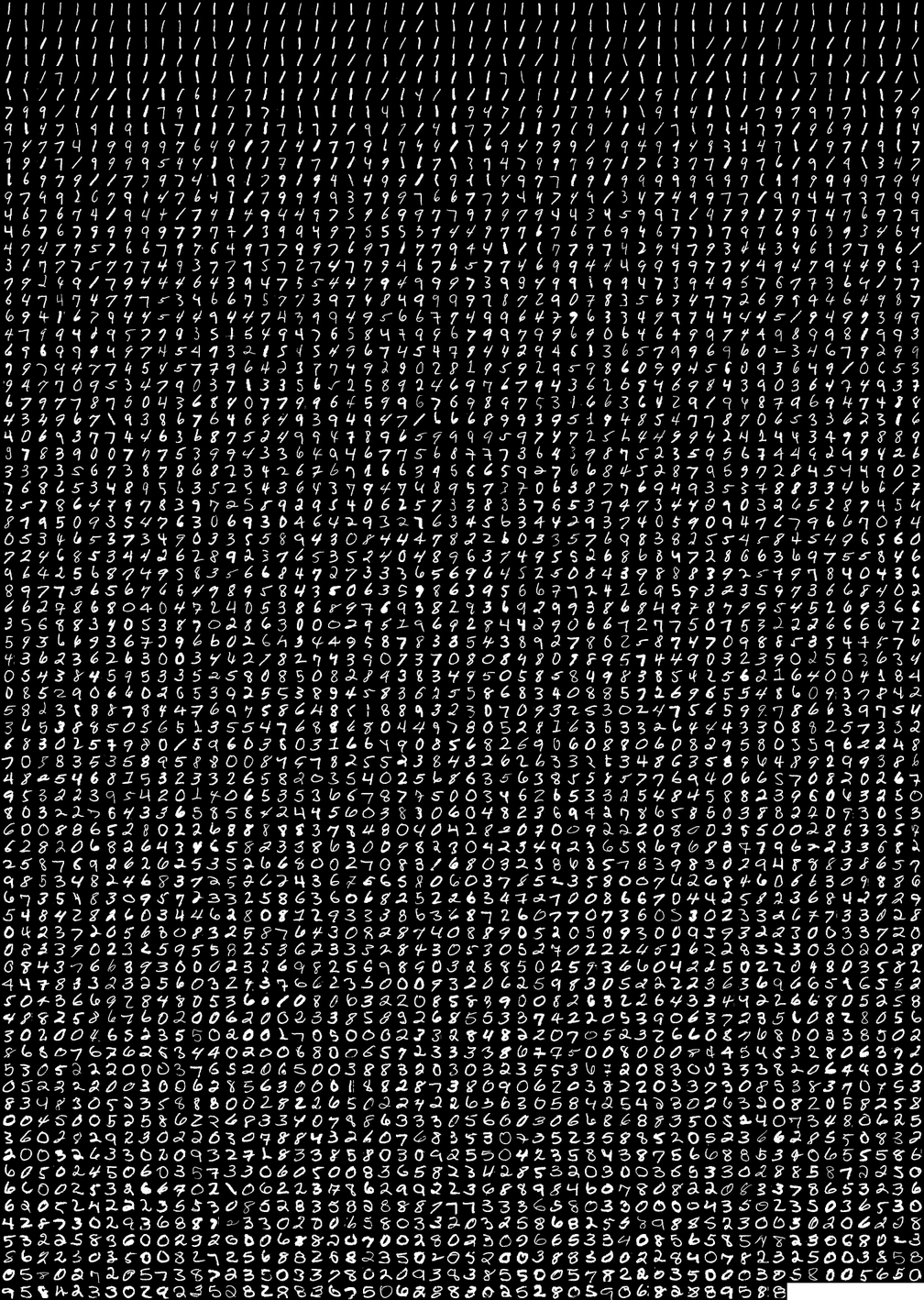}
    \caption{MNIST sorted (left to right and top to bottom) by \method~estimate (MLP) at $t_0=0.1$.}
    \label{fig:mnist_ordering}
\end{figure}

\clearpage
\begin{figure}[H]
    \centering
    \includegraphics[width=\textwidth]{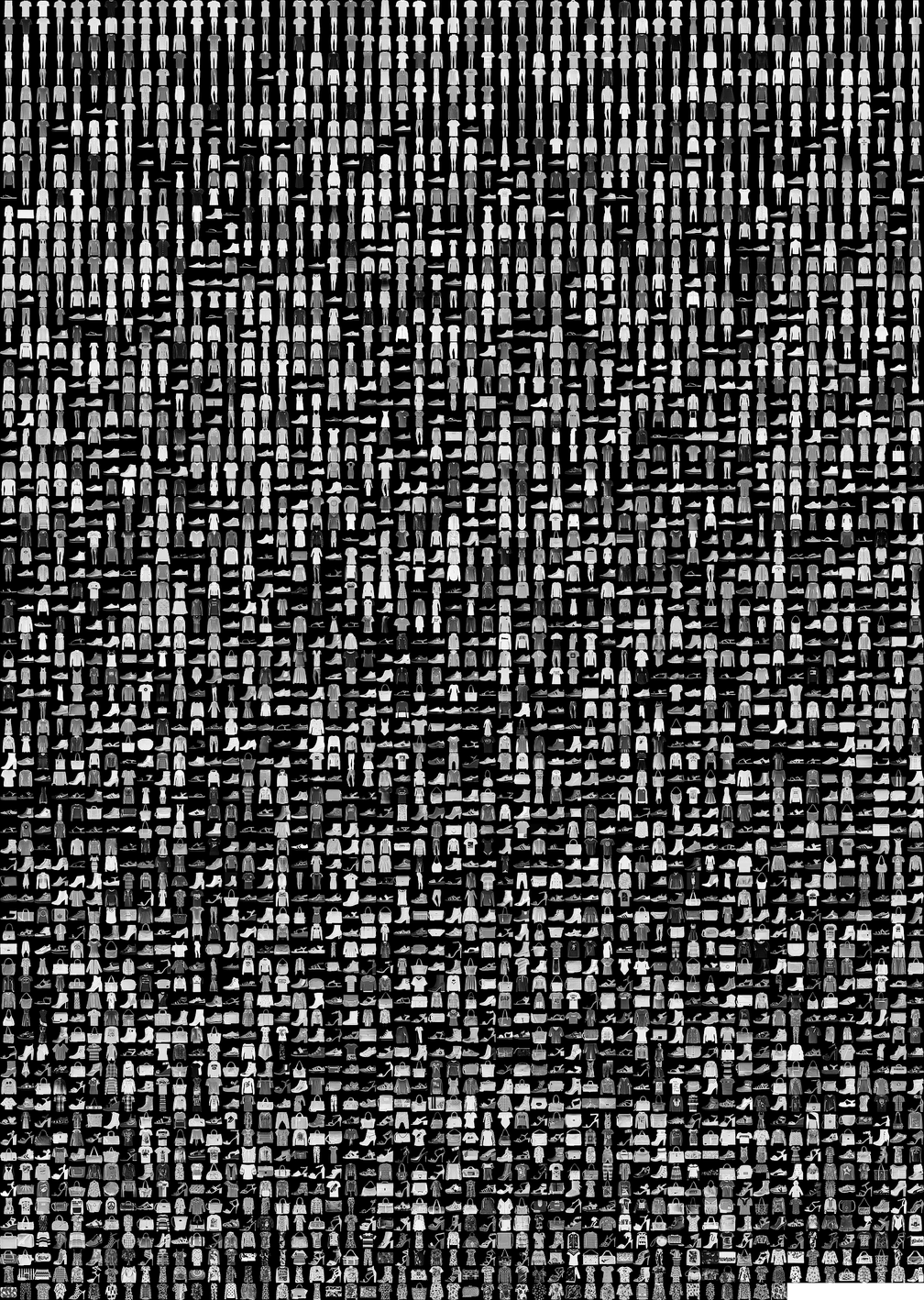}
    \caption{FMNIST sorted (left to right and top to bottom) by \method~estimate (MLP) at $t_0=0.1$.}
    \label{fig:fmnist_ordering}
\end{figure}

\clearpage

\begin{figure*}[!htp]\captionsetup{font=footnotesize, labelformat=simple, labelsep=colon}
    \centering
    
    \begin{subfigure}{0.49\textwidth}
        \centering
        \includegraphics[width=\linewidth]{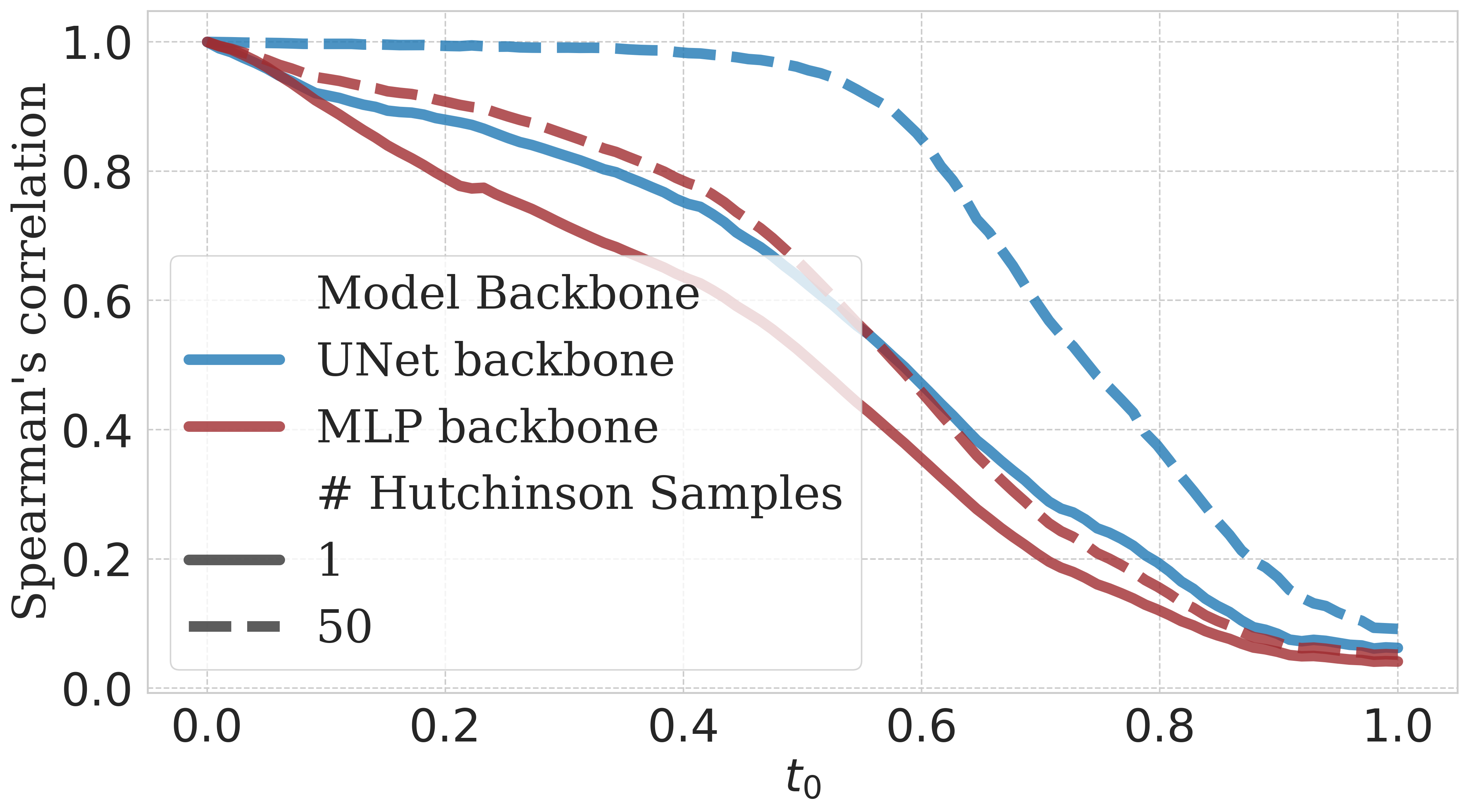}
        \vspace{-0.5cm}
        \subcaption{Multiscale Spearman's correlation on CIFAR10.}
    \end{subfigure}%
    \hfill
    \begin{subfigure}{0.49\textwidth}
        \centering
        \includegraphics[width=\linewidth]{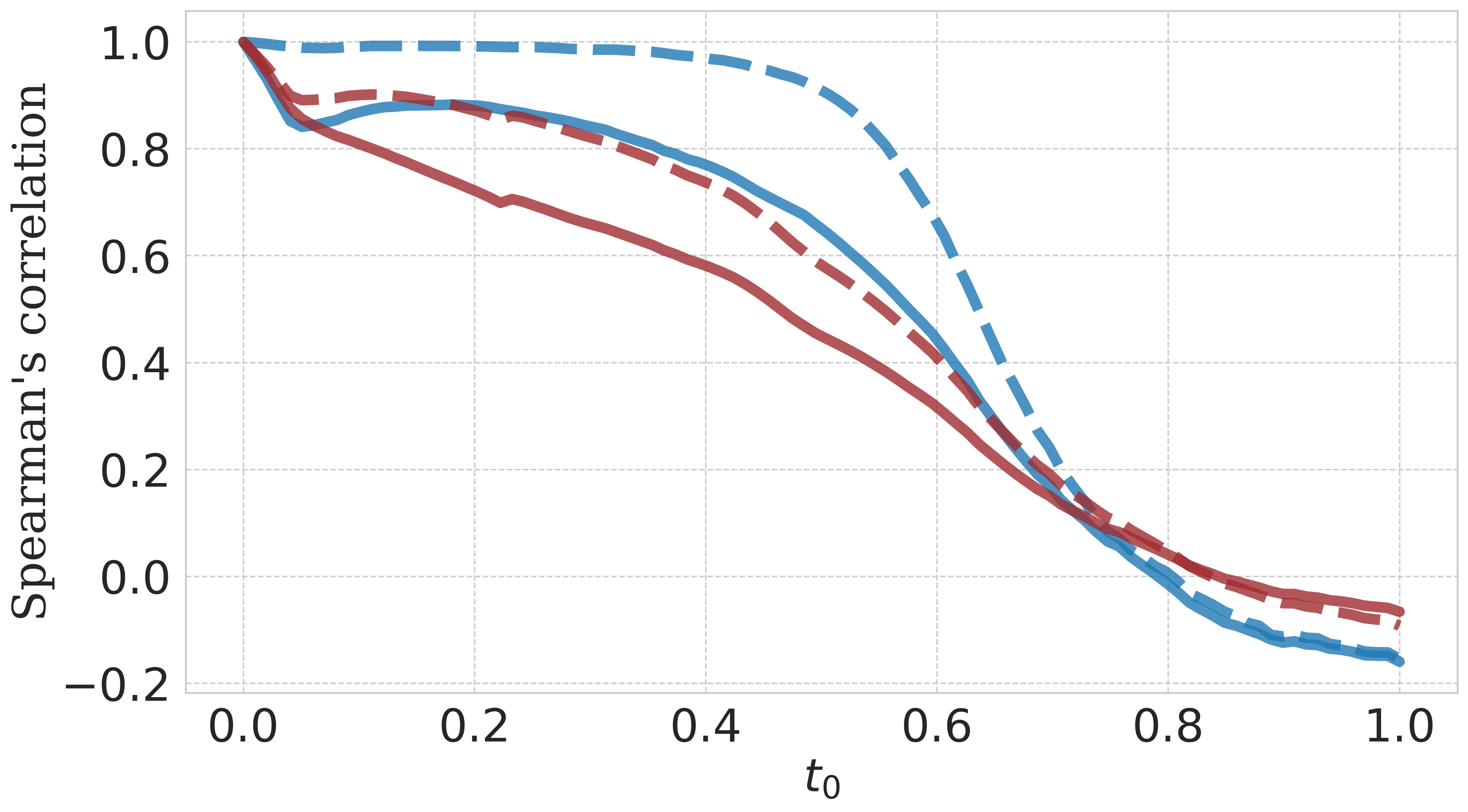}
        \vspace{-0.5cm}
        \subcaption{Multiscale Spearman's correlation on SVHN.}
    \end{subfigure}%
    \\
    \begin{subfigure}{0.49\textwidth}
        \centering
        \includegraphics[width=\linewidth]{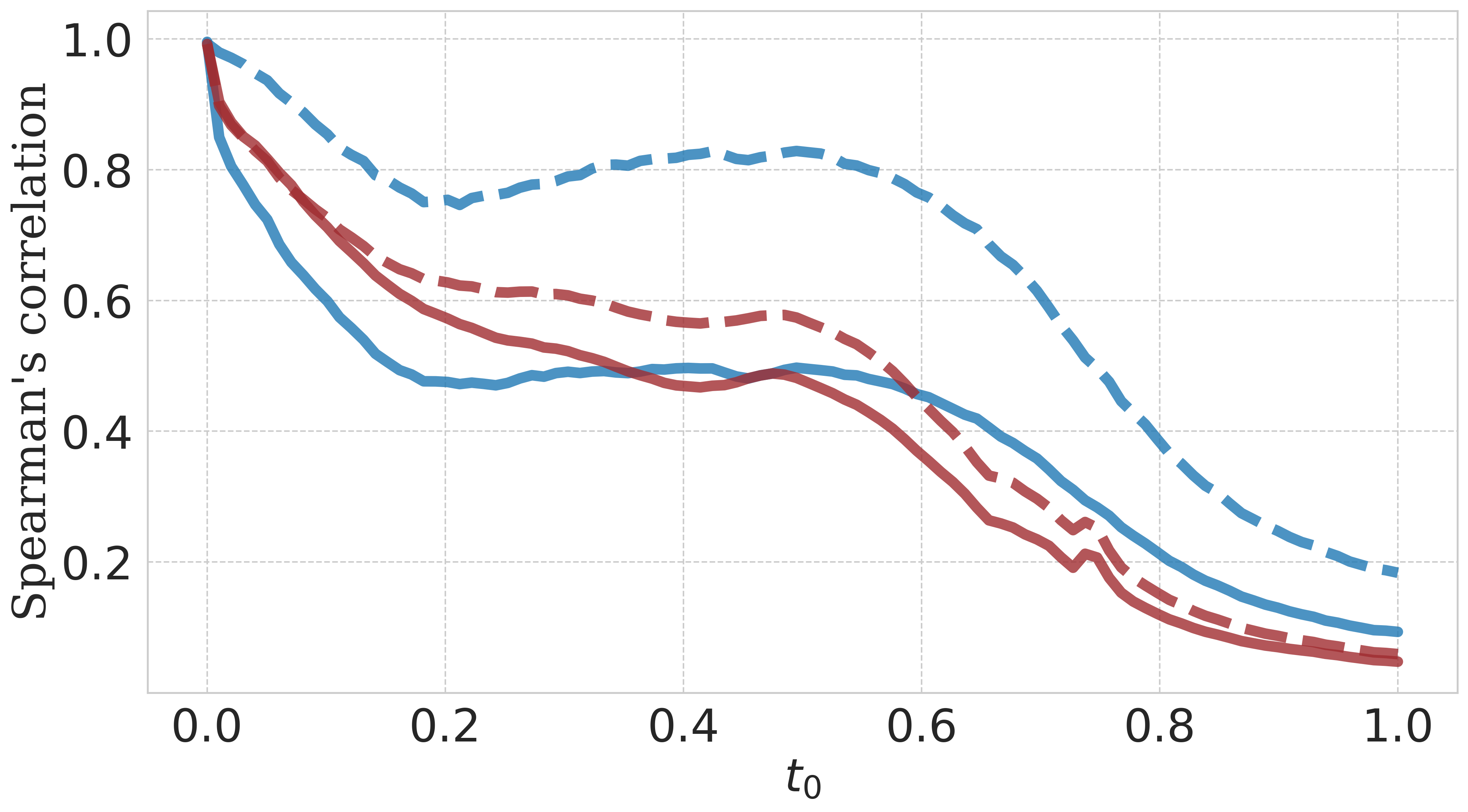}
        \vspace{-0.5cm}
        \subcaption{Multiscale Spearman's correlation on MNIST.}
    \end{subfigure}%
    \hfill
    \begin{subfigure}{0.49\textwidth}
        \centering
        \includegraphics[width=\linewidth]{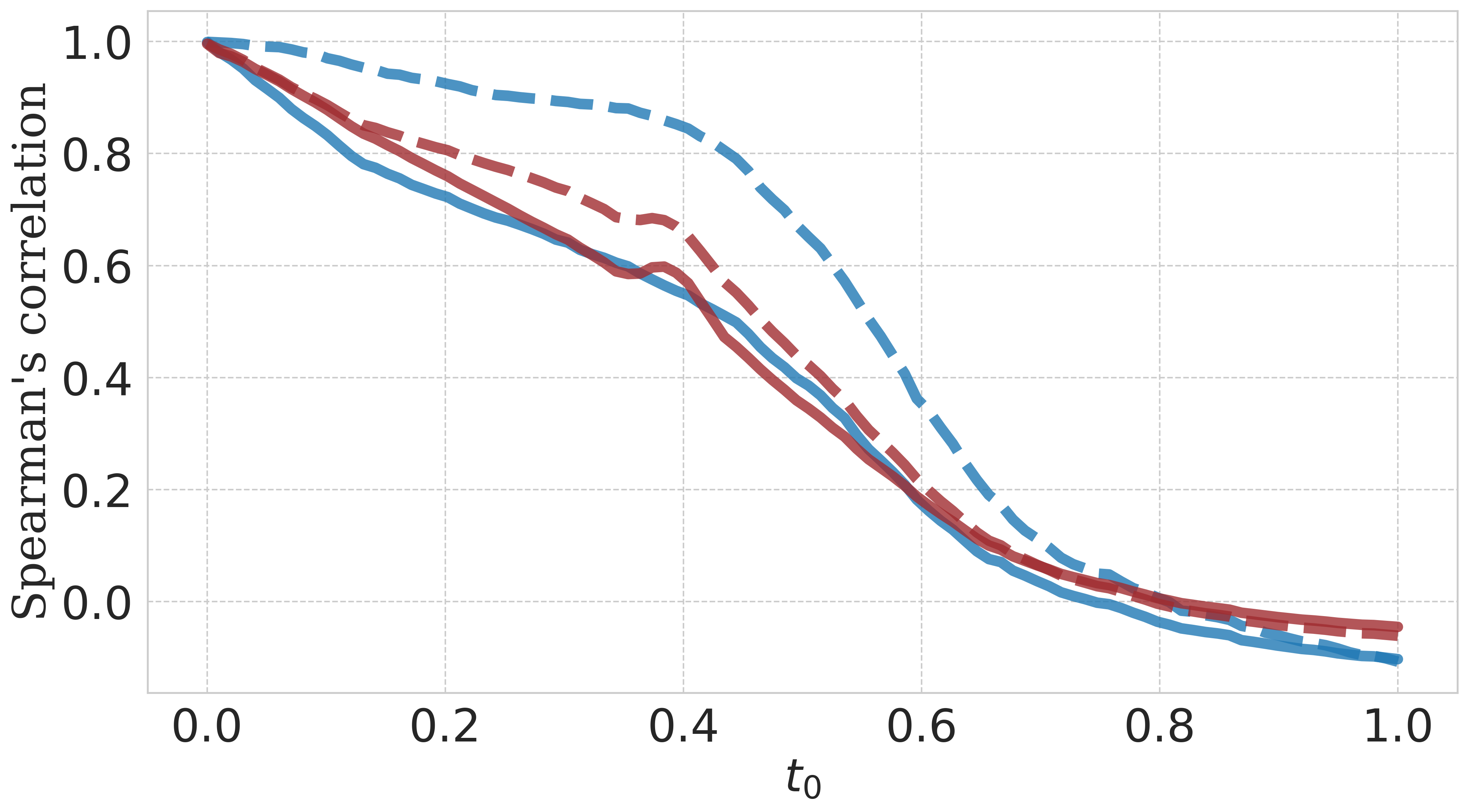}
        \vspace{-0.5cm}
        \subcaption{Multiscale Spearman's correlation on FMNIST.}
    \end{subfigure}%
    \caption{Spearman's correlation of \method~estimates while using different numbers of Hutchinson samples compared to computing the trace term of \method~deterministically with $D$ Jacobian vector product calls. These estimates are evaluated at different values of $t_0 \in (0, 1)$ on four datasets using the UNet and MLP backbones.}
    \vspace{-0.1cm}
    \label{fig:hutchinson_trend}
\end{figure*}

\begin{figure*}[!htp]\captionsetup{font=footnotesize, labelformat=simple, labelsep=colon}
    \centering
    
    \begin{subfigure}{0.49\textwidth}
        \centering
        \includegraphics[width=\linewidth]{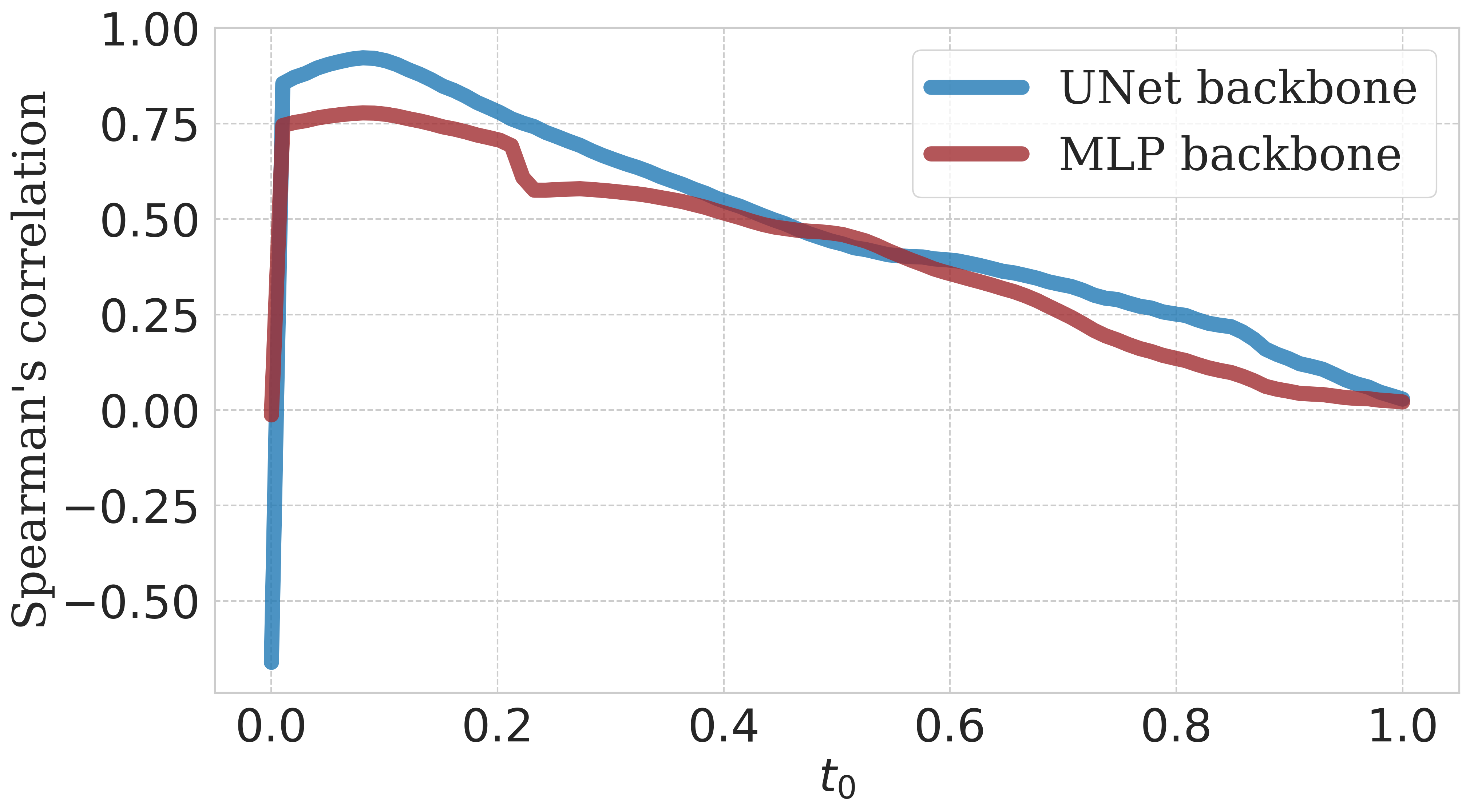}
        \vspace{-0.5cm}
        \subcaption{Multiscale correlation with PNG on CIFAR10.}
    \end{subfigure}%
    \hfill
    \begin{subfigure}{0.49\textwidth}
        \centering
        \includegraphics[width=\linewidth]{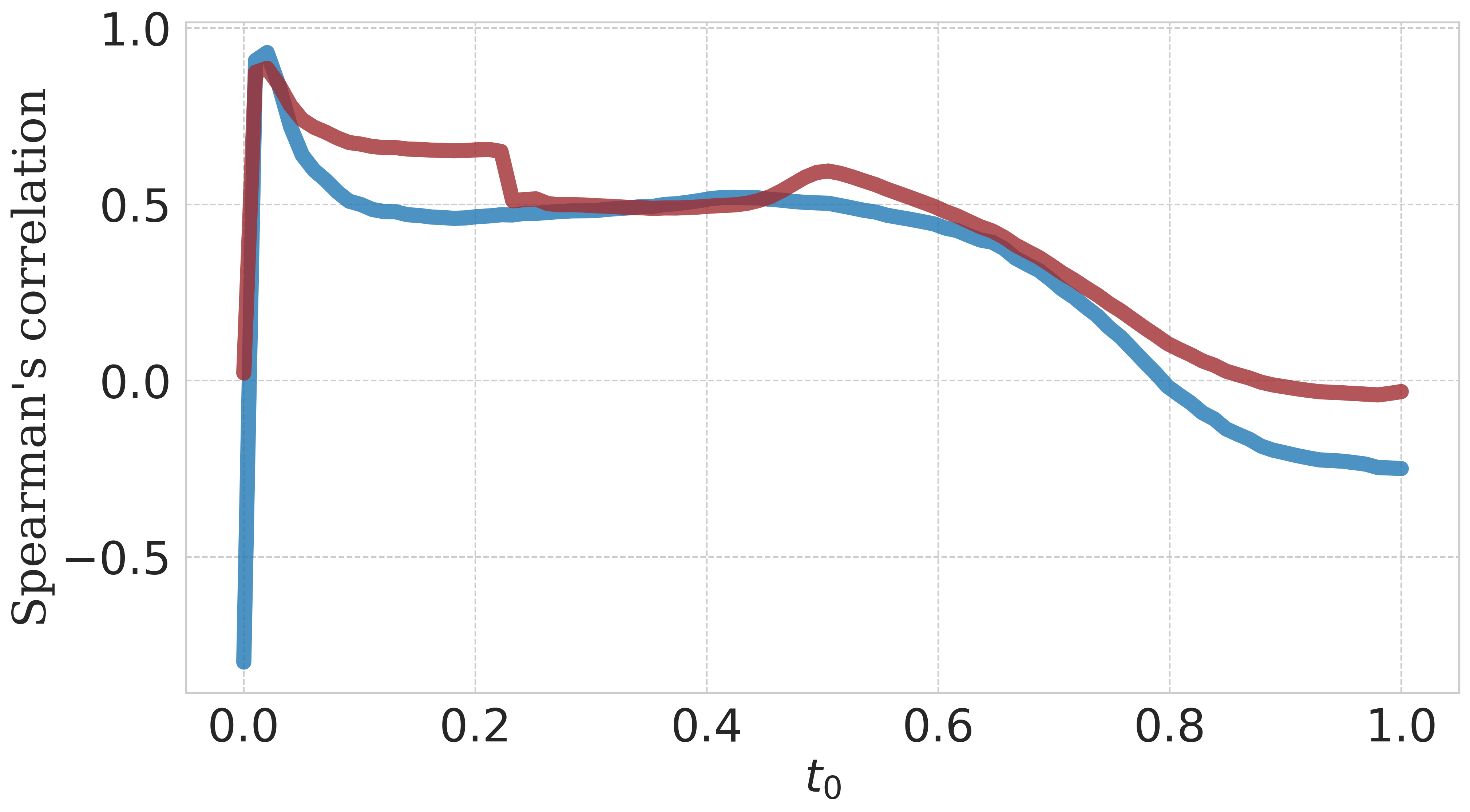}
        \vspace{-0.5cm}
        \subcaption{Multiscale correlation with PNG on SVHN.}
    \end{subfigure}%
    \\
    \begin{subfigure}{0.49\textwidth}
        \centering
        \includegraphics[width=\linewidth]{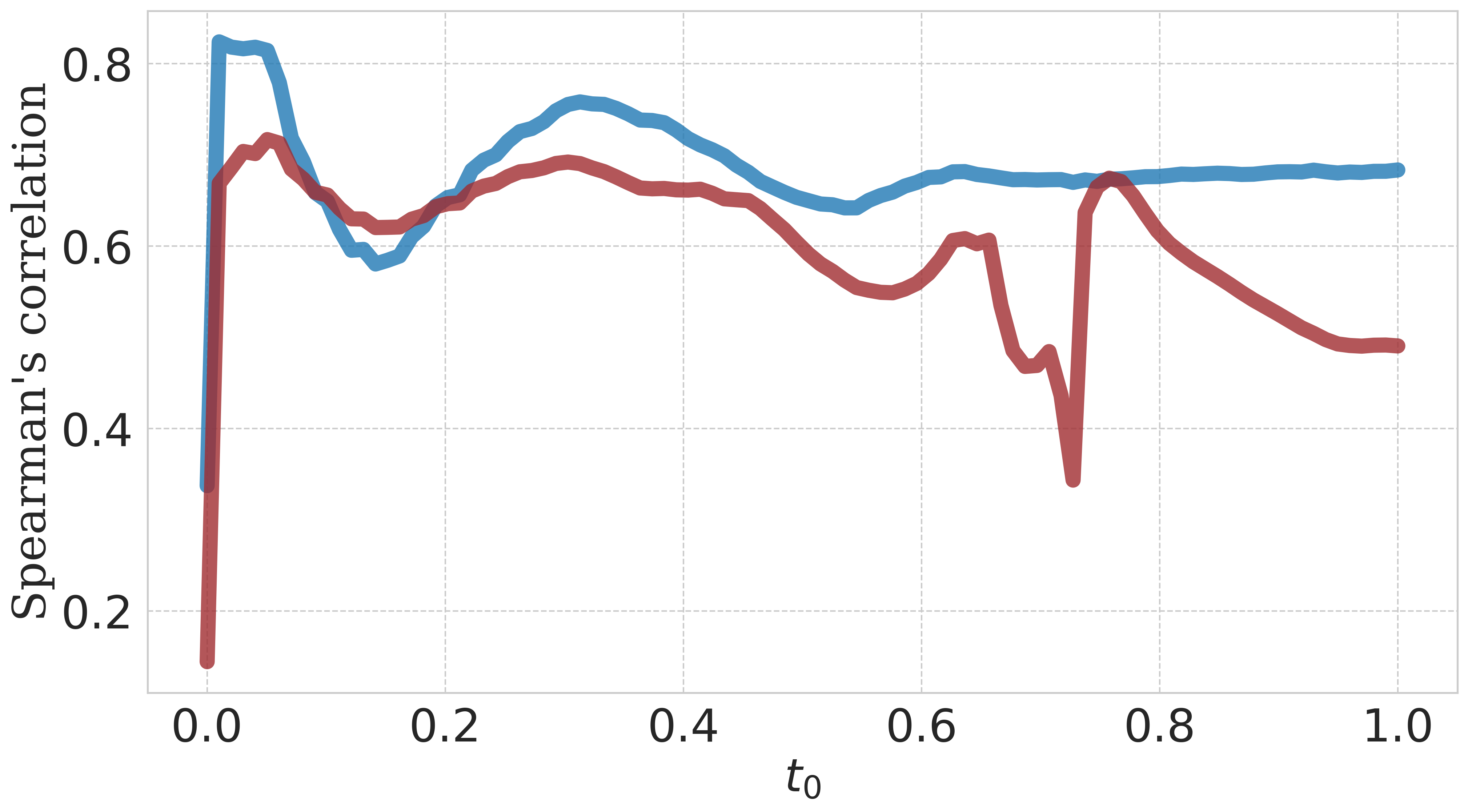}
        \vspace{-0.5cm}
        \subcaption{Multiscale correlation with PNG on MNIST.}
    \end{subfigure}%
    \hfill
    \begin{subfigure}{0.49\textwidth}
        \centering
        \includegraphics[width=\linewidth]{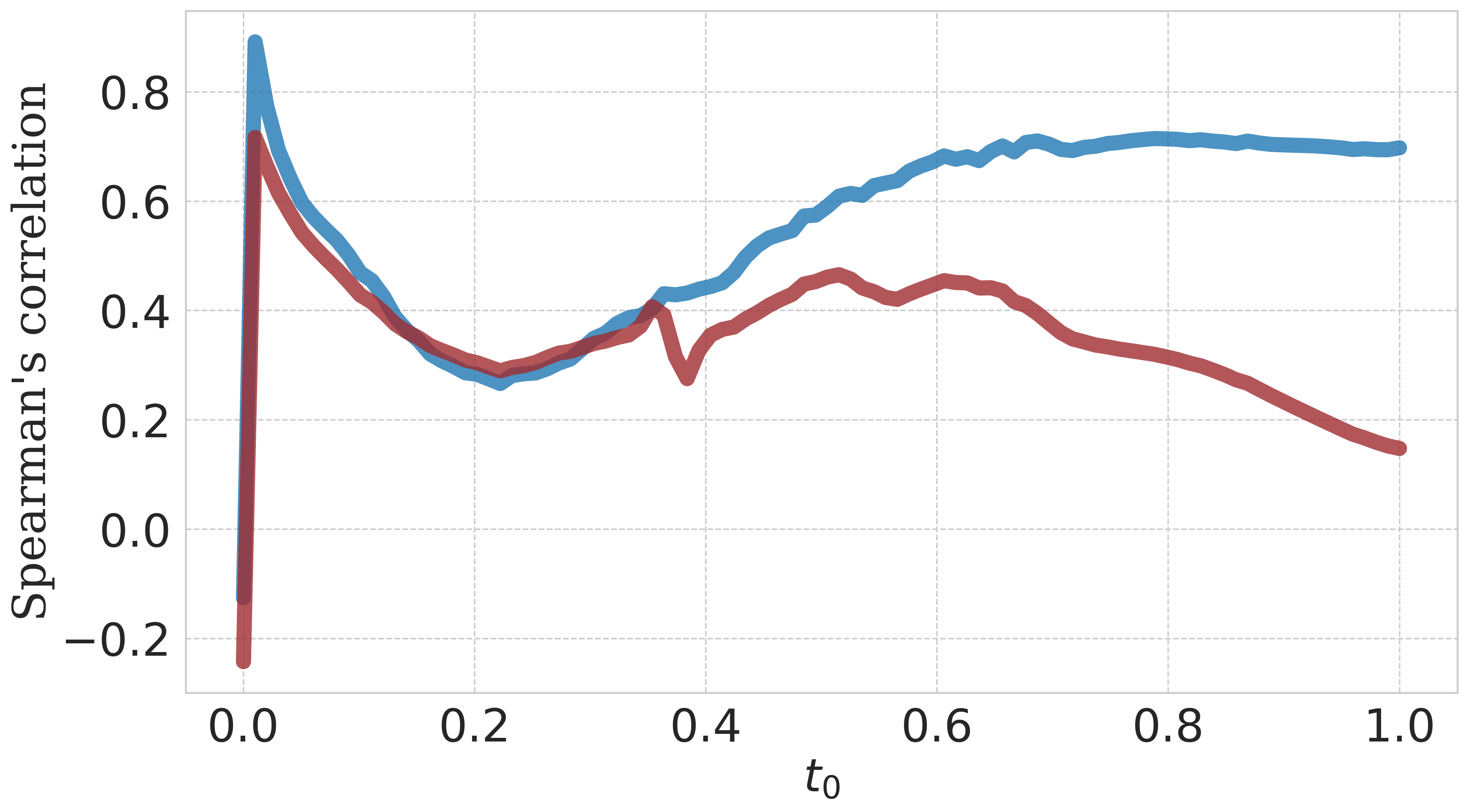}
        \vspace{-0.5cm}
        \subcaption{Multiscale correlation with PNG on FMNIST.}
    \end{subfigure}%
    \caption{Spearman's correlation of \method~estimates with PNG as we sweep $t_0 \in (0, 1)$ on different backbones and different image datasets.}
    \vspace{-0.1cm}
    \label{fig:correlation_with_png}
\end{figure*}

\clearpage
\begin{figure*}[!htp]\captionsetup{font=footnotesize, labelformat=simple, labelsep=colon}
    \centering
    
    \begin{subfigure}{0.24\textwidth}
        \centering
        \includegraphics[width=\linewidth]{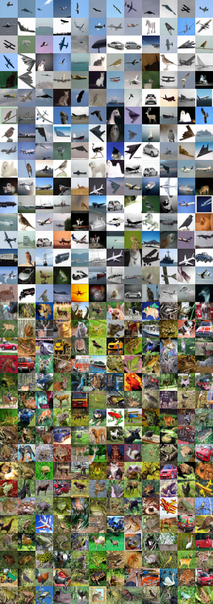}
        \vspace{-0.5cm}
        \subcaption{$t_0=0.01$.}
    \end{subfigure}%
    \hfill
    \begin{subfigure}{0.24\textwidth}
        \centering
        \includegraphics[width=\linewidth]{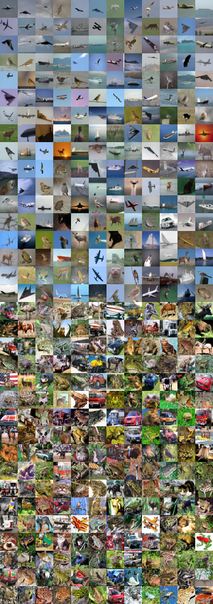}
        \vspace{-0.5cm}
        \subcaption{$t_0=0.1$.}
    \end{subfigure}%
    \hfill
    \begin{subfigure}{0.24\textwidth}
        \centering
        \includegraphics[width=\linewidth]{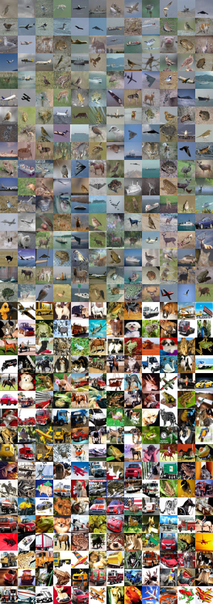}
        \vspace{-0.5cm}
        \subcaption{$t_0=0.3$.}
    \end{subfigure}%
    \hfill
    \begin{subfigure}{0.24\textwidth}
        \centering
        \includegraphics[width=\linewidth]{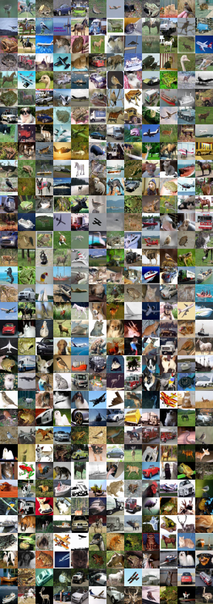}
        \vspace{-0.5cm}
        \subcaption{$t_0=0.8$.}
    \end{subfigure}%
    \caption{The 204 smallest (top) and 204 largest (bottom) CIFAR10 \method~estimates with UNet evaluated at different $t_0$.}
    \vspace{-0.1cm}
    \label{fig:multiscale_cifar10_cnn}
\end{figure*}

\begin{figure*}[!htp]\captionsetup{font=footnotesize, labelformat=simple, labelsep=colon}
    \centering
    
    \begin{subfigure}{0.24\textwidth}
        \centering
        \includegraphics[width=\linewidth]{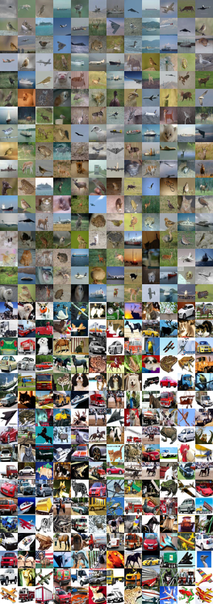}
        \vspace{-0.5cm}
        \subcaption{$t_0=0.01$.}
    \end{subfigure}%
    \hfill
    \begin{subfigure}{0.24\textwidth}
        \centering
        \includegraphics[width=\linewidth]{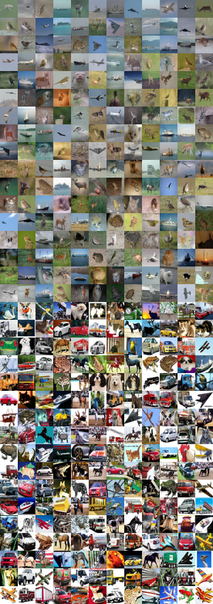}
        \vspace{-0.5cm}
        \subcaption{$t_0=0.1$.}
    \end{subfigure}%
    \hfill
    \begin{subfigure}{0.24\textwidth}
        \centering
        \includegraphics[width=\linewidth]{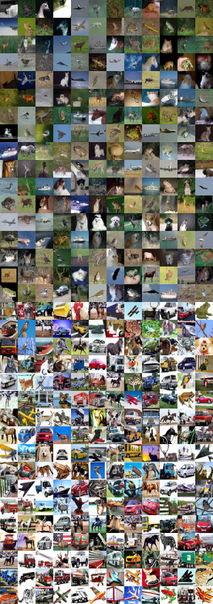}
        \vspace{-0.5cm}
        \subcaption{$t_0=0.3$.}
    \end{subfigure}%
    \hfill
    \begin{subfigure}{0.24\textwidth}
        \centering
        \includegraphics[width=\linewidth]{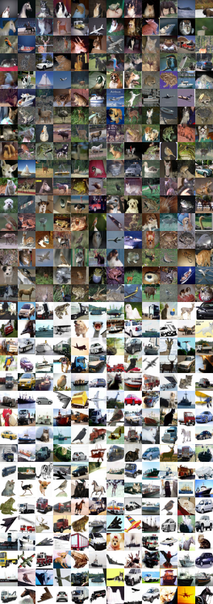}
        \vspace{-0.5cm}
        \subcaption{$t_0=0.8$.}
    \end{subfigure}%
    \caption{The 204 smallest (top) and 204 largest (bottom) CIFAR10 \method~estimates with MLP evaluated at different $t_0$.}
    \vspace{-0.1cm}
    \label{fig:multiscale_cifar10_mlp}
\end{figure*}
\clearpage 
\begin{figure*}[!htp]\captionsetup{font=footnotesize, labelformat=simple, labelsep=colon}
    \centering
    
    \begin{subfigure}{0.24\textwidth}
        \centering
        \includegraphics[width=\linewidth]{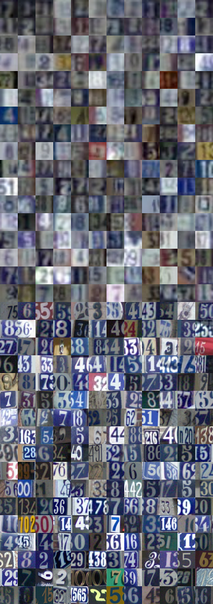}
        \vspace{-0.5cm}
        \subcaption{$t_0=0.01$.}
    \end{subfigure}%
    \hfill
    \begin{subfigure}{0.24\textwidth}
        \centering
        \includegraphics[width=\linewidth]{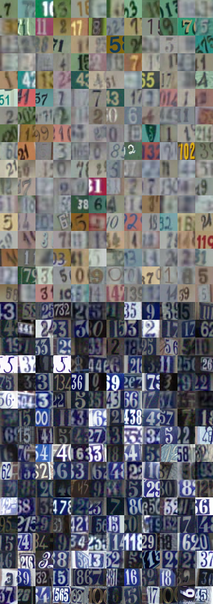}
        \vspace{-0.5cm}
        \subcaption{$t_0=0.1$.}
    \end{subfigure}%
    \hfill
    \begin{subfigure}{0.24\textwidth}
        \centering
        \includegraphics[width=\linewidth]{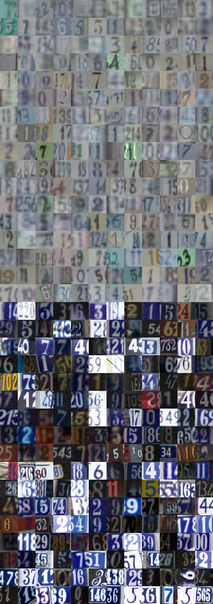}
        \vspace{-0.5cm}
        \subcaption{$t_0=0.3$.}
    \end{subfigure}%
    \hfill
    \begin{subfigure}{0.24\textwidth}
        \centering
        \includegraphics[width=\linewidth]{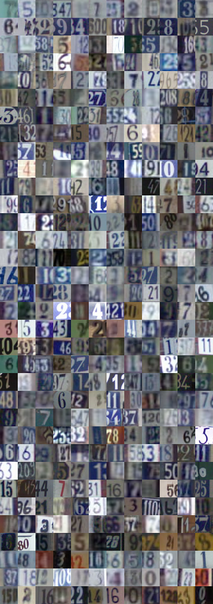}
        \vspace{-0.5cm}
        \subcaption{$t_0=0.8$.}
    \end{subfigure}%
    \caption{The 204 smallest (top) and 204 largest (bottom) SVHN \method~with UNet estimates evaluated at different $t_0$.}
    \vspace{-0.1cm}
    \label{fig:multiscale_svhn_cnn}
\end{figure*}

\begin{figure*}[!htp]\captionsetup{font=footnotesize, labelformat=simple, labelsep=colon}
    \centering
    
    \begin{subfigure}{0.24\textwidth}
        \centering
        \includegraphics[width=\linewidth]{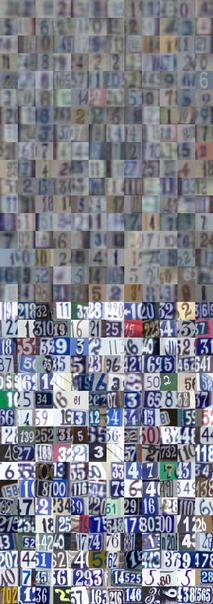}
        \vspace{-0.5cm}
        \subcaption{$t_0=0.01$.}
    \end{subfigure}%
    \hfill
    \begin{subfigure}{0.24\textwidth}
        \centering
        \includegraphics[width=\linewidth]{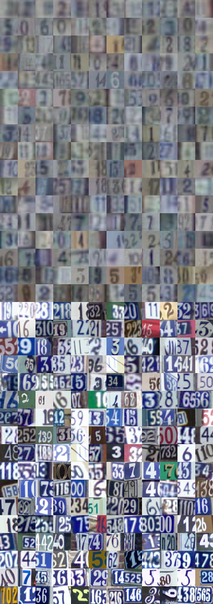}
        \vspace{-0.5cm}
        \subcaption{$t_0=0.1$.}
    \end{subfigure}%
    \hfill
    \begin{subfigure}{0.24\textwidth}
        \centering
        \includegraphics[width=\linewidth]{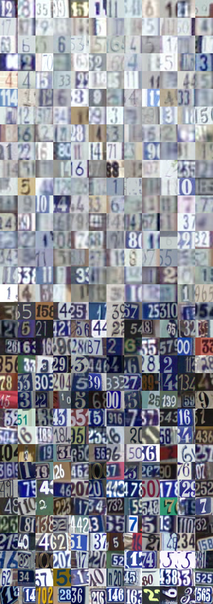}
        \vspace{-0.5cm}
        \subcaption{$t_0=0.3$.}
    \end{subfigure}%
    \hfill
    \begin{subfigure}{0.24\textwidth}
        \centering
        \includegraphics[width=\linewidth]{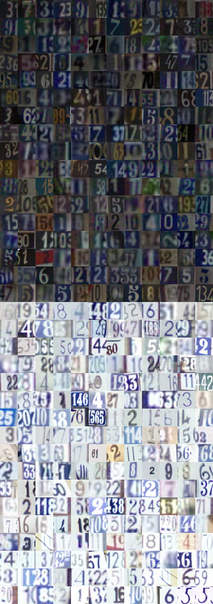}
        \vspace{-0.5cm}
        \subcaption{$t_0=0.8$.}
    \end{subfigure}%
    \caption{The 204 smallest (top) and 204 largest (bottom) SVHN \method~with MLP estimates evaluated at different $t_0$.}
    \vspace{-0.1cm}
    \label{fig:multiscale_svhn_mlp}
\end{figure*}

\clearpage
\begin{figure*}[!htp]\captionsetup{font=footnotesize, labelformat=simple, labelsep=colon}
    \centering
    
    \begin{subfigure}{0.24\textwidth}
        \centering
        \includegraphics[width=\linewidth]{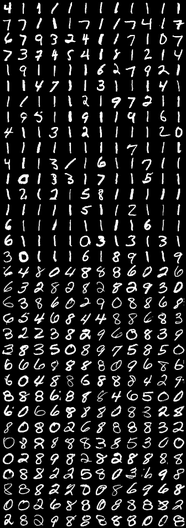}
        \vspace{-0.5cm}
        \subcaption{$t_0=0.01$.}
    \end{subfigure}%
    \hfill
    \begin{subfigure}{0.24\textwidth}
        \centering
        \includegraphics[width=\linewidth]{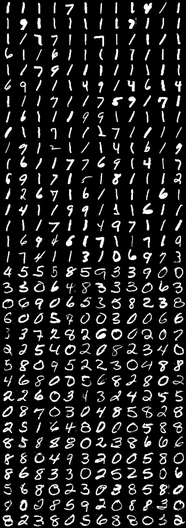}
        \vspace{-0.5cm}
        \subcaption{$t_0=0.1$.}
    \end{subfigure}%
    \hfill
    \begin{subfigure}{0.24\textwidth}
        \centering
        \includegraphics[width=\linewidth]{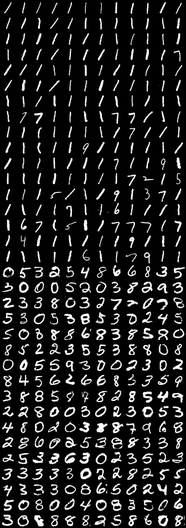}
        \vspace{-0.5cm}
        \subcaption{$t_0=0.3$.}
    \end{subfigure}%
    \hfill
    \begin{subfigure}{0.24\textwidth}
        \centering
        \includegraphics[width=\linewidth]{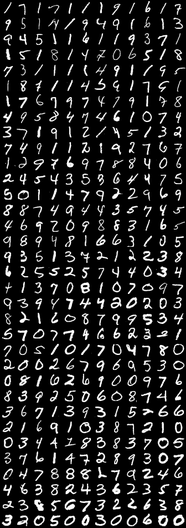}
        \vspace{-0.5cm}
        \subcaption{$t_0=0.8$.}
    \end{subfigure}%
    \caption{The 204 smallest (top) and 204 largest (bottom) MNIST \method~with UNet estimates evaluated at different $t_0$.}
    \vspace{-0.1cm}
    \label{fig:multiscale_mnist_cnn}
\end{figure*}

\begin{figure*}[!htp]\captionsetup{font=footnotesize, labelformat=simple, labelsep=colon}
    \centering
    
    \begin{subfigure}{0.24\textwidth}
        \centering
        \includegraphics[width=\linewidth]{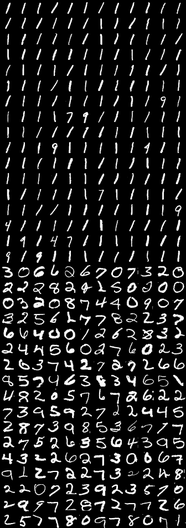}
        \vspace{-0.5cm}
        \subcaption{$t_0=0.01$.}
    \end{subfigure}%
    \hfill
    \begin{subfigure}{0.24\textwidth}
        \centering
        \includegraphics[width=\linewidth]{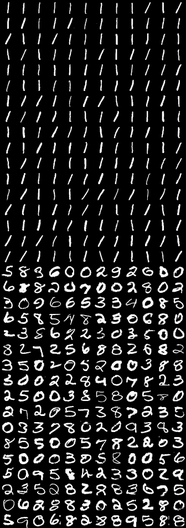}
        \vspace{-0.5cm}
        \subcaption{$t_0=0.1$.}
    \end{subfigure}%
    \hfill
    \begin{subfigure}{0.24\textwidth}
        \centering
        \includegraphics[width=\linewidth]{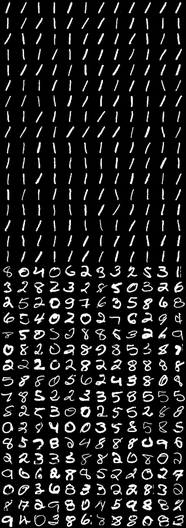}
        \vspace{-0.5cm}
        \subcaption{$t_0=0.3$.}
    \end{subfigure}%
    \hfill
    \begin{subfigure}{0.24\textwidth}
        \centering
        \includegraphics[width=\linewidth]{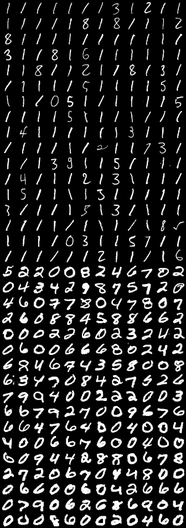}
        \vspace{-0.5cm}
        \subcaption{$t_0=0.8$.}
    \end{subfigure}%
    \caption{The 204 smallest (top) and 204 largest (bottom) MNIST \method~with MLP estimates evaluated at different $t_0$.}
    \vspace{-0.1cm}
    \label{fig:multiscale_mnist_mlp}
\end{figure*}
\clearpage

\begin{figure*}[!htp]\captionsetup{font=footnotesize, labelformat=simple, labelsep=colon}
    \centering
    
    \begin{subfigure}{0.24\textwidth}
        \centering
        \includegraphics[width=\linewidth]{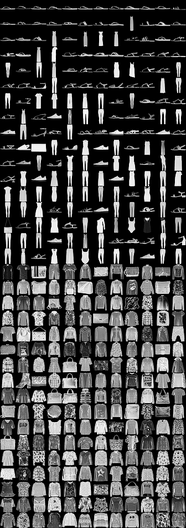}
        \vspace{-0.5cm}
        \subcaption{$t_0=0.01$.}
    \end{subfigure}%
    \hfill
    \begin{subfigure}{0.24\textwidth}
        \centering
        \includegraphics[width=\linewidth]{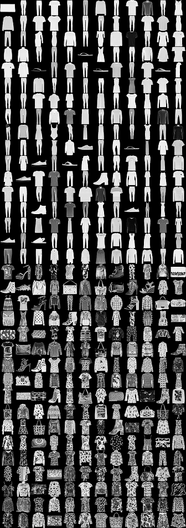}
        \vspace{-0.5cm}
        \subcaption{$t_0=0.1$.}
    \end{subfigure}%
    \hfill
    \begin{subfigure}{0.24\textwidth}
        \centering
        \includegraphics[width=\linewidth]{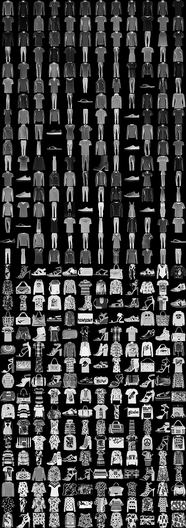}
        \vspace{-0.5cm}
        \subcaption{$t_0=0.3$.}
    \end{subfigure}%
    \hfill
    \begin{subfigure}{0.24\textwidth}
        \centering
        \includegraphics[width=\linewidth]{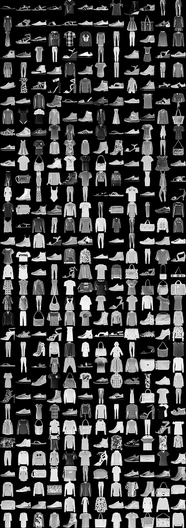}
        \vspace{-0.5cm}
        \subcaption{$t_0=0.8$.}
    \end{subfigure}%
    \caption{The 204 smallest (top) and 204 largest (bottom) FMNIST \method~estimates with UNet evaluated at different $t_0$.}
    \vspace{-0.1cm}
    \label{fig:multiscale_fmnist_cnn}
\end{figure*}

\begin{figure*}[!htp]\captionsetup{font=footnotesize, labelformat=simple, labelsep=colon}
    \centering
    
    \begin{subfigure}{0.24\textwidth}
        \centering
        \includegraphics[width=\linewidth]{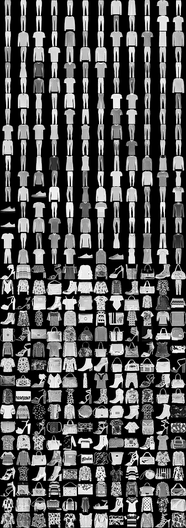}
        \vspace{-0.5cm}
        \subcaption{$t_0=0.01$.}
    \end{subfigure}%
    \hfill
    \begin{subfigure}{0.24\textwidth}
        \centering
        \includegraphics[width=\linewidth]{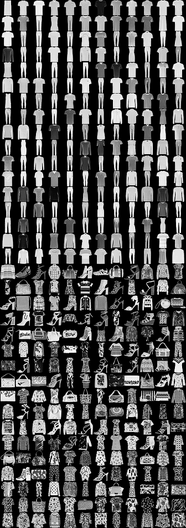}
        \vspace{-0.5cm}
        \subcaption{$t_0=0.1$.}
    \end{subfigure}%
    \hfill
    \begin{subfigure}{0.24\textwidth}
        \centering
        \includegraphics[width=\linewidth]{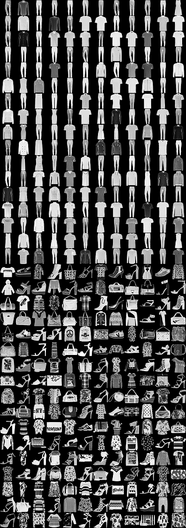}
        \vspace{-0.5cm}
        \subcaption{$t_0=0.3$.}
    \end{subfigure}%
    \hfill
    \begin{subfigure}{0.24\textwidth}
        \centering
        \includegraphics[width=\linewidth]{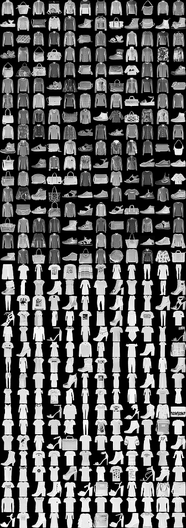}
        \vspace{-0.5cm}
        \subcaption{$t_0=0.8$.}
    \end{subfigure}%
    \caption{The 204 smallest (top) and 204 largest (bottom) FMNIST \method~estimates with MLP evaluated at different $t_0$.}
    \vspace{-0.1cm}
    \label{fig:multiscale_fmnist_mlp}
\end{figure*}

\clearpage

\begin{figure}[H]
    \centering
    \includegraphics[width=\textwidth]{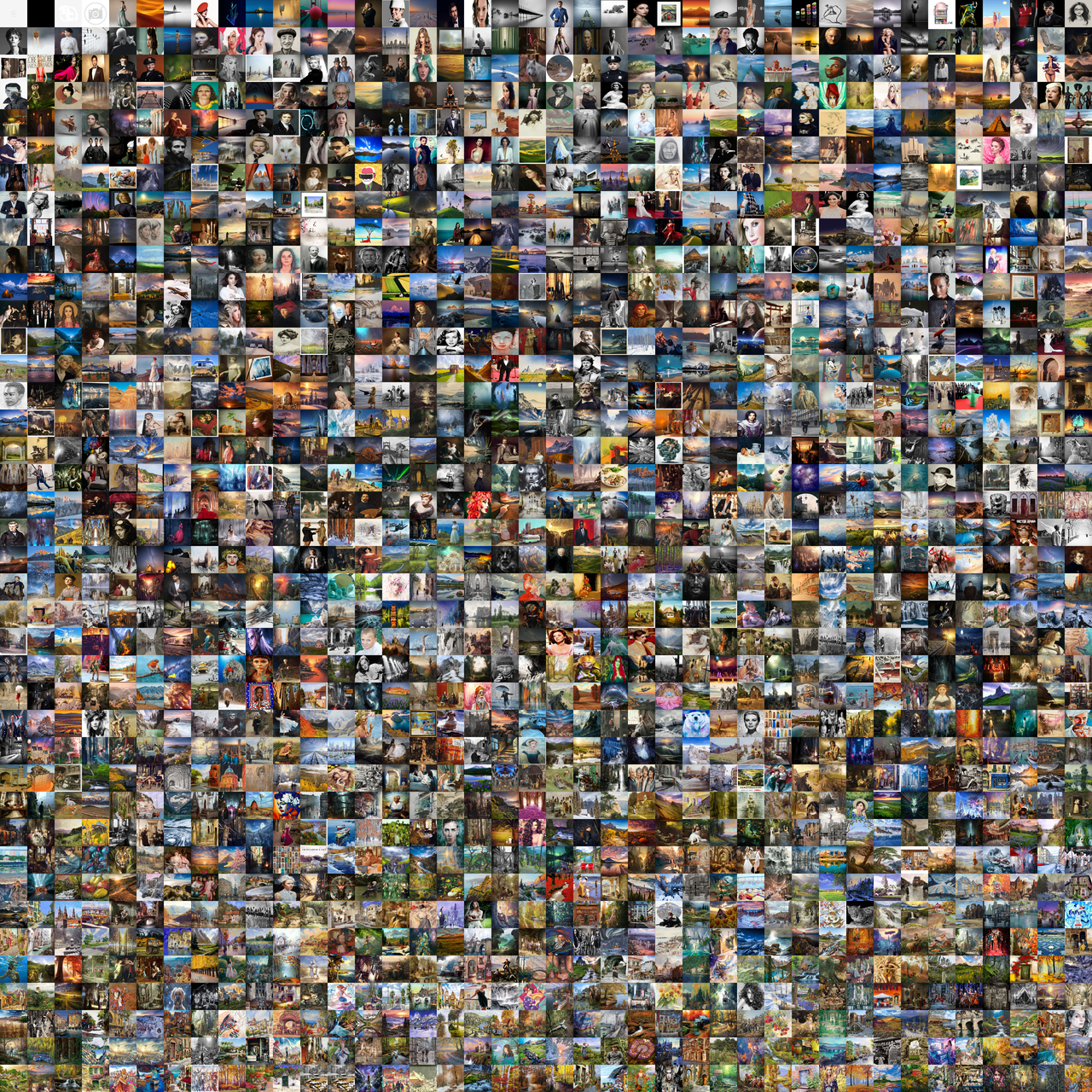}
    \caption{1600 images from LAION-Aesthetics-625K, sorted by FLIPD estimates with $t_0=0.3$.}
    \label{fig:stable_diffusion_full}
\end{figure}

\begin{figure*}[!htp]\captionsetup{labelformat=simple, labelsep=colon}
    \centering
    \begin{subfigure}{\textwidth}
        \centering
        \includegraphics[width=\textwidth]{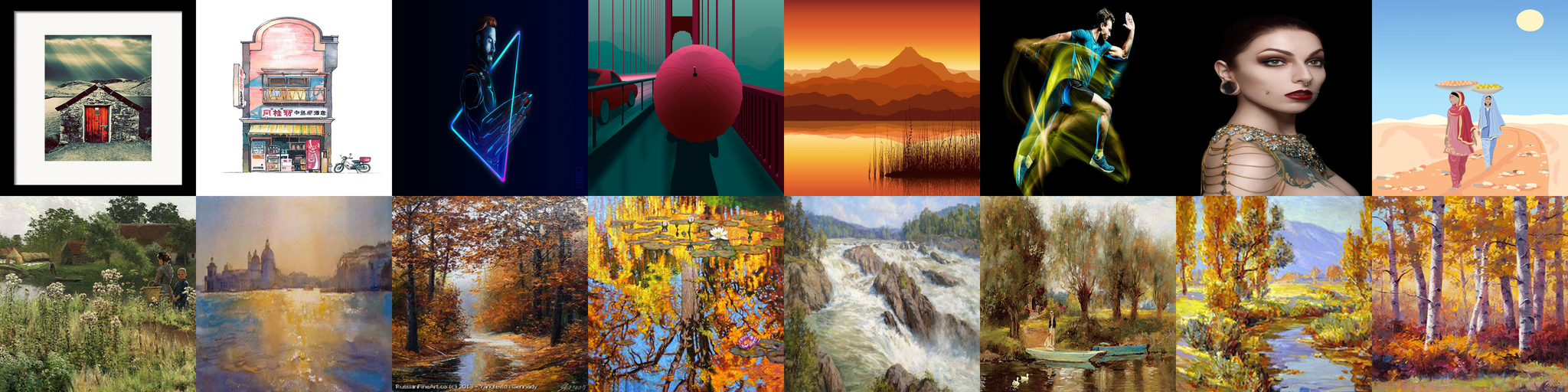}
        \subcaption{$t_0=0.01$.}
    \end{subfigure}%
    \\
    \begin{subfigure}{\textwidth}
        \centering
        \includegraphics[width=\textwidth]{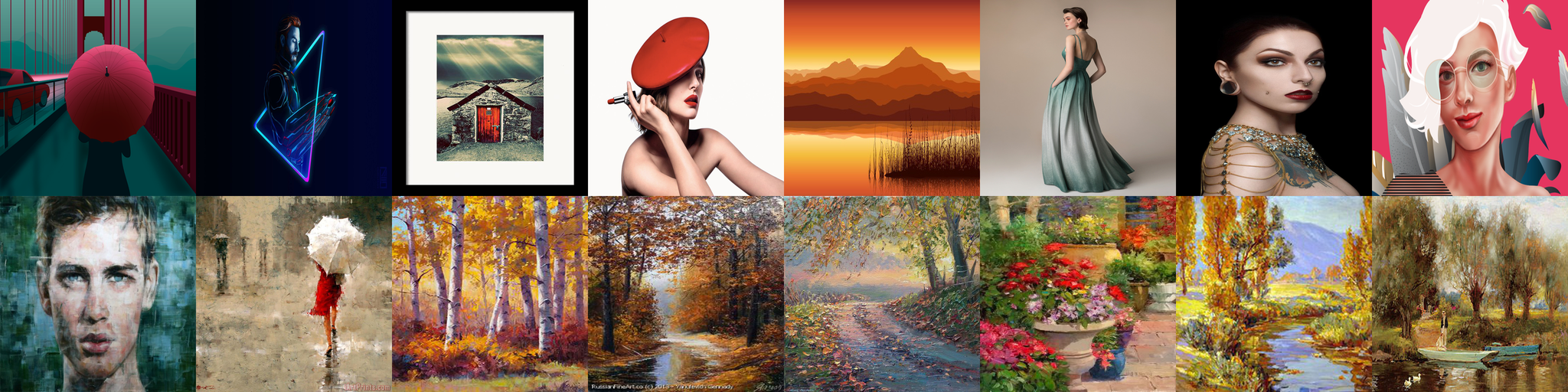}
        \subcaption{$t_0=0.1$.}
    \end{subfigure}%
    \\
    \begin{subfigure}{\textwidth}
        \centering
        \includegraphics[width=\textwidth]{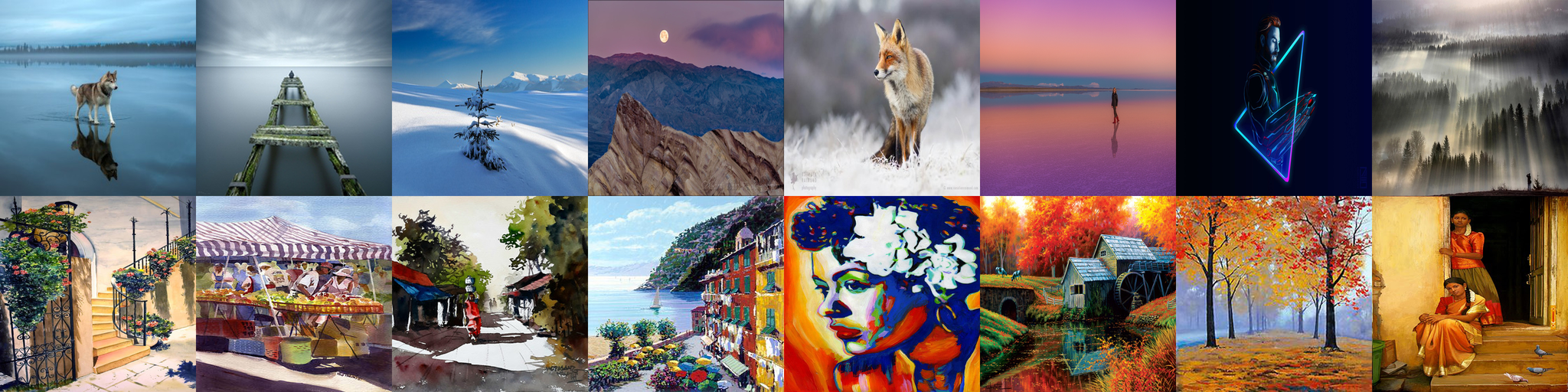}
        \subcaption{$t_0=0.8$.}
    \end{subfigure}%
    \caption{The 8 lowest- and highest-FLIPD values out of 1600 from LAION-Aesthetics-625k evaluated at different values of $t_0$. Placeholder icons or blank images have been excluded from this comparison.}
    \label{fig:stable_diffusion_previews}
\end{figure*}

\end{document}